\documentclass{article}
\usepackage{iclr2027_conference,times}

\usepackage{amsmath,amsfonts,bm}

\def\eqref#1{equation~\ref{#1}}

\def\1{\bm{1}}

\DeclareMathAlphabet{\mathsfit}{\encodingdefault}{\sfdefault}{m}{sl}
\SetMathAlphabet{\mathsfit}{bold}{\encodingdefault}{\sfdefault}{bx}{n}

\newcommand{\E}{\mathbb{E}}

\newcommand{\R}{\mathbb{R}}

\usepackage{amsthm}
\usepackage{amssymb} 
\newtheorem{theorem}{Theorem}

\theoremstyle{definition}

\theoremstyle{remark}

\usepackage{graphicx}
\usepackage{subcaption}
\usepackage{fix-cm}
\usepackage{booktabs}
\usepackage{placeins}
\usepackage{tikz}
\usetikzlibrary{arrows.meta,calc,positioning}
\usepackage{hyperref}

\ifdefined\pdfrunninglinkoff
  \makeatletter
  \let\pdfannot@link@off@@\pdfrunninglinkoff
  \let\pdfannot@link@on@@\pdfrunninglinkon
  \makeatother
\fi
\usepackage{url}

\title{Quantile Head\\for Vision-Language-Action Models}
\author{%
  \hspace*{-\tabcolsep}
  \normalfont
  \begin{tabular*}{\textwidth}[t]{@{\extracolsep{\fill}}ccc@{}}
    \textbf{Xuan Wang} & \textbf{Yinan Wu} & \textbf{Haoran Duan} \\
    Department of Automation & Department of Automation & Department of Automation \\
    Tsinghua University & Tsinghua University & Tsinghua University \\
    \texttt{xwangrs@gmail.com} & \texttt{yinanwu@ieee.org} & \texttt{haoran.duan@ieee.org} \\[16pt]
    \multicolumn{3}{c}{\textbf{Jungong Han}\textsuperscript{*}} \\
    \multicolumn{3}{c}{Department of Automation} \\
    \multicolumn{3}{c}{Tsinghua University} \\
    \multicolumn{3}{c}{\texttt{jghan@tsinghua.edu.cn}}
  \end{tabular*}%
  \hspace*{-\tabcolsep}%
}
\iclrfinalcopy 

\expandafter\def\csname arxivOverlay@two_demo_data@left\endcsname{%
\draw[white,line width=1.7pt] (0.5000000,0.1162180) -- (0.5000000,0.8451803);
\draw[black!70,dash pattern=on 2pt off 1.5pt,line width=0.65pt] (0.5000000,0.1162180) -- (0.5000000,0.8451803);
\draw[white,line width=2.3pt] (0.5000000,0.4131433) -- (0.2500606,0.4131433);
\draw[demoLeft,line width=1.15pt,-{Stealth[length=3.5pt,width=3pt]}] (0.5000000,0.4131433) -- (0.2500606,0.4131433);
\draw[white,line width=2.2pt] (0.2500606,0.4131433) circle[radius=3.1pt];
\draw[demoLeft,line width=1.2pt] (0.2500606,0.4131433) circle[radius=3.1pt];
}
\expandafter\def\csname arxivOverlay@two_demo_data@right\endcsname{%
\draw[white,line width=1.7pt] (0.5000000,0.1174990) -- (0.5000000,0.8440282);
\draw[black!70,dash pattern=on 2pt off 1.5pt,line width=0.65pt] (0.5000000,0.1174990) -- (0.5000000,0.8440282);
\draw[white,line width=2.3pt] (0.5000000,0.4108167) -- (0.7028001,0.4108167);
\draw[demoRight,line width=1.15pt,-{Stealth[length=3.5pt,width=3pt]}] (0.5000000,0.4108167) -- (0.7028001,0.4108167);
\draw[white,line width=2.2pt] (0.7028001,0.4108167) circle[radius=3.1pt];
\draw[demoRight,line width=1.2pt] (0.7028001,0.4108167) circle[radius=3.1pt];
}
\expandafter\def\csname arxivOverlay@two_demo_obstacle_data@left\endcsname{%
\draw[white,line width=1.7pt] (0.5000000,0.1701071) -- (0.5000000,0.8219161);
\draw[black!70,dash pattern=on 2pt off 1.5pt,line width=0.65pt] (0.5000000,0.1701071) -- (0.5000000,0.8219161);
\draw[white,line width=1.8pt] (0.4397288,0.4357107) -- (0.5602712,0.4357107) -- (0.5602712,0.5241085) -- (0.4397288,0.5241085) -- cycle;
\draw[red!75!black,line width=0.8pt,dash pattern=on 2pt off 1pt] (0.4397288,0.4357107) -- (0.5602712,0.4357107) -- (0.5602712,0.5241085) -- (0.4397288,0.5241085) -- cycle;
\draw[white,line width=2.3pt] (0.5000000,0.4282343) -- (0.1833531,0.4282343);
\draw[demoLeft,line width=1.15pt,-{Stealth[length=3.5pt,width=3pt]}] (0.5000000,0.4282343) -- (0.1833531,0.4282343);
\draw[white,line width=2.2pt] (0.1833531,0.4282343) circle[radius=3.1pt];
\draw[demoLeft,line width=1.2pt] (0.1833531,0.4282343) circle[radius=3.1pt];
}
\expandafter\def\csname arxivOverlay@two_demo_obstacle_data@right\endcsname{%
\draw[white,line width=1.7pt] (0.5000000,0.1704170) -- (0.5000000,0.8216136);
\draw[black!70,dash pattern=on 2pt off 1.5pt,line width=0.65pt] (0.5000000,0.1704170) -- (0.5000000,0.8216136);
\draw[white,line width=1.8pt] (0.4397288,0.4357107) -- (0.5602712,0.4357107) -- (0.5602712,0.5241085) -- (0.4397288,0.5241085) -- cycle;
\draw[red!75!black,line width=0.8pt,dash pattern=on 2pt off 1pt] (0.4397288,0.4357107) -- (0.5602712,0.4357107) -- (0.5602712,0.5241085) -- (0.4397288,0.5241085) -- cycle;
\draw[white,line width=2.3pt] (0.5000000,0.4345127) -- (0.7699618,0.4345127);
\draw[demoRight,line width=1.15pt,-{Stealth[length=3.5pt,width=3pt]}] (0.5000000,0.4345127) -- (0.7699618,0.4345127);
\draw[white,line width=2.2pt] (0.7699618,0.4345127) circle[radius=3.1pt];
\draw[demoRight,line width=1.2pt] (0.7699618,0.4345127) circle[radius=3.1pt];
}

\begin{document}
\maketitle
\lhead{}
\renewcommand{\headrulewidth}{0pt}

\begin{abstract}
\looseness=-1
Vision-Language-Action (VLA) models integrate pretrained Vision-Language
Models (VLMs) with action heads for robot control. Common action heads
have distinct limitations: point regression provides only a point estimate
of the action distribution, while standard flow-matching samplers require
costly iterative sampling. To address these limitations, we unify
regression and flow matching under a shared objective and extend it to
derive a quantile objective. This quantile objective guides the design of
our Quantile Head, which predicts a median and positive gaps to form
ordered marginal action quantiles in one forward pass. These quantiles
support multiple sampling strategies without retraining and are jointly
supervised to train the default median policy. Our local analysis of this
joint supervision shows that, with calibrated nearby quantiles, fixed
gaps, and matched correction speed, direct median updates have lower
variance than under median-only supervision. Experiments show that this
jointly supervised median policy achieves the highest average success
rates among the compared methods on LIBERO, LIBERO-Plus, LIBERO-Pro, and
two real-robot tasks, together with the shortest mean episode time among
matched LIBERO baselines; code is available at
\url{https://github.com/xwangrs/Quantile-Head-for-VLA}.
\end{abstract}
\section{Introduction}
\label{sec:introduction}
Vision--Language--Action (VLA) models offer a promising approach to robot
control by mapping visual observations and language instructions to
actions. To learn this mapping, models such as RT-2
\citep{brohan2023rt}, OpenVLA \citep{kim2024openvla}, and $\pi_{0.5}$
\citep{pmlr-v305-black25a} adapt pretrained Vision-Language Models (VLMs) using robot
demonstrations. From these demonstrations, an action head learns to
translate multimodal representations into executable controls, making its
design central to how actions are represented and predicted.

Two common action head designs, regression and flow matching, offer
complementary capabilities.
Regression heads predict actions directly in one forward pass, providing
a simple interface for deterministic control \citep{kim2024openvla}.
Their point outputs, however, do not explicitly represent conditional
action distributions for sampling. Flow-matching heads model these
distributions by learning a velocity field that transforms noise into
action samples, as in $\pi_0$ \citep{black2024pi_0}.
Their standard sampling procedures typically require iterative
integration and repeated action expert evaluations, adding inference
cost \citep{kim2025fine}.

\begin{figure}[!t]
  \centering
  \begin{minipage}{0.8\linewidth}
\begingroup%
\definecolor{ihInk}{HTML}{303748}%
\definecolor{ihMuted}{HTML}{8795A8}%
\definecolor{ihPaleBlue}{HTML}{DCEAF7}%
\definecolor{ihBlue}{HTML}{5F83AB}%
\definecolor{ihPink}{HTML}{8F3038}%
\definecolor{ihPinkMid}{HTML}{B66B72}%
\definecolor{ihPinkPale}{HTML}{F2E1E3}%
\tikzset{ih picture/.style={x=1cm,y=1cm,line cap=round,line join=round,
  every node/.style={font=\rmfamily\bfseries\fontsize{9.5}{11}\selectfont,
    text=ihInk,inner sep=1pt,align=center}},
  ih small/.style={font=\rmfamily\bfseries\fontsize{8.5}{10}\selectfont},
  ih arrow/.style={-{Latex[length=2mm,width=1.55mm]},line width=1.35pt,draw=ihMuted},
  ih axis/.style={line width=1.1pt,draw=ihMuted!75},
  ih guide/.style={line width=.8pt,draw=ihPinkMid,dash pattern=on 2pt off 2.4pt}}%
\newcommand{\ihHead}[2]{%
  \path[draw=#1,fill=#2,line width=1.6pt]
    (1.62,3.36)
    .. controls (1.37,3.36) and (1.33,3.51) .. (1.34,3.70)
    .. controls (1.34,3.92) and (1.43,4.03) .. (1.65,4.03)
    -- (3.78,4.03)
    .. controls (4.04,4.03) and (4.09,3.91) .. (4.06,3.68)
    .. controls (4.06,3.46) and (3.98,3.36) .. (3.76,3.36)
    -- cycle;
  \node[text=#1] at (2.70,3.69) {Action head};
  \draw[ih arrow,draw=#1] (2.70,4.59) -- (2.70,4.10);
  \draw[ih arrow,draw=#1] (2.70,3.25) -- (2.70,2.78);
}%
%
\captionsetup[subfigure]{font={scriptsize,bf,color=ihInk},
  justification=centering,singlelinecheck=false,skip=2pt,hypcap=true}%
\begin{subfigure}[t]{.32142857\linewidth}
  \centering
  \resizebox{\linewidth}{!}{
\begin{tikzpicture}[ih picture]
  \path[use as bounding box] (0,0) rectangle (5.40,4.70);
  \ihHead{ihBlue}{ihPaleBlue}
  \draw[ih axis] (.65,1.28) -- (4.78,1.28);
  \draw[draw=ihBlue!65,line width=1pt,dash pattern=on 2pt off 2.5pt]
    (2.70,1.28) -- (2.70,1.99);
  \fill[ihPaleBlue] (2.70,2.00) circle (.23);
  \fill[ihBlue] (2.70,2.00) circle (.11);
  \node at (2.70,.49) {One pass, one value};
\end{tikzpicture}%
}%
  \caption{Regression}\label{fig:intro-regression}
\end{subfigure}\hfill%
\begin{subfigure}[t]{.32142857\linewidth}
  \centering
  \resizebox{\linewidth}{!}{
\begin{tikzpicture}[ih picture]
  \path[use as bounding box] (0,0) rectangle (5.40,4.70);
  \ihHead{ihBlue}{ihPaleBlue}
  \draw[draw=ihBlue,line width=1.0pt,
    -{Latex[length=1.45mm,width=1.05mm]}]
    (3.94,3.23) arc[start angle=-90,end angle=90,radius=.46];
  \path[fill=ihPaleBlue,draw=ihBlue,line width=1.25pt]
    (.27,1.28)
    .. controls (.59,1.30) and (.61,2.20) .. (.93,2.20)
    .. controls (1.24,2.20) and (1.26,1.30) .. (1.59,1.28)
    -- cycle;
  \path[fill=ihPinkPale,draw=ihPinkMid,line width=1.25pt]
    (2.08,1.28)
    .. controls (2.52,1.28) and (2.43,2.20) .. (2.70,2.20)
    .. controls (2.97,2.20) and (2.90,1.28) .. (3.33,1.28)
    -- cycle;
  \path[fill=ihPinkMid!30,draw=ihPink,line width=1.3pt]
    (3.88,1.28)
    .. controls (4.38,1.28) and (4.26,2.20) .. (4.47,2.20)
    .. controls (4.69,2.20) and (4.55,1.28) .. (5.10,1.28)
    -- cycle;
  \foreach \left/\right in {.27/1.59,2.08/3.33,3.88/5.10}{
    \draw[ih axis] (\left,1.28) -- (\right,1.28);
  }
  \draw[ih arrow] (1.54,1.89) .. controls (1.73,1.98) and (1.89,1.98) .. (2.04,1.89);
  \draw[ih arrow] (3.29,1.89) .. controls (3.48,1.98) and (3.64,1.98) .. (3.81,1.89);
  \filldraw[fill=ihPink,draw=white,line width=1pt] (4.65,1.28) circle (.075);
  \node[ih small,text=ihMuted] at (.93,.93) {Noise};
  \node[ih small,text=ihPink] at (4.49,.93) {Sample};
  \node at (2.70,.49) {Iterative sampling};
\end{tikzpicture}%
}%
  \caption{Flow matching}\label{fig:intro-flow}
\end{subfigure}\hfill%
\begin{subfigure}[t]{.32142857\linewidth}
  \centering
  \captionsetup{font={scriptsize,bf,color=ihPink}}%
  \resizebox{\linewidth}{!}{
\begin{tikzpicture}[ih picture]
  \path[use as bounding box] (0,0) rectangle (5.40,4.70);
  \ihHead{ihPink}{ihPinkPale}
  \draw[ih axis] (.72,1.28) -- (4.84,1.28);
  \draw[ih axis] (.72,1.28) -- (.72,2.53);
  \path[fill=ihPinkPale!65]
    (.76,1.28) -- plot[domain=-3:3,samples=81,smooth]
    ({.76+.66*(\x+3)},{1.28+1.19/(1+exp(-1.4*\x))})
    -- (4.72,1.28) -- cycle;
  \draw[draw=ihPink,line width=1.8pt]
    plot[domain=-3:3,samples=81,smooth]
    ({.76+.66*(\x+3)},{1.28+1.19/(1+exp(-1.4*\x))});
  \foreach \prob in {.1,.5,.9}{
    \pgfmathsetmacro{\qx}{.76+.66*(ln(\prob/(1-\prob))/1.4+3)}
    \pgfmathsetmacro{\qy}{1.28+1.19*\prob}
    \draw[ih guide] (\qx,1.28) -- (\qx,\qy);
    \filldraw[fill=ihPinkMid,draw=white,line width=.65pt] (\qx,\qy) circle (.063);
    \fill[ihPinkMid] (\qx,1.28) circle (.06);
  }
  \filldraw[fill=ihPink,draw=white,line width=.65pt] (2.74,1.875) circle (.083);
  \fill[ihPink] (2.74,1.28) circle (.078);
  \node[text=ihPink] at (2.70,.49) {One pass, ordered quantiles};
\end{tikzpicture}%
}%
  \caption{Quantile head (ours)}\label{fig:intro-quantile}
\end{subfigure}%
\endgroup%
  \end{minipage}
  \caption{\textbf{Action head comparison.} Regression predicts a point,
  iterative flow matching reuses the same head at updated action states
  and integration times, and our head predicts ordered marginal quantiles
  in one pass for direct median control or optional sampling.}
  \label{fig:introduction-heads}
\end{figure}

These complementary capabilities raise a natural question:
\emph{Can we combine the single-pass prediction of regression with the
distributional modeling of flow matching?}
To this end, we can extend a single action prediction to a set of quantiles,
where the median represents the action center and lower and upper quantiles
describe its variability. Predicting these quantiles together allows our
Quantile Head (Figure~\ref{fig:introduction-heads}) to directly characterize
the conditional distribution of each action coordinate. To formalize this
intuition, we formulate a distributional learning objective and derive its
equivalent quantile regression form, leading to Theorem~\ref{thm:quantile-objective}.

Building on this quantile objective, we further analyze how auxiliary
quantiles can improve the training signal for the median.
Specifically, we study the noise in direct median updates under joint
quantile supervision, leading to Theorem~\ref{thm:quantile-generalization}.
Under the stated conditions, calibrated quantiles near the median reduce
update variance relative to median-only supervision, with gaps held fixed
and local correction speed matched. This result motivates the joint
supervision of the median and auxiliary quantiles in our Quantile Head.

We integrate Quantile Head with a pretrained VLM for action learning.
To limit the number of trainable parameters, we freeze the VLM
and introduce learnable prompts that extract task context from its fixed
features. Conditioned on this context, the quantile head jointly predicts
a median and gaps between adjacent quantiles on either side for each
action coordinate. The gaps are constrained to be positive and
cumulatively subtracted from or added to the median, yielding ordered
lower and upper quantiles. The quantiles for the entire action chunk are
generated together in a single forward pass, capturing the center and
variability of each action coordinate. To mitigate overfitting to
demonstrations, we use a masked pinball loss that randomly withholds
supervision for selected action coordinates, sharing the mask across all
quantiles of each coordinate.

\begin{samepage}
Our contributions cover theory, quantile action head design, and
experimental evaluation:
\begin{enumerate}
  \item \textbf{Theory.} Starting from a unified supervision framework
  for regression and flow matching, we derive a quantile objective for
  action distribution modeling and show that joint quantile supervision
  reduces noise in direct median updates under stated conditions.

  \item \textbf{Quantile Head.} We propose a quantile action head
  that predicts ordered quantiles in a single forward pass. A masked
  pinball loss jointly supervises these outputs to model each action
  coordinate's conditional distribution.

  \item \textbf{Experimental Evaluation.} We achieve state of the art
  performance on LIBERO and state of the art generalization on LIBERO-Plus
  and LIBERO-Pro, together with the best performance among the compared
  methods on two physical robot tasks.
\end{enumerate}
\end{samepage}
\section{Related Work}
\label{sec:related-work}

\textbf{Regression-style action heads.}\quad
Regression heads map observation-conditioned features directly to
continuous actions using losses such as $L_1$ or $L_2$. For an
unrestricted predictor, squared loss targets the conditional mean,
while absolute loss targets the conditional median
\citep{koenker1978regression}. OpenVLA-OFT combines continuous $L_1$
regression with parallel action-chunk decoding, demonstrating the
effectiveness of direct action prediction in VLA models
\citep{kim2025fine}. Such heads produce actions in a single forward
pass and provide a simple interface for control. However, a point
prediction for each action coordinate does not explicitly represent
its conditional distribution, motivating supervision beyond a single
central estimate.

\textbf{Flow-matching-style action heads.}\quad
Diffusion and flow-matching heads transform noise into samples from
conditional action distributions \citep{zhang2026cofactvla}. Diffusion Policy learns action
sequences through denoising \citep{chi2025diffusion}, while
$\pi_0$ \citep{black2024pi_0} and $\pi_{0.5}$
\citep{pmlr-v305-black25a} use flow-matching velocity supervision to
connect noise and continuous action chunks. Their standard samplers use
successive denoising or numerical integration steps, requiring repeated
model evaluations. Recent flow-based VLAs also support one-step generation.
SnapFlow uses progressive self-distillation of a pretrained flow policy,
adding model evaluations to construct training targets
\citep{luan2026snapflow}. \emph{Let It Be Simple} \citep{chen2026simple}
uses high-noise training to enable one-step action generation without
distillation. Its LIBERO-Long ablation shows that the success-rate advantage
over a uniformly trained ten-step baseline diminishes as the action
horizon increases.

\textbf{Quantile-based learning.}\quad
\citet{richter2019learning} learn continuous-action policies through
advantage-weighted quantile regression.
Learning across quantile levels can improve estimation of a target
quantile under suitable conditions \citep{narayan2024expected}.
In probabilistic forecasting, OrderFusion constructs ordered outputs
from a median anchor and nonnegative gaps \citep{yu2026orderfusion}.
Within VLA systems, ReconVLA uses conformally calibrated action-error
quantiles to rank candidates from a pretrained policy
\citep{chen2026reconvla}. We jointly supervise ordered action quantiles
for VLA control, with median decoding by default and optional quantile
sampling.

\textbf{Novelty.}\quad
1. Quantile Head extends point regression to jointly supervised
conditional action quantiles, combining single-pass prediction with an
explicit representation of action marginals. It predicts action values
directly rather than learning a flow-matching velocity field, and the
same outputs support median control and optional sampling.
2. Our focus is how auxiliary action quantiles contribute to learning
the median used for VLA control. A local analysis identifies conditions
under which this joint supervision reduces noise in direct median
updates, while matched ablations demonstrate improved closed-loop
performance even when inference uses only the median.
\section{Method}
\label{sec:method}

\begin{figure}[!t]
  \centering
\includegraphics[width=0.8\linewidth]{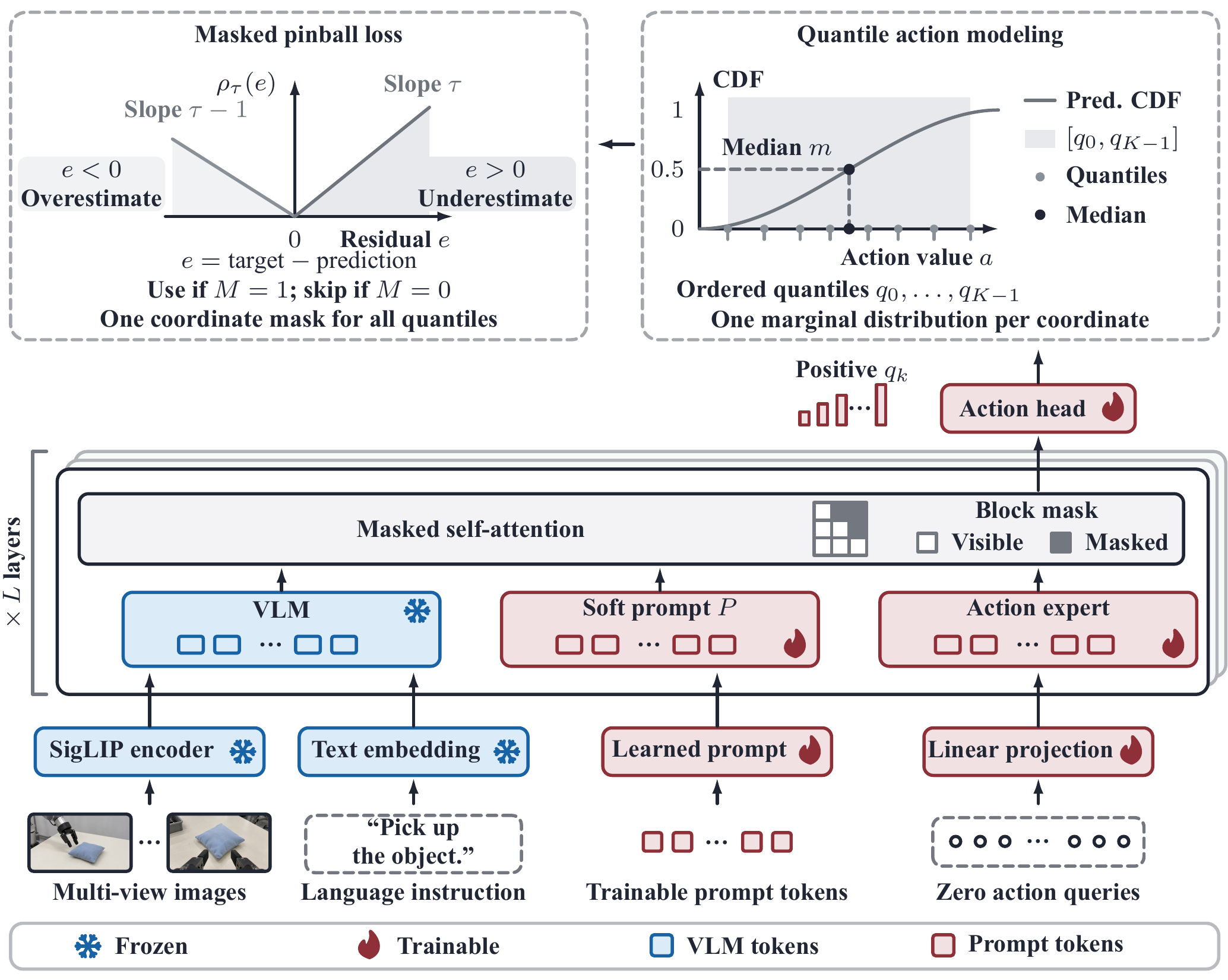}\space\space
  \caption{\textbf{Architecture of the Quantile Head.}
  Our model combines a frozen VLM, learnable prompts, and a Quantile Head. The VLM encodes images and
  instructions. Learnable prompts precede action tokens; masked
  self-attention lets prompts read VLM features and action tokens read
  both VLM and prompt features. The action head maps the resulting
  action features to a median and positive gaps, then accumulates these
  gaps around the median to form ordered marginal quantiles. We supervise these
  quantiles with masked pinball loss. Inference uses the median by default
  or samples from the predicted marginal quantiles.}
  \label{fig:pi05-model}
\end{figure}

\subsection{Theoretical Analysis}
\label{sec:theoretical-analysis}

\textbf{A unified objective for action heads.}\quad
Let $X$ denote visual observations, language instructions, and the robot's
proprioceptive state, and let $A\in\mathbb{R}^{H\times D}$ denote a
demonstrated action chunk with $H$ future steps and $D$ action dimensions.
For each coordinate $Y=A_{h,d}$, $L_2$ regression predicts $Y$ from $X$.
Conditional flow matching (CFM) predicts a velocity from $X$, an
interpolated action, and time, then integrates the velocity field to
generate actions. We unify their squared losses on action and velocity
targets through a reference path. Draw $B\sim\nu(\cdot\mid X)$ with $B\perp Y\mid X$ and an
independent time $\alpha\sim\pi$ on $[0,1]$, and set
$A_\alpha=(1-\alpha)B+\alpha Y$. The shared objective is
\begin{equation}
\mathcal L_{\nu,\pi}(\theta)
=\E\left|f_\theta(X,A_\alpha,\alpha)-(Y-B)\right|^2.
\label{eq:quantile-mother-objective}
\end{equation}
Setting $B=\alpha=0$ recovers squared action regression; a Gaussian
reference and sampled times give CFM velocity supervision
\citep{lipman2023flow}. Appendix~\ref{sec:proof-quantile-objective} derives both cases.

\textbf{Deriving quantile supervision.}\quad
Under \eqref{eq:quantile-mother-objective}, unrestricted regression learns
the mean action, whereas CFM learns a mean velocity field whose integration
generates actions. Both squared losses have output gradients that grow
with residuals (Appendix~\ref{sec:proof-quantile-objective}). To learn action
distributions directly, we extend squared supervision to the threshold
labels $Z_z=\mathbf1\{Y\le z\}$. Their conditional means
$F_X(z)=\Pr(Y\le z\mid X)$, over all $z\in\R$, determine the action
distribution. The following result connects predicting these probabilities
to quantile supervision.

\begin{samepage}
\begin{theorem}[proof in Appendix~\ref{sec:proof-quantile-objective}]
\label{thm:quantile-objective}
Let $q_\theta$ be measurable and nondecreasing in its quantile level,
and let $G_\theta(\cdot\mid X)$ be the conditional cumulative distribution
function (CDF) of $q_\theta(X,U)$,
where $U\sim\operatorname{Unif}(0,1)$ is independent of $(X,Y)$.
If both the true action $Y$
and the predicted action $q_\theta(X,U)$ have finite first absolute
moments, then
\begin{equation}
\underbrace{\E_{X,Y}\int_{\R}
 \bigl(G_\theta(z\mid X)-\mathbf1\{Y\le z\}\bigr)^2\,\mathrm dz}
 _{\mathcal L_{\mathrm{CDF}}(\theta)}
=2\E_{X,Y}\int_0^1
 \rho_\tau\bigl(Y-q_\theta(X,\tau)\bigr)\,\mathrm d\tau.
\label{eq:quantile-integrated-objective}
\end{equation}
Here $\rho_\tau$ is the pinball loss. If the true conditional distribution
is representable, the population-optimal quantile
function equals the true conditional quantile function almost everywhere.
\end{theorem}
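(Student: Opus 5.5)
The plan is to prove the identity pointwise in $(X,Y)$ for fixed $X$ and observed $Y=y$, then take expectations. This is the classical CRPS identity: the integrated squared CDF error equals twice the integrated pinball loss.

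\textbf{Step 1: conditional expectation over an independent copy.} Fix $X$ and write $G=G_\theta(\cdot\mid X)$. Since $U$ is independent of $(X,Y)$, $G(z)=\Pr(Q\le z)$ with $Q=q_\theta(X,U)$. Let $Q'$ be an independent copy of $Q$. Expanding the square and using Fubini (justified by the finite first moments), I expect
\begin{equation*}
\int_{\R}\bigl(G(z)-\mathbf 1\{y\le z\}\bigr)^2\,\mathrm dz=\E|Q-y|-\tfrac12\E|Q-Q'|.
\end{equation*}
Two standard facts give this. First, $\int(\mathbf 1\{a\le z\}-\mathbf 1\{y\le z\})^2\,\mathrm dz=|a-y|$. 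Second, $G^2(z)=\Pr(\max(Q,Q')\le z)$. Combining them yields the expression above. Integrability of each piece follows because $G(z)-\mathbf 1\{y\le z\}$ is dominated by tail probabilities of $Q$, which are integrable when $\E|Q|<\infty$.

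\textbf{Step 2: the pinball side.} I would use the quantile representation $Q=q_\theta(X,U)$ with $q_\theta$ nondecreasing in $\tau$. Writing $\rho_\tau(u)=u(\tau-\mathbf 1\{u<0\})$, I compute $\int_0^1\rho_\tau(y-q(\tau))\,\mathrm d\tau$. The term $\int_0^1\tau(y-q(\tau))\,\mathrm d\tau$ combines with the indicator term to give
\begin{equation*}
\int_0^1 |y-q(\tau)|\,\tfrac12\,\mathrm d\tau+\int_0^1 (\tau-\tfrac12)(y-q(\tau))\,\mathrm d\tau,
\end{equation*}
since $\rho_\tau(u)=\tfrac12|u|+(\tau-\tfrac12)u$. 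The first term equals $\tfrac12\E|Q-y|$. The second term equals $-\int_0^1(\tau-\tfrac12)q(\tau)\,\mathrm d\tau$, because $\int_0^1(\tau-\tfrac12)\,\mathrm d\tau=0$.

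\textbf{Step 3: the Gini identity, the main obstacle.} It remains to show
\begin{equation*}
\E|Q-Q'|=4\int_0^1(\tau-\tfrac12)\,q(\tau)\,\mathrm d\tau.
\end{equation*}
This holds for a monotone $q$ via $\E|Q-Q'|=2\int_0^1\!\int_0^1 (q(s)-q(t))\mathbf 1\{s>t\}\,\mathrm ds\,\mathrm dt$. Integrating out the inner variable gives $2\int_0^1 q(s)(2s-1)\,\mathrm ds$. Monotonicity is exactly what lets $|q(s)-q(t)|$ be written as the signed difference on $\{s>t\}$. Ties are handled because $q(s)-q(t)=0$ there. With Steps 1 and 2 this gives $\int(G-\mathbf 1)^2=2\int_0^1\rho_\tau$. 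Taking $\E_{X,Y}$, which is justified by Tonelli after the finite-moment bounds, proves the displayed identity.

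\textbf{Step 4: optimality.} Condition on $X$. For each $\tau$, $\E[\rho_\tau(Y-c)\mid X]$ is minimized exactly at the $\tau$-quantiles of $F_X$. The true conditional quantile function $F_X^{-1}$ is nondecreasing, so by representability it is an admissible choice. It therefore minimizes the integrand for every $\tau$, and hence minimizes the objective. Conversely, equality of the objective with the minimum forces the identity $\E\int(G-F_X)^2\,\mathrm dz=0$. This follows from the bias–variance decomposition $\E[(G-\mathbf 1\{Y\le z\})^2\mid X]=(G-F_X)^2+F_X(1-F_X)$. Hence $G=F_X$ almost surely. Equal CDFs imply equal left-continuous quantile functions, so $q_\theta(X,\cdot)=F_X^{-1}$ for a.e.\ $\tau$. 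The only care needed is that the monotone $q$ agrees with its left-continuous version off a countable set.
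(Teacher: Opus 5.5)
Your proposal is correct and follows essentially the same route as the paper: the energy form $S(G;y)=\E|Q-y|-\tfrac12\E|Q-Q'|$ from the threshold-distance identity, the monotone mean-difference formula $\tfrac12\E|Q-Q'|=\int_0^1(2\tau-1)q(\tau)\,\mathrm d\tau$, the split $\rho_\tau(u)=\tfrac12|u|+(\tau-\tfrac12)u$, and uniqueness via the decomposition $(G-F_X)^2+F_X(1-F_X)$. The differences are cosmetic: you additionally check optimality of $F_X^{-1}$ by pointwise pinball minimization, and you spell out the left-continuous-version step that the paper leaves implicit.
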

\end{samepage}

Theorem~\ref{thm:quantile-objective} shows that integrated pinball loss is
equivalent to CDF supervision up to a factor of two. Conditional action
distributions can therefore be learned through quantile supervision,
without constructing threshold labels. Our head jointly supervises a
finite set of quantiles to represent each action coordinate's conditional
distribution.

\textbf{Local noise reduction in median updates.}\quad
Theorem~\ref{thm:quantile-objective} provides a distributional basis for
quantile supervision. Since our default policy executes the median, we
further examine whether auxiliary quantile supervision can reduce noise
in direct median updates.

\begin{samepage}
\begin{theorem}[Noise reduction in median updates]
\label{thm:quantile-generalization}
Under the assumptions in Appendix~\ref{sec:proof-center-generalization},
consider calibrated quantiles at distinct levels symmetric about and
sufficiently close to $1/2$. With fixed gaps and matched local mean
correction rates, the updates at the true median satisfy
\begin{equation}
\operatorname{Var}(\Delta m_Q)<\operatorname{Var}(\Delta m_M).
\label{eq:center-update-noise}
\end{equation}
Here $\Delta m_Q$ and $\Delta m_M$ are the joint-quantile and median-only
random center updates, respectively.
\end{theorem}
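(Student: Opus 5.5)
We make the setting concrete in the way the appendix assumptions presumably do. Fix $X$ and one action coordinate $Y$ with conditional density $p$, continuous and positive near the true median $m^\star$. The head outputs a center $m$ and fixed gaps, so the quantile at level $\tau_k$ is $q_k=m+g_k$. Take levels $\tau_{\pm j}=1/2\pm\delta_j$ with $0<\delta_j$ small and distinct. The gaps are fixed and calibrated: when $m=m^\star$, $q_{\pm j}$ equal the true conditional quantiles, so $F(m^\star+g_{\pm j})=\tau_{\pm j}$.

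A single stochastic (sub)gradient step on the pinball loss $\rho_\tau(Y-q)$ with respect to $m$ uses $\partial_q\rho_\tau(Y-q)=\mathbf1\{Y\le q\}-\tau$. For the median-only update we therefore write
\[
\Delta m_M=-\eta_M\bigl(\mathbf1\{Y\le m\}-\tfrac12\bigr).
\]
For the joint update, since the gaps are frozen, every quantile loss pushes the center, and
\[
\Delta m_Q=-\eta_Q\sum_k\bigl(\mathbf1\{Y\le m+g_k\}-\tau_k\bigr).
\]
Here the sum includes the median term together with the $2J$ symmetric auxiliary levels, so $K=2J+1$.

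\emph{Matching the correction rates.} Differentiate the mean updates in $m$ at $m^\star$. This gives $\partial_m\E\Delta m_M=-\eta_M p(m^\star)$ and $\partial_m\E\Delta m_Q=-\eta_Q\sum_k p(q_k^\star)$. Matched local correction means $\eta_Q\sum_k p(q_k^\star)=\eta_M p(m^\star)$. Calibration also gives $\E\Delta m_Q=\E\Delta m_M=0$ at $m^\star$, so both updates are centered there and only their variances remain to compare.

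\emph{Variance computation.} At $m^\star$ the median-only indicator is Bernoulli$(1/2)$, so $\operatorname{Var}(\Delta m_M)=\eta_M^2/4$. For the joint update, write $S=\sum_k \mathbf1\{Y\le q_k^\star\}$. The events are nested, so $\operatorname{Cov}(\mathbf1\{Y\le q_i\},\mathbf1\{Y\le q_j\})=\min(\tau_i,\tau_j)-\tau_i\tau_j$. Hence
\[
\operatorname{Var}(\Delta m_Q)=\eta_Q^2\sum_{i,j}\bigl(\min(\tau_i,\tau_j)-\tau_i\tau_j\bigr).
\]
Substituting $\eta_Q=\eta_M p(m^\star)/\sum_k p(q_k^\star)$, the claim becomes
\[
\frac{4\sum_{i,j}(\min(\tau_i,\tau_j)-\tau_i\tau_j)}{\bigl(\sum_k p(q^\star_k)/p(m^\star)\bigr)^2}<1.
\]

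\emph{Main obstacle: the near-$1/2$ limit.} As all $\delta_j\to0$, continuity of $p$ gives $\sum_k p(q_k^\star)/p(m^\star)\to K$. The numerator also tends to a limit: every pairwise term tends to $1/4$, so $4\sum_{i,j}(\cdot)\to K^2$. The ratio therefore tends to $1$, and the strict inequality has to come from first-order terms.

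Write $\min(\tau_i,\tau_j)-\tau_i\tau_j=1/4-(\tau_i-\tfrac12)(\tau_j-\tfrac12)-\tfrac12|\tau_i-\tau_j|$. Symmetry kills the cross term, because $\sum_i(\tau_i-\tfrac12)=0$. So
\[
4\sum_{i,j}(\cdot)=K^2-2\sum_{i,j}|\tau_i-\tau_j|,
\]
a strictly negative first-order correction of size $O(\bar\delta)$ because the levels are distinct.

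For the denominator, $p(q_k^\star)=p(m^\star)+O(|q_k^\star-m^\star|)$ with $q_{\pm j}^\star-m^\star\approx\pm\delta_j/p(m^\star)$. I expect the assumptions to include differentiability of $p$ at $m^\star$. Then symmetry makes the first-order terms cancel, leaving $\sum_k p(q_k^\star)/p(m^\star)=K+O(\bar\delta^2)$. The numerator's $O(\bar\delta)$ deficit then dominates, and the ratio is $1-c\,\bar\delta+O(\bar\delta^2)$ with $c>0$. This gives strict inequality for $\delta_j$ sufficiently small.

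This expansion, and in particular keeping uniform control of the error terms, is the delicate step. If only continuity of $p$ is assumed, I would instead need a Lipschitz bound on $p$ of order $o(\bar\delta)$ asymmetry, and I would check which assumption the appendix actually provides.
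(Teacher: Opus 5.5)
Your proposal is correct and follows the paper's own argument. It uses the same nested-indicator covariance and the same exact variance identity (your $K^2-2\sum_{i,j}|\tau_i-\tau_j|$, the paper's $B_Q=\tfrac14-C_t\varepsilon$, exact because symmetry kills the cross term), the same matching by slopes $\tfrac1K\sum_k f(q_k^*)$ versus $f_0$ (equivalent to your $\eta_Q$), and the same cancellation of first-order slope terms by symmetry of the levels. On your hedge: the paper assumes $f$ positive and $C^2$ near $m^*$, which gives the $O(\varepsilon^2)$ slope correction. Mere differentiability of $p$ at $m^*$ would in general give only $o(\bar\delta)$, not the $O(\bar\delta^2)$ you wrote, though that still suffices; continuity alone genuinely fails, since a Laplace density with $K=3$ gives $\operatorname{Var}(\Delta m_Q)/\operatorname{Var}(\Delta m_M)=1+\tfrac89\varepsilon+O(\varepsilon^2)>1$.
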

\end{samepage}

Theorem~\ref{thm:quantile-generalization} shows that auxiliary quantiles
can reduce direct median-update variance at matched local correction
rates under the stated conditions. This motivates jointly supervising
the median and auxiliary quantiles in our Quantile Head. We evaluate the
resulting policy empirically with jointly learned gaps and a wider
quantile grid.

\subsection{Quantile Head}
\label{sec:action-features}
\textbf{Architecture overview.}\quad
As shown in Figure~\ref{fig:pi05-model}, our design combines prompt adaptation
of a frozen VLM with an ordered quantile head. We freeze the VLM to preserve
its pretrained representations and limit the parameters learned from
action demonstrations. We insert learnable prompts that read the frozen
VLM features and provide task-specific context to the action expert.
The expert produces action-token features for the quantile head, which
constructs ordered predictions from a median and positive
gaps. Joint pinball supervision follows the distributional interpretation
in Theorem~\ref{thm:quantile-objective}. Theorem~\ref{thm:quantile-generalization}
isolates a local center-update mechanism under calibrated, fixed gaps.
Positive gaps enforce noncrossing marginal outputs for quantile decoding;
the effect of joint supervision on median control is assessed in the
matched experiments.

\raggedbottom
\begin{figure}[!t]
  \centering
  \includegraphics[width=\linewidth]{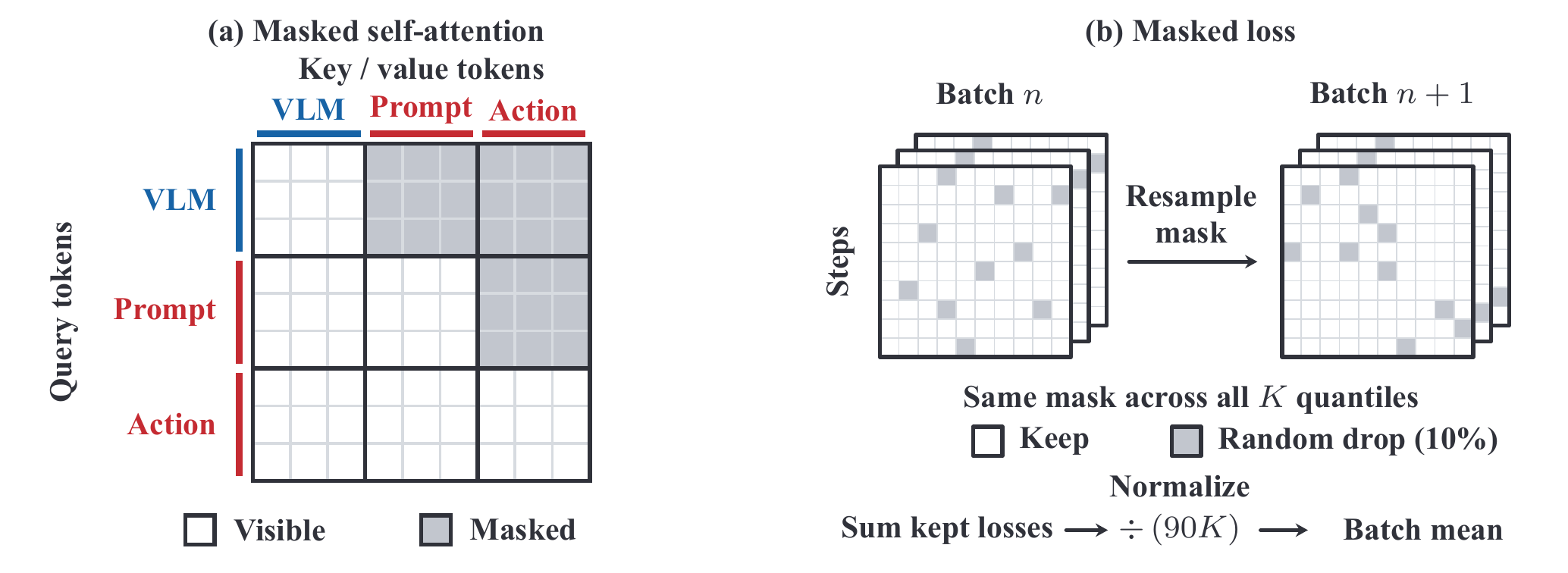}
  \caption{\textbf{Attention and loss masks (schematic).}
  (a) Block self-attention controls information flow among VLM, prompt,
  and action tokens.
  (b) Coordinate masks are independently resampled for each example in
  every training batch and shared across all quantiles.}
  \label{fig:model-details}
\end{figure}

\textbf{Learnable prompt insertion.}\quad
We prepend $N_p$ learnable prompt tokens
$P\in\mathbb{R}^{N_p\times d_e}$ to the action-token sequence at the input
of the action expert, where $d_e$ is the expert's hidden width.
Prompt embeddings are shared across observations, while their hidden
states evolve with action tokens through the expert layers.
At each layer, the block attention mask in
Figure~\ref{fig:model-details}(a) allows prompt queries to attend to VLM
and prompt tokens, while action queries attend to VLM, prompt, and action
tokens. VLM queries attend only to VLM tokens, and attention is
bidirectional within each group. The resulting prompt features provide
observation-conditioned context for action prediction. We use $N_p=16$
and jointly train $P$, the action expert, and output heads with the
VLM frozen.

\textbf{Ordered quantile head.}\quad
We feed the final $H$ action-token features from the action expert into
two linear projections shared across future steps. For each action
coordinate, one projection predicts a median $m$, and the other predicts
raw gaps $r_i^-,r_i^+$ on its two sides. Let $K=2c+1$ and index the
quantiles from $0$ to $K-1$, suppressing the step and coordinate indices.
We map the raw gaps to positive values and accumulate them around $m$:
\begingroup
\predisplaypenalty=0
\begin{equation}
  g_i^\pm = \frac{\delta_0}{\log 2}\,
    \operatorname{softplus}(r_i^\pm),\qquad q_c=m,
  \label{eq:quantile-gaps}
\end{equation}
\begin{equation}
  q_{c-j} = m-\sum_{i=1}^{j}g_i^-,\qquad
  q_{c+j}=m+\sum_{i=1}^{j}g_i^+,\quad j=1,\ldots,c.
  \label{eq:ordered-quantiles}
\end{equation}
\endgroup
\flushbottom
Positive gaps prevent quantile crossing; separately learned left and
right gaps allow asymmetric spacing. We use $K=21$ levels
$\tau_k=0.025+0.95k/20$, with $\tau_c=0.5$.
The scale $\delta_0=0.02$ sets the gap at zero raw output; the gaps can
shrink or expand during training. All $K$ predictions receive pinball
supervision with a shared label mask for each action coordinate.
For median decoding, we collect $q_c=m$ across steps and coordinates
and unnormalize the action chunk.

\textbf{Action expert inputs.}\quad
The observation prefix combines SigLIP image embeddings with text embeddings
of the task and discretized robot state. The state is normalized before
being discretized into 256 bins and included in the text prompt.
The action tokens are obtained by linearly projecting $H$ all-zero action
queries. Both training and
inference fix the time condition to zero; its embedding and MLP still
condition the expert's adaptive normalization.

\subsection{Masked pinball loss}
\label{sec:label-mask}
We apply pinball supervision from Theorem~\ref{thm:quantile-objective}
to each action coordinate's $K$ ordered quantiles. For sample $b$, let $a_{bhd}$ be the normalized
action target at future step $h$ and dimension $d$, and let $q_{bhdk}$
be its predicted quantile at level $\tau_k$. With residual
$e_{bhdk}=a_{bhd}-q_{bhdk}$, the pinball loss is
\begin{equation}
  \rho_\tau(e)=\max\{\tau e,(\tau-1)e\}.
  \label{eq:pinball-loss}
\end{equation}
Underprediction is penalized with weight $\tau$, and overprediction
with weight $1-\tau$.

\begin{table}[!t]
  \centering
  \caption{Success rates (\%) on LIBERO and LIBERO-Pro.
  Best in \textbf{bold}; second-best \underline{underlined}.
  Superscripts indicate result sources:
  \textsuperscript{a}\citet{kim2025fine};
  \textsuperscript{b}\citet{wu2026continuous}.}
  \label{tab:libero-pro-combined}
  \vspace{4pt}
  \captionsetup[subtable]{font=small,labelfont=bf,textfont=bf,
    justification=raggedright,singlelinecheck=false,skip=4pt}
  \scalebox{0.8}{%
    \begin{minipage}{1.25\linewidth}
      \centering
      \begin{subtable}[t]{0.485\linewidth}
        \caption{LIBERO}
        \label{tab:libero-comparison}
        \centering
        \small
        \renewcommand{\arraystretch}{1.08}
        \setlength{\tabcolsep}{0.8pt}
  \begin{tabular*}{\linewidth}{@{\extracolsep{\fill}}lccccc@{}}
    \toprule
    Method & Spatial & Object & Goal & Long & Avg. \\
    \midrule
    Diffusion Policy\textsuperscript{a} {\fontsize{8.5}{10}\selectfont\citep{chi2025diffusion}} & 78.3 & 92.5 & 68.3 & 50.5 & 72.4 \\
    OpenVLA\textsuperscript{a} {\fontsize{8.5}{10}\selectfont\citep{kim2024openvla}} & 84.7 & 88.4 & 79.2 & 53.7 & 76.5 \\
    $\pi_0$-FAST\textsuperscript{a} {\fontsize{8.5}{10}\selectfont\citep{pertsch2025fast}} & 96.4 & 96.8 & 88.6 & 60.2 & 85.5 \\
    $\pi_0$\textsuperscript{a} {\fontsize{8.5}{10}\selectfont\citep{black2024pi_0}} & 96.8 & 98.8 & 95.8 & 85.2 & 94.2 \\
    OpenVLA-OFT {\fontsize{8.5}{10}\selectfont\citep{kim2025fine}} & 97.6 & 98.4 & 97.9 & 94.5 & 97.1 \\
    $\pi_{0.5}$ {\fontsize{8.5}{10}\selectfont\citep{pmlr-v305-black25a}} & 98.8 & 98.2 & 98.0 & 92.4 & 96.85 \\
    VLANeXt {\fontsize{8.5}{10}\selectfont\citep{wu2026vlanext}} & \underline{99.0} & 99.2 & 96.6 & 94.8 & 97.4 \\
    SimVLA {\fontsize{8.5}{10}\selectfont\citep{luo2026simvla}} & \textbf{99.6} & \underline{99.8} & \underline{98.6} & 96.4 & 98.6 \\
    WLA-0 {\fontsize{8.5}{10}\selectfont\citep{yang2026world}} & \underline{99.0} & \textbf{100.0} & 97.8 & 97.6 & 98.6 \\
    InternVLA-A1.5 {\fontsize{8.5}{10}\selectfont\citep{ma2026internvla}} & 98.6 & \underline{99.8} & \underline{98.6} & \underline{98.4} & \underline{98.9} \\
    \midrule
    Quantile Head & \underline{99.0} & \textbf{100.0} & \textbf{99.6} & \textbf{98.6} & \textbf{99.3} \\
    \bottomrule
  \end{tabular*}
      \end{subtable}%
      \hfill
      \begin{subtable}[t]{0.485\linewidth}
        \caption{LIBERO-Pro}
        \label{tab:libero-pro-comparison}
        \centering
        \small
        \renewcommand{\arraystretch}{1.08}
        \setlength{\tabcolsep}{0.8pt}
  \begin{tabular*}{\linewidth}{@{\extracolsep{\fill}}lccccc@{}}
    \toprule
    Method & Object & Pos. & Sem. & Task & Avg. \\
    \midrule
    OpenVLA {\fontsize{8.5}{10}\selectfont\citep{kim2024openvla}} & \underline{93.0} & 0.0 & 97.3 & 0.0 & 47.6 \\
    $\pi_0$ {\fontsize{8.5}{10}\selectfont\citep{black2024pi_0}} & 90.5 & 0.0 & 90.5 & 0.0 & 45.3 \\
    $\pi_{0.5}$ {\fontsize{8.5}{10}\selectfont\citep{pmlr-v305-black25a}} & \textbf{96.0} & \underline{20.8} & 95.8 & 0.8 & \underline{53.3} \\
    MolmoAct {\fontsize{8.5}{10}\selectfont\citep{lee2025molmoact}} & 76.0 & 1.5 & 85.8 & 1.5 & 41.2 \\
    X-VLA {\fontsize{8.5}{10}\selectfont\citep{zheng2026x}} & 79.0 & 0.8 & 96.8 & 6.8 & 45.8 \\
    OpenVLA-OFT\textsuperscript{b} {\fontsize{8.5}{10}\selectfont\citep{kim2025fine}} & 27.6 & 5.8 & 59.0 & 0.6 & 23.3 \\
    X-VLA (CR eval.)\textsuperscript{b} {\fontsize{8.5}{10}\selectfont\citep{zheng2026x}} & 78.4 & 1.6 & 82.9 & 16.4 & 44.8 \\
    VLA-Adapter\textsuperscript{b} {\fontsize{8.5}{10}\selectfont\citep{wang2026vla}} & 73.8 & 0.0 & 90.8 & \underline{19.8} & 46.1 \\
    SimVLA {\fontsize{8.5}{10}\selectfont\citep{luo2026simvla}} & 81.5 & 8.3 & \textbf{98.8} & 6.0 & 48.6 \\
    SUREFlow {\fontsize{8.5}{10}\selectfont\citep{islam2026sureflow}} & 68.5 & 0.0 & 87.8 & 0.0 & 39.1 \\
    \midrule
    \textbf{Quantile Head} & 85.8 & \textbf{31.3} & \underline{97.8} & \textbf{26.0} & \textbf{60.2} \\
    \bottomrule
  \end{tabular*}
      \end{subtable}
    \end{minipage}%
  }
\end{table}

During training, we randomly withhold supervision for a subset of valid
action coordinates. Let $V_{bhd}\in\{0,1\}$ indicate a non-padded coordinate
and $N_b=\sum_{h,d}V_{bhd}$. For each sample, we uniformly select
$\min(\lceil rN_b\rceil,\max(N_b-1,0))$ valid coordinates without
replacement and exclude their labels from the loss, using $r=0.1$.
The mask is redrawn on each training forward pass and retains at least
one label whenever $N_b>0$. Define $M_{bhd}=1$ for valid, retained
coordinates and $0$ otherwise. As illustrated in
Figure~\ref{fig:model-details}(b), each $M_{bhd}$ is shared across all
$K$ quantiles, so the entire set of losses for a coordinate is retained
or dropped together. This mask acts on the supervision terms.

We average over the retained coordinates and all $K$ quantile levels
within each sample, then over the $B$ samples in the batch:
\begin{equation}
  \mathcal L=\frac{1}{B}\sum_{b=1}^{B}
    \frac{\sum_{h,d,k}M_{bhd}\,
      \rho_{\tau_k}(e_{bhdk})}
      {K\max\!\left(1,\sum_{h,d}M_{bhd}\right)}.
  \label{eq:masked-pinball}
\end{equation}
Action dimensions introduced for model padding are removed before loss
computation. Evaluation losses use all valid coordinates, with random
dropping disabled and padded action steps excluded.

\section{Experiments}
\label{sec:experiments}

\subsection{Experimental Setting}
\label{sec:experimental-setup}

\textbf{Benchmarks and Tasks.}\quad
We evaluate our method on three simulation benchmarks and two real-robot tasks.
The simulation benchmarks are LIBERO \citep{liu2023libero},
LIBERO-Plus \citep{fei2026libero}, and LIBERO-Pro
\citep{zhou2025libero}.
We use four LIBERO suites: Spatial, Object, Goal, and Long (LIBERO-10),
with ten tasks each.
LIBERO-Plus perturbs these 40 tasks to create 10,030 instances, covering
camera viewpoints, robot initial states, language instructions,
lighting, backgrounds, sensor noise, and object layouts.
LIBERO-Pro also evaluates robustness to changes in objects, positions,
instructions, task goals, and environments.
We further evaluate real-world performance on two manipulation tasks,
with results in Table~\ref{tab:real-robot-results} and the platform and tasks
described in Appendix~\ref{sec:appendix-real-robot}.
Evaluation protocols and implementation details are provided in
Appendix~\ref{sec:appendix-simulation-results}.
Baselines cited from other work retain their reported protocols.

\subsection{Task Performance with Default Median Decoding}
\label{sec:benchmark-comparison}

\textbf{Overall Performance.}\quad
With median decoding, Quantile Head achieves 99.3\% average success across
the four LIBERO suites, 0.4 percentage points above
the strongest compared baseline (Table~\ref{tab:libero-pro-combined}).
On LIBERO-Plus, it reaches 87.1\% zero-shot and 89.1\% fine-tuned success,
exceeding the respective strongest compared baselines by 1.4 and 5.0 percentage points
(Table~\ref{tab:libero-plus-comparison}).
On LIBERO-Pro, it achieves 60.2\% without further training,
exceeding $\pi_{0.5}$ at 53.3\% by 6.9 percentage points.
These mean success rates support the generalization of the full median
policy across the evaluated benchmarks and perturbations.
Matched head ablations examine the benefit of joint quantile supervision
in Section~\ref{sec:training-ablations}.

\textbf{Real-robot Performance.}\quad
With median decoding, Quantile Head achieves 75.0\% average success
across the two real-robot tasks, compared with 58.5\% for $\pi_{0.5}$,
a 16.5 percentage point gain
(Table~\ref{tab:real-robot-results}).
The $\pi_{0.5}$ baseline is fully fine-tuned; both methods use the same
demonstration data, optimization settings, and evaluation conditions
(Appendix~\ref{sec:appendix-real-robot}).
Success increases from 31\% to 52\% for apple placement
(+21 percentage points) and from 86\% to 98\% for cuboid removal
(+12 percentage points). These results show higher success on both
evaluated tasks using deterministic median decoding, although apple
placement remains challenging at 52\% success.

\begin{table}[!t]
  \centering
  \caption{Success rates (\%) on LIBERO-Plus.
  Within each setting, best in \textbf{bold}; second-best \underline{underlined}.
  Superscript sources:
  \textsuperscript{a}\citet{fei2026libero};
  \textsuperscript{b}\citet{zhong2026acot};
  \textsuperscript{d}\citet{shi2026memoryvla++}.
  Unmarked baselines use their original papers.}
  \label{tab:libero-plus-comparison}
  \vspace{4pt}
  \scalebox{0.75}{%
    \begin{minipage}{\linewidth}
      \centering
  \footnotesize
  \renewcommand{\arraystretch}{1.08}
  \setlength{\tabcolsep}{3pt}
  \newlength{\liberoplusmethodwidth}
  \settowidth{\liberoplusmethodwidth}{GE-Act~2.0 \citep{liu2026ge}}
  \begin{tabular*}{\linewidth}{@{\extracolsep{\fill}}lcccccccc@{}}
    \multicolumn{9}{@{}l}{\textbf{(a) Zero-shot transfer}} \\
    \toprule
    \makebox[\liberoplusmethodwidth][l]{Method} & Camera & Robot & Lang. & Light & Bkg. & Noise & Layout & Total \\
    \midrule
    OpenVLA\textsuperscript{a} \citep{kim2024openvla} & 0.8 & 3.5 & 23.0 & 8.1 & 34.8 & 15.2 & 28.5 & 15.6 \\
    $\pi_0$-FAST\textsuperscript{a} \citep{pertsch2025fast} & 65.1 & 21.6 & 61.0 & 73.2 & 73.2 & 74.4 & 68.8 & 61.6 \\
    $\pi_0$\textsuperscript{a} \citep{black2024pi_0} & 13.8 & 6.0 & 58.8 & 85.0 & 81.4 & 79.0 & 68.9 & 53.6 \\
    OpenVLA-OFT\textsuperscript{a} \citep{kim2025fine} & 56.4 & 31.9 & 79.5 & 88.7 & 93.3 & 75.8 & 74.2 & 69.6 \\
    $\pi_{0.5}$\textsuperscript{b} \citep{pmlr-v305-black25a} & 75.8 & 79.4 & 83.3 & 95.5 & \underline{95.0} & 89.6 & 87.0 & \underline{85.7} \\
    VLANeXt \citep{wu2026vlanext} & \underline{90.4} & 65.7 & 81.8 & 95.9 & 82.5 & 94.1 & 80.8 & 83.9 \\
    GAM \citep{han2026geometric} & 83.1 & 70.0 & \underline{84.8} & \textbf{97.2} & 94.3 & \underline{95.3} & 79.1 & 85.5 \\
    GE-Act~2.0 \citep{liu2026ge} & \textbf{94.1} & 50.7 & 81.4 & 94.0 & 60.5 & \textbf{95.5} & 83.1 & 80.4 \\
    ACoT-VLA\textsuperscript{b} \citep{zhong2026acot} & 68.9 & \underline{80.3} & 84.1 & 95.6 & 93.1 & 81.5 & \textbf{88.3} & 83.6 \\
    \midrule
    Quantile Head & 71.6 & \textbf{80.7} & \textbf{87.6} & \underline{96.1} & \textbf{97.2} & 94.8 & \underline{87.8} & \textbf{87.1} \\
    \bottomrule
  \end{tabular*}
  \vspace{7pt}
  \begin{tabular*}{\linewidth}{@{\extracolsep{\fill}}lcccccccc@{}}
    \multicolumn{9}{@{}l}{\textbf{(b) Supervised fine-tuning}} \\
    \toprule
    \makebox[\liberoplusmethodwidth][l]{Method} & Camera & Robot & Lang. & Light & Bkg. & Noise & Layout & Total \\
    \midrule
    $\pi_0$\textsuperscript{b} \citep{black2024pi_0} & 79.6 & 21.1 & 72.5 & 84.7 & 86.2 & 68.3 & 69.4 & 67.4 \\
    $\pi_{0.5}$\textsuperscript{b} \citep{pmlr-v305-black25a} & 70.3 & 41.7 & 81.1 & \underline{97.3} & 94.6 & 71.8 & \underline{84.9} & 75.7 \\
    OpenVLA-OFT+ \citep{fei2026libero} & \underline{92.8} & 30.3 & \underline{85.8} & 94.9 & 93.9 & 89.3 & 77.6 & 79.6 \\
    MemoryVLA\textsuperscript{d} \citep{shi2026memoryvla} & 91.4 & 48.6 & 79.4 & 95.2 & 95.3 & 94.0 & 75.7 & 81.9 \\
    MemoryVLA++ \citep{shi2026memoryvla++} & \textbf{96.8} & 49.7 & 71.0 & 96.6 & \underline{97.0} & \textbf{96.0} & 78.6 & 82.7 \\
    ACoT-VLA\textsuperscript{b} \citep{zhong2026acot} & 91.2 & \underline{62.5} & 80.3 & 95.1 & 91.5 & 88.3 & \underline{84.9} & \underline{84.1} \\
    \midrule
    Quantile Head (SFT) & 88.6 & \textbf{77.0} & \textbf{86.2} & \textbf{97.4} & \textbf{97.3} & \underline{94.1} & \textbf{87.3} & \textbf{89.1} \\
    \bottomrule
  \end{tabular*}
    \end{minipage}%
  }
\end{table}

\begin{table}[!t]
  \centering
  \caption{Real-robot success rates (\%).
  The $\pi_{0.5}$ baseline is fully fine-tuned.
  $\Delta$ gives the gain over $\pi_{0.5}$ in percentage points.}
  \label{tab:real-robot-results}
  \vspace{4pt}
  \scalebox{0.75}{%
    \begin{minipage}{\linewidth}
      \centering
      \small
      \renewcommand{\arraystretch}{1.08}
      \setlength{\tabcolsep}{4pt}
      \begin{tabular*}{\linewidth}{@{\extracolsep{\fill}}p{0.54\linewidth}ccc@{}}
        \toprule
        Real-robot Task & $\pi_{0.5}$ & Quantile Head & $\Delta$ \\
        \midrule
        Place apple on yellow plate & 31 & 52 & +21 \\
        Remove cuboid from blue plate & 86 & 98 & +12 \\
        \midrule
        Average & 58.5 & 75.0 & +16.5 \\
        \bottomrule
      \end{tabular*}
    \end{minipage}%
  }
\end{table}

\subsection{Ablation Studies}
\label{sec:ablation-analysis}
\label{sec:training-ablations}

\textbf{Head and Component Ablations.}\quad
Under matched training and evaluation conditions, median-decoded Quantile
Head achieves 99.3\% average success across the four LIBERO suites,
compared with 96.0\%, 97.3\%, and 96.3\% for $L_2$, $L_1$, and sampled
Flow Matching, respectively
(Table~\ref{tab:libero-component-ablation}(a)).
Mean episode time, including failures, decreases from 8.13\,s for $L_1$
to 5.88\,s, a 27.7\% reduction.

In Table~\ref{tab:libero-component-ablation}(b), Median-Only reaches
97.5\% mean success, compared with 99.3\% for the full head.
Detached Median Anchor reaches
98.7\%: noncentral losses still update shared features and gaps, but their
direct gradient to the median projection is blocked. These mean rates are
consistent with potential benefits from both shared supervision and the
direct gradient path; task success does not measure error to the true
conditional median. Symmetric Quantiles reaches 97.9\%, below the full
head with independently learned left and right offsets.
Noise Query and VLM Query reach 93.8\% and 92.8\%, respectively, showing
that all-zero action queries outperform both alternatives.

In (c), joint quantile supervision yields higher mean evaluation success
across all four prompt and label-mask configurations: 83.5\% to 91.8\%,
95.5\% to 98.8\%, 86.0\% to 97.0\%, and 97.5\% to 99.3\%.
These matched comparisons suggest potential generalization benefits
from joint quantile supervision for the median policy, even when
inference uses only the median without quantile sampling.

\begin{table}[!t]
  \centering
  \caption{Ablations on the four LIBERO suites: success rates (\%) and episode times (s).
  Episode times in (a) average over all evaluation episodes.
  See Appendix~\ref{sec:appendix-simulation-results} for ablation settings.}
  \label{tab:libero-component-ablation}
  \vspace{4pt}
  \scalebox{0.8}{%
    \begin{minipage}{\linewidth}
      \centering
      \small
      \renewcommand{\arraystretch}{1.08}
      \setlength{\tabcolsep}{3pt}
      \begin{tabular*}{\linewidth}{@{\extracolsep{\fill}}ccc|@{\extracolsep{0pt}}ccccc@{}}
        \toprule
        \multicolumn{3}{@{}l|}{Setting} & \makebox[0.08\linewidth][c]{Spatial$\uparrow$} & \makebox[0.08\linewidth][c]{Object$\uparrow$} & \makebox[0.08\linewidth][c]{Goal$\uparrow$} & \makebox[0.08\linewidth][c]{Long$\uparrow$} & \makebox[0.08\linewidth][c]{Avg.$\uparrow$} \\
        \midrule
        \multicolumn{2}{@{}l}{\textbf{(a) Supervised Fine-Tuning}} & {\footnotesize Avg. Episode Time (s)$\downarrow$} & & & & & \\
        \multicolumn{2}{@{}l}{$L_2$} & 8.73 & \makebox[0.08\linewidth][c]{97.0} & \makebox[0.08\linewidth][c]{98.0} & \makebox[0.08\linewidth][c]{96.0} & \makebox[0.08\linewidth][c]{93.0} & \makebox[0.08\linewidth][c]{96.0} \\
        \multicolumn{2}{@{}l}{$L_1$} & 8.13 & \makebox[0.08\linewidth][c]{99.0} & \makebox[0.08\linewidth][c]{99.0} & \makebox[0.08\linewidth][c]{96.0} & \makebox[0.08\linewidth][c]{95.0} & \makebox[0.08\linewidth][c]{97.3} \\
        \multicolumn{2}{@{}l}{Flow Matching} & 12.57 & \makebox[0.08\linewidth][c]{97.0} & \makebox[0.08\linewidth][c]{98.0} & \makebox[0.08\linewidth][c]{96.0} & \makebox[0.08\linewidth][c]{94.0} & \makebox[0.08\linewidth][c]{96.3} \\
        \multicolumn{2}{@{}l}{Quantile Head} & 5.88 & \makebox[0.08\linewidth][c]{99.0} & \makebox[0.08\linewidth][c]{100.0} & \makebox[0.08\linewidth][c]{99.6} & \makebox[0.08\linewidth][c]{98.6} & \makebox[0.08\linewidth][c]{99.3} \\
        \midrule
        \multicolumn{8}{@{}l}{\textbf{(b) Prompt + Label Mask}} \\
        \multicolumn{3}{@{}l|}{Quantile Head} & \makebox[0.08\linewidth][c]{99.0} & \makebox[0.08\linewidth][c]{100.0} & \makebox[0.08\linewidth][c]{99.6} & \makebox[0.08\linewidth][c]{98.6} & \makebox[0.08\linewidth][c]{99.3} \\
        \multicolumn{3}{@{}l|}{Median-Only} & \makebox[0.08\linewidth][c]{98.0} & \makebox[0.08\linewidth][c]{99.0} & \makebox[0.08\linewidth][c]{97.0} & \makebox[0.08\linewidth][c]{96.0} & \makebox[0.08\linewidth][c]{97.5} \\
        \multicolumn{3}{@{}l|}{Detached Median Anchor} & \makebox[0.08\linewidth][c]{99.0} & \makebox[0.08\linewidth][c]{99.0} & \makebox[0.08\linewidth][c]{99.0} & \makebox[0.08\linewidth][c]{97.8} & \makebox[0.08\linewidth][c]{98.7} \\
        \multicolumn{3}{@{}l|}{Symmetric Quantiles} & \makebox[0.08\linewidth][c]{99.0} & \makebox[0.08\linewidth][c]{98.0} & \makebox[0.08\linewidth][c]{98.0} & \makebox[0.08\linewidth][c]{96.6} & \makebox[0.08\linewidth][c]{97.9} \\
        \multicolumn{3}{@{}l|}{Noise Query} & \makebox[0.08\linewidth][c]{94.0} & \makebox[0.08\linewidth][c]{95.0} & \makebox[0.08\linewidth][c]{93.0} & \makebox[0.08\linewidth][c]{93.0} & \makebox[0.08\linewidth][c]{93.8} \\
        \multicolumn{3}{@{}l|}{VLM Query} & \makebox[0.08\linewidth][c]{94.0} & \makebox[0.08\linewidth][c]{96.0} & \makebox[0.08\linewidth][c]{91.0} & \makebox[0.08\linewidth][c]{90.0} & \makebox[0.08\linewidth][c]{92.8} \\
        \midrule
        \multicolumn{8}{@{}l}{\textbf{(c) Component Ablations (Frozen VLM)}} \\
        Prompts & Pinball Loss & Label Mask & & & & & \\
        \midrule
         &  &  & \makebox[0.08\linewidth][c]{83.0} & \makebox[0.08\linewidth][c]{86.0} & \makebox[0.08\linewidth][c]{80.0} & \makebox[0.08\linewidth][c]{85.0} & \makebox[0.08\linewidth][c]{83.5} \\
        $\checkmark$ &  &  & \makebox[0.08\linewidth][c]{97.0} & \makebox[0.08\linewidth][c]{97.0} & \makebox[0.08\linewidth][c]{95.0} & \makebox[0.08\linewidth][c]{93.0} & \makebox[0.08\linewidth][c]{95.5} \\
         & $\checkmark$ &  & \makebox[0.08\linewidth][c]{90.0} & \makebox[0.08\linewidth][c]{94.0} & \makebox[0.08\linewidth][c]{94.0} & \makebox[0.08\linewidth][c]{89.0} & \makebox[0.08\linewidth][c]{91.8} \\
         &  & $\checkmark$ & \makebox[0.08\linewidth][c]{85.0} & \makebox[0.08\linewidth][c]{88.0} & \makebox[0.08\linewidth][c]{85.0} & \makebox[0.08\linewidth][c]{86.0} & \makebox[0.08\linewidth][c]{86.0} \\
        $\checkmark$ &  & $\checkmark$ & \makebox[0.08\linewidth][c]{98.0} & \makebox[0.08\linewidth][c]{99.0} & \makebox[0.08\linewidth][c]{97.0} & \makebox[0.08\linewidth][c]{96.0} & \makebox[0.08\linewidth][c]{97.5} \\
        $\checkmark$ & $\checkmark$ &  & \makebox[0.08\linewidth][c]{99.0} & \makebox[0.08\linewidth][c]{100.0} & \makebox[0.08\linewidth][c]{98.0} & \makebox[0.08\linewidth][c]{98.0} & \makebox[0.08\linewidth][c]{98.8} \\
         & $\checkmark$ & $\checkmark$ & \makebox[0.08\linewidth][c]{97.0} & \makebox[0.08\linewidth][c]{99.0} & \makebox[0.08\linewidth][c]{99.0} & \makebox[0.08\linewidth][c]{93.0} & \makebox[0.08\linewidth][c]{97.0} \\
        $\checkmark$ & $\checkmark$ & $\checkmark$ & \makebox[0.08\linewidth][c]{99.0} & \makebox[0.08\linewidth][c]{100.0} & \makebox[0.08\linewidth][c]{99.6} & \makebox[0.08\linewidth][c]{98.6} & \makebox[0.08\linewidth][c]{99.3} \\
        \bottomrule
      \end{tabular*}
    \end{minipage}%
  }
\end{table}

\subsection{Exploring Alternative Sampling Strategies}
\label{sec:sampling-ablations}
\label{sec:action-analysis}
\label{sec:multimodal-action-decoding}

\textbf{Sampling Analysis.}\quad
In the LIBERO-Long sampling evaluation, with cross-chunk correlation
$\rho=0.7$, narrowing the sampling window
from $[0.1,0.9]$ to $[0.4,0.6]$ increases LIBERO-Long success from
93.0\% to 99.6\%, exceeding median decoding (98.6\%) by 1.0 percentage
point (Figure~\ref{fig:inference-sampling-ablation}).
These results illustrate flexibility at inference
and motivate future work on selecting sampling strategies. Median decoding
requires no sampling choices and remains our default.
Narrow windows keep sampled quantile levels near 0.5, and temporal
correlation couples these levels across chunks; median decoding directly
uses the learned center.

\begin{figure}[!t]
  \centering
  \scalebox{0.8}{%
    \begin{minipage}{\linewidth}
      \centering
\begingroup
\captionsetup[subfigure]{font=footnotesize,labelfont=bf,textfont=bf,
  justification=centering,singlelinecheck=false,skip=3pt}
{\footnotesize
  \mbox{\textcolor[HTML]{8172B2}{\rule{9pt}{5pt}}\,$[0.1,0.9]$}\hspace{12pt}%
  \mbox{\textcolor[HTML]{4C8AB4}{\rule{9pt}{5pt}}\,$[0.2,0.8]$}\hspace{12pt}%
  \mbox{\textcolor[HTML]{D89952}{\rule{9pt}{5pt}}\,$[0.3,0.7]$}\hspace{12pt}%
  \mbox{\textcolor[HTML]{479D85}{\rule{9pt}{5pt}}\,$[0.4,0.6]$}\par}
\vspace{3pt}
\subfloat[Action decoding.\label{fig:sampling-action-decoding}]{%
  \includegraphics[width=.415\linewidth]{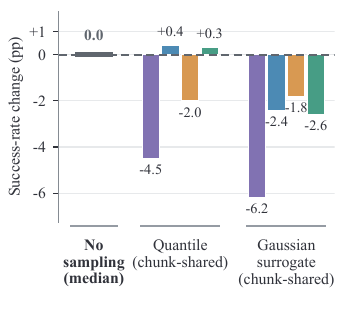}%
}\hfill
\subfloat[Temporal sampling structure.\label{fig:sampling-temporal-structure}]{%
  \includegraphics[width=.56\linewidth]{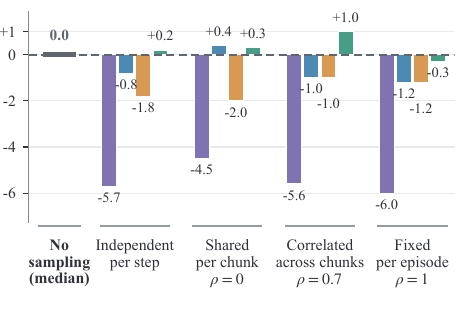}%
}
\endgroup
    \end{minipage}%
  }
  \caption{\textbf{Inference-time sampling on LIBERO-Long.}
  Colors denote quantile windows; bars show success-rate changes from
  median decoding (98.6\%; dashed lines) in percentage points.
  Quantile (a) and Shared per chunk (b) use the same setting ($\rho=0$).
  See Appendix~\ref{sec:appendix-simulation-results}.}
  \label{fig:inference-sampling-ablation}
  \vspace{-8pt}
\end{figure}

\textbf{Action Distribution Analysis.}\quad
Figure~\ref{fig:quantile-mean-range} shows predicted marginal ranges in one
LIBERO-Long rollout: narrower at selected transport moments and wider
during grasping and placing.
This pattern suggests a possible role for local action adjustments around
object contact, but does not establish distributional accuracy.
Per-dimension ranges appear in Appendix
Figure~\ref{fig:quantile-ranges-all-dimensions}.

\begin{figure}[!t]
  \centering
  \includegraphics[width=0.8\linewidth,trim=0 8 0 12,clip]{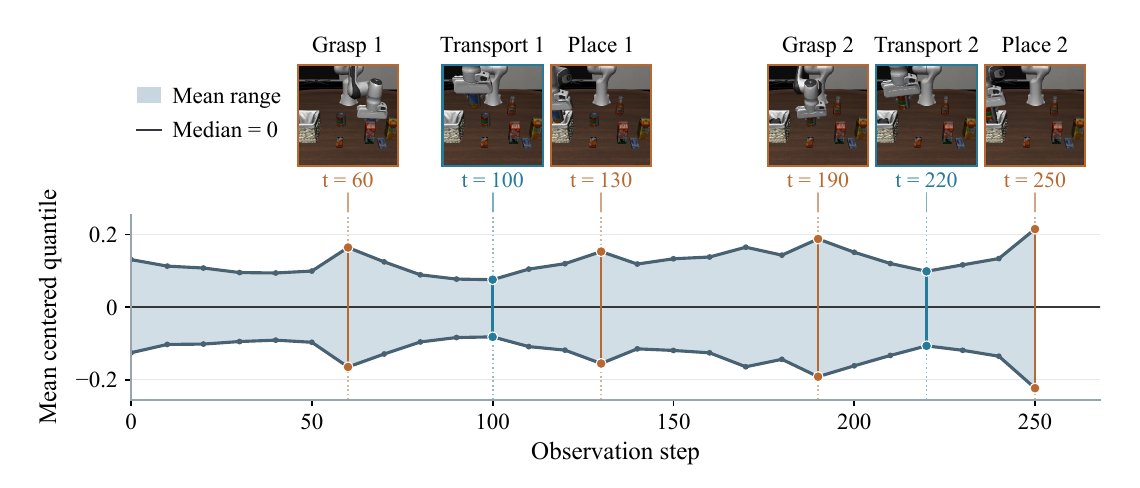}
  \caption{\textbf{Mean predicted action spread during manipulation.}
  Median-centered $Q_0$--$Q_1$ ranges averaged over six normalized action
  dimensions (\mbox{$h=0$}); endpoints use linear tail extrapolation.}
  \label{fig:quantile-mean-range}
\end{figure}

\textbf{Sampling in Two Controlled Scenes.}\quad
We train each head on two demonstrations per scene, with identical initial
observations and instructions but opposite detours
(Figure~\ref{fig:four-head-dx-distribution}). Without an obstacle,
median-decoded Quantile Head and both regression heads ($L_1$, $L_2$)
reach 100\% success; uniform and density-weighted quantile sampling
reach 92\% and 95\%. With an obstacle, these sampling rules reach 6\% and
13\%, while median decoding, both regression heads, and Flow Matching
reach 0\%. Sampling benefits are thus task-dependent, and obstacle-scene
success remains low. Density weighting requires no retraining.

\begin{figure}[!t]
  \centering
\begingroup
\definecolor{mainOutcomeSuccess}{HTML}{13846B}
\definecolor{mainOutcomeFailure}{HTML}{D94443}
\definecolor{mainDemoLeft}{HTML}{0072B2}
\definecolor{mainDemoRight}{HTML}{D55E00}
\colorlet{demoLeft}{mainDemoLeft}
\colorlet{demoRight}{mainDemoRight}
\scalebox{0.8}{%
\begin{minipage}{\linewidth}
\captionsetup[subfigure]{font=scriptsize,justification=raggedright,
  singlelinecheck=false,skip=1pt}
\centering
{\small\bfseries Training demonstrations (top views)\par}
\par\vspace{1pt}
  \begin{subfigure}[t]{0.235\linewidth}
    \centering
    \resizebox{0.8\linewidth}{!}{%
    \begin{tikzpicture}
      \node[anchor=south west,inner sep=0] (view) at (0,0)
        {\includegraphics[width=\linewidth]{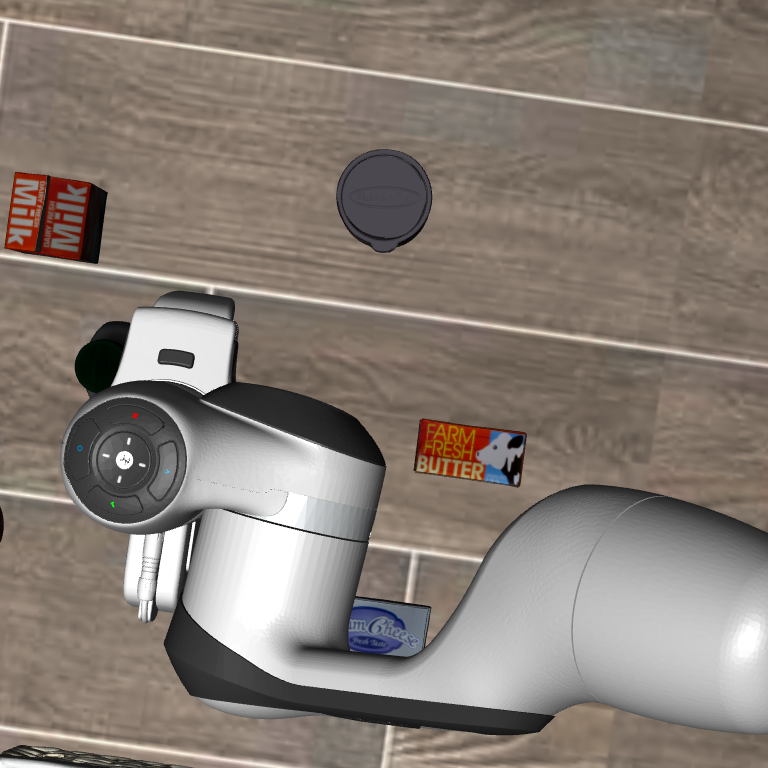}};
      \begin{scope}[x={(view.south east)},y={(view.north west)}]
        \clip (0,0) rectangle (1,1);
        \csname arxivOverlay@two_demo_data@left\endcsname
      \end{scope}
    \end{tikzpicture}%
    }
    \caption{No obstacle: \textcolor{mainDemoLeft}{left detour}.}
  \end{subfigure}%
\hfill
  \begin{subfigure}[t]{0.235\linewidth}
    \centering
    \resizebox{0.8\linewidth}{!}{%
    \begin{tikzpicture}
      \node[anchor=south west,inner sep=0] (view) at (0,0)
        {\includegraphics[width=\linewidth]{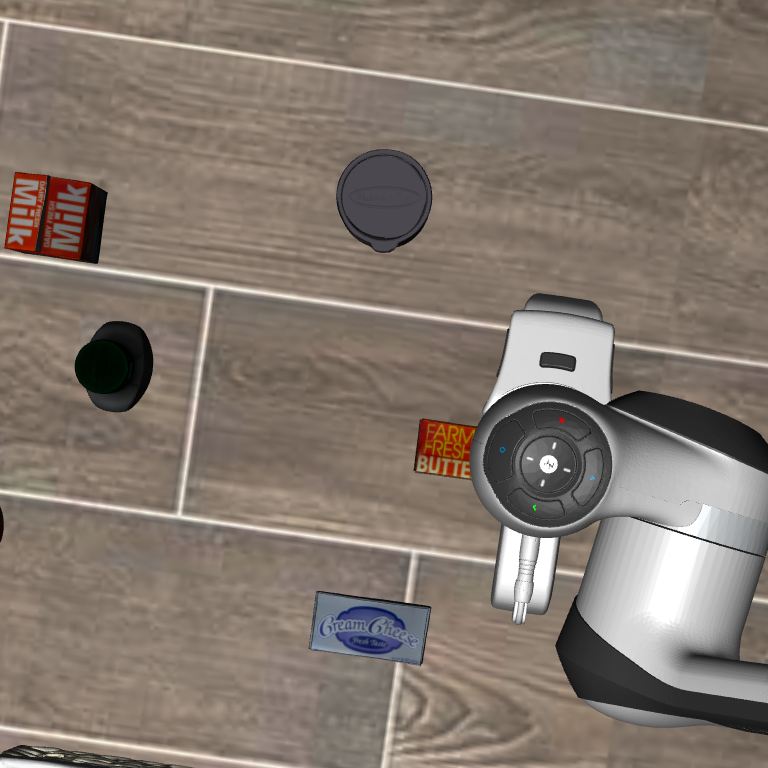}};
      \begin{scope}[x={(view.south east)},y={(view.north west)}]
        \clip (0,0) rectangle (1,1);
        \csname arxivOverlay@two_demo_data@right\endcsname
      \end{scope}
    \end{tikzpicture}%
    }
    \caption{No obstacle: \textcolor{mainDemoRight}{right detour}.}
  \end{subfigure}%
\hfill
  \begin{subfigure}[t]{0.235\linewidth}
    \centering
    \resizebox{0.8\linewidth}{!}{%
    \begin{tikzpicture}
      \node[anchor=south west,inner sep=0] (view) at (0,0)
        {\includegraphics[width=\linewidth]{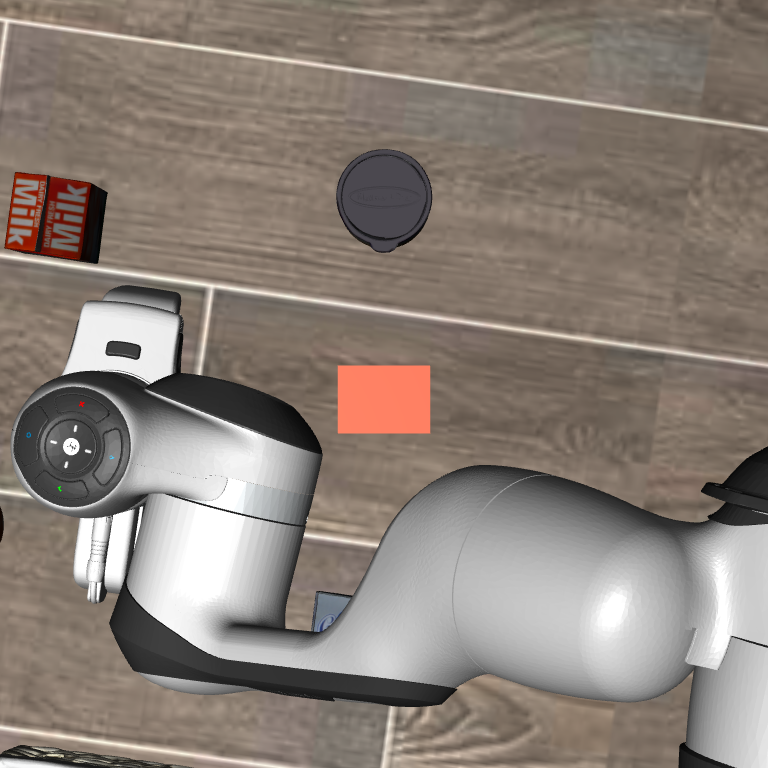}};
      \begin{scope}[x={(view.south east)},y={(view.north west)}]
        \clip (0,0) rectangle (1,1);
        \csname arxivOverlay@two_demo_obstacle_data@left\endcsname
      \end{scope}
    \end{tikzpicture}%
    }
    \caption{Central obstacle: \textcolor{mainDemoLeft}{left detour}.}
  \end{subfigure}%
\hfill
  \begin{subfigure}[t]{0.235\linewidth}
    \centering
    \resizebox{0.8\linewidth}{!}{%
    \begin{tikzpicture}
      \node[anchor=south west,inner sep=0] (view) at (0,0)
        {\includegraphics[width=\linewidth]{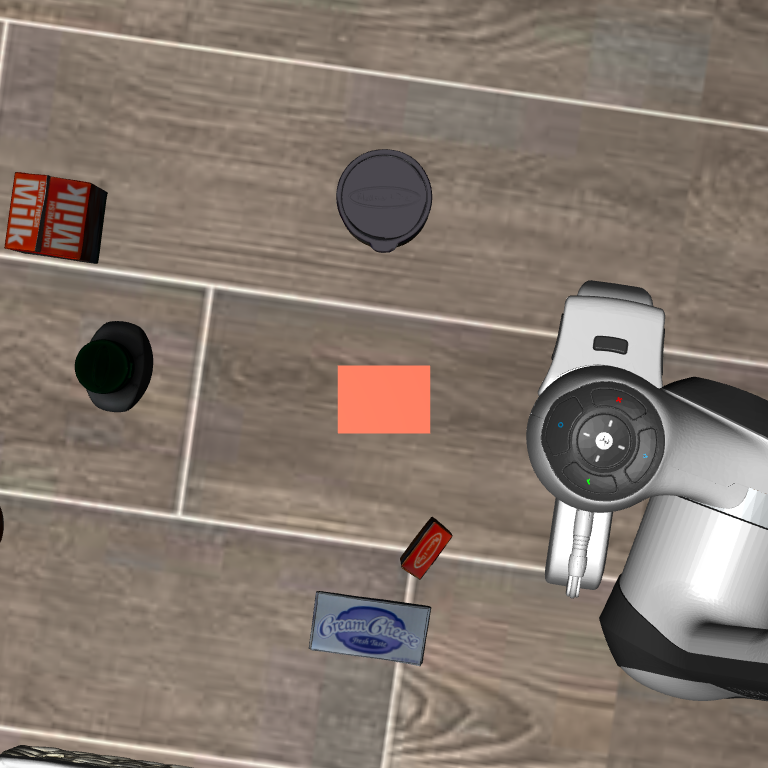}};
      \begin{scope}[x={(view.south east)},y={(view.north west)}]
        \clip (0,0) rectangle (1,1);
        \csname arxivOverlay@two_demo_obstacle_data@right\endcsname
      \end{scope}
    \end{tikzpicture}%
    }
    \caption{Central obstacle: \textcolor{mainDemoRight}{right detour}.}
  \end{subfigure}%

\par\vspace{2pt}
{\small\bfseries No obstacle\par}
\par\vspace{1pt}
  \begin{subfigure}[t]{0.158\linewidth}
    \centering
    \includegraphics[width=\linewidth]{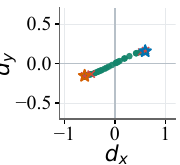}%
    \caption{Quantile:\newline shared ranks.\newline 92\% success.}
  \end{subfigure}%
\hfill
  \begin{subfigure}[t]{0.158\linewidth}
    \centering
    \includegraphics[width=\linewidth]{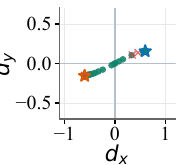}%
    \caption{Quantile:\newline density-weighted.\newline 95\% success.}
  \end{subfigure}%
\hfill
  \begin{subfigure}[t]{0.158\linewidth}
    \centering
    \includegraphics[width=\linewidth]{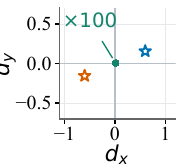}%
    \caption{Quantile:\newline median.\newline 100\% success.}
  \end{subfigure}%
\hfill
  \begin{subfigure}[t]{0.158\linewidth}
    \centering
    \includegraphics[width=\linewidth]{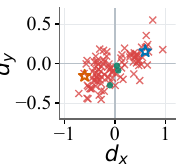}%
    \caption{Flow Matching.\newline 3\% success.}
  \end{subfigure}%
\hfill
  \begin{subfigure}[t]{0.158\linewidth}
    \centering
    \includegraphics[width=\linewidth]{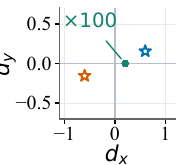}%
    \caption{$L_1$.\newline 100\% success.}
  \end{subfigure}%
\hfill
  \begin{subfigure}[t]{0.158\linewidth}
    \centering
    \includegraphics[width=\linewidth]{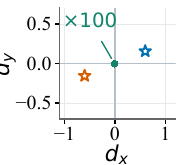}%
    \caption{$L_2$.\newline 100\% success.}
  \end{subfigure}%

\par\vspace{2pt}
{\small\bfseries Central obstacle\par}
\par\vspace{1pt}
  \begin{subfigure}[t]{0.158\linewidth}
    \centering
    \includegraphics[width=\linewidth]{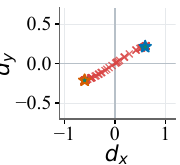}%
    \caption{Quantile:\newline shared ranks.\newline 6\% success.}
  \end{subfigure}%
\hfill
  \begin{subfigure}[t]{0.158\linewidth}
    \centering
    \includegraphics[width=\linewidth]{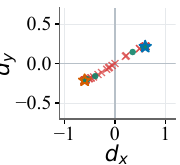}%
    \caption{Quantile:\newline density-weighted.\newline 13\% success.}
  \end{subfigure}%
\hfill
  \begin{subfigure}[t]{0.158\linewidth}
    \centering
    \includegraphics[width=\linewidth]{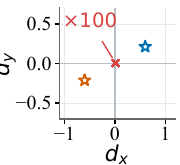}%
    \caption{Quantile:\newline median.\newline 0\% success.}
  \end{subfigure}%
\hfill
  \begin{subfigure}[t]{0.158\linewidth}
    \centering
    \includegraphics[width=\linewidth]{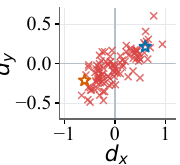}%
    \caption{Flow Matching.\newline 0\% success.}
  \end{subfigure}%
\hfill
  \begin{subfigure}[t]{0.158\linewidth}
    \centering
    \includegraphics[width=\linewidth]{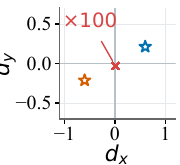}%
    \caption{$L_1$.\newline 0\% success.}
  \end{subfigure}%
\hfill
  \begin{subfigure}[t]{0.158\linewidth}
    \centering
    \includegraphics[width=\linewidth]{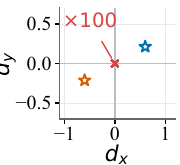}%
    \caption{$L_2$.\newline 0\% success.}
  \end{subfigure}%

\par\vspace{1pt}
{\footnotesize
\textcolor{mainOutcomeSuccess}{$\bullet$ Success}\quad
\textcolor{mainOutcomeFailure}{$\times$ Failure}\quad
Stars: \textcolor{mainDemoLeft}{left demo}\;/\;\textcolor{mainDemoRight}{right demo}\par}
\end{minipage}%
}
\caption{\textbf{Training demonstrations and action head outputs.}
Top views show demonstration states; points show initial $(d_x,d_y)$ commands
from one training run, colored by outcome. Subcaptions report success rates;
$\times100$ marks overlapping points.
Details: Appendices~\ref{sec:two-demo-distributions}--\ref{sec:two-demo-obstacle}.}
\label{fig:four-head-dx-distribution}
\endgroup
\end{figure}

\par\vfill
\begin{samepage}
\section{Conclusion}
\label{sec:conclusion}
We presented Quantile Head, a theory-inspired method that jointly supervises
ordered marginal action quantiles for single-pass median control. Our local
analysis shows lower median-update variance than median-only supervision
with nearby calibrated quantiles, fixed gaps, smooth densities, and matched
correction rates. Joint supervision
improves mean success across four matched LIBERO configurations. The median
policy reaches 99.3\% on LIBERO and 75.0\% across two real-robot tasks.
Sampling yields gains and losses; systematic strategy selection remains
future work.
\par
\end{samepage}
\clearpage
\subsection*{AI use statement}
We used generative AI tools to assist with language polishing, including
refining sentence structure and improving the clarity and readability of
the manuscript.



\clearpage
\appendix
\numberwithin{equation}{section}
\section{Quantile objectives and local center updates}
\label{sec:appendix}
\label{sec:qr-theory}


\subsection{Proof of Theorem~\ref{thm:quantile-objective}}
\label{sec:proof-quantile-objective}
We first give the background for the change from numerical targets to
threshold events in Section~\ref{sec:theoretical-analysis}. The formal proof
then establishes the two claims of Theorem~\ref{thm:quantile-objective} in
order: the loss identity and the population optimum. We explain each
condition where it is used, then assess its implications for the finite
quantile head. All predictors below are measurable.

\textbf{Recovering squared regression.}\quad
Let $X$ be the observation context and $Y$ a scalar action coordinate.
In \eqref{eq:quantile-mother-objective}, choose
$\nu(\cdot\mid X)=\delta_0$ and $\pi=\delta_0$, where $\delta_0$ is a
point mass at zero. Then $B=0$, $\alpha=0$, $A_0=0$, and $Y-B=Y$.
Writing $g_\theta(X)=f_\theta(X,0,0)$ gives
\begin{equation}
\mathcal L_{\delta_0,\delta_0}(\theta)
=\E_{X,Y}|g_\theta(X)-Y|^2
=:\mathcal L_{\mathrm{reg}}(\theta).
\label{eq:quantile-regression-objective}
\end{equation}
Thus the shared objective reduces exactly to squared action regression.

\textbf{Recovering conditional flow matching.}\quad
Choose $\nu(\cdot\mid X)=N(0,1)$ and let $\pi$ be the CFM training-time
distribution. Draw $B\perp Y\mid X$ and
$\alpha\perp(X,Y,B)$. The straight path and its target velocity are
\[
A_\alpha=(1-\alpha)B+\alpha Y,\qquad
\frac{\mathrm d A_\alpha}{\mathrm d\alpha}=Y-B.
\]
With $v_\theta=f_\theta$, direct substitution gives
\begin{equation}
\mathcal L_{\nu,\pi}(\theta)
=\E_{X,Y,B,\alpha}
 \left|v_\theta\!\left(X,(1-\alpha)B+\alpha Y,\alpha\right)
       -(Y-B)\right|^2
=:\mathcal L_{\mathrm{CFM}}(\theta).
\label{eq:quantile-squared-branches}
\end{equation}
This is the straight-path CFM loss \citep{lipman2023flow}.
The independent draws of $B$ and $\alpha$ specify training augmentation;
they impose no independence assumption on the demonstrated action
coordinates. The implementation uses reverse time
$t=1-\alpha$, so its path is $A_t=tB+(1-t)Y$ and its velocity target
is $B-Y$. The formulas above describe a scalar path. For action chunks,
use the full path $\mathbf A_\alpha=(1-\alpha)\mathbf B+\alpha\mathbf Y$
with Gaussian reference $\mathbf B\sim N(0,I)$; each velocity coordinate
conditions on this full path, and summing the coordinate losses gives the
squared Frobenius loss.

\textbf{What squared supervision learns.}\quad
Write $C=(X,A_\alpha,\alpha)$ and $\Delta=Y-B$. Under finite second moments,
\begin{equation}
\E[(f_\theta(C)-\Delta)^2\mid C]
=\bigl(f_\theta(C)-\E[\Delta\mid C]\bigr)^2
 +\operatorname{Var}(\Delta\mid C).
\label{eq:quantile-squared-conditional-risk}
\end{equation}
The second-moment condition is used only for this squared-risk
decomposition; the quantile result below requires only first moments.
Consequently, unrestricted regression learns $\E[Y\mid X]$, while CFM
learns $\E[Y-B\mid X,A_\alpha,\alpha]$. The former is a point action
estimate. The latter is a mean velocity field whose integration from
random reference noise can generate the conditional action distribution
\citep{lipman2023flow}. For both losses, the output derivative
$\partial_f(f-\Delta)^2=2(f-\Delta)$ grows linearly with the residual.
These observations motivate direct distribution supervision with bounded
feedback at the action-head outputs.

\textbf{Extending supervision to threshold events.}\quad
For direct action-distribution prediction, replace the numerical target
by the family of binary targets $Z_z=\mathbf1\{Y\le z\}$,
$z\in\R$. This is an explicit extension of the supervised representation;
it is not a requirement of CFM. Squared supervision of each event has
conditional optimum $\E[Z_z\mid X]=F_X(z)$. The full family of threshold
probabilities determines the scalar action distribution. Moreover, for
any two scalar actions $a,b$,
\begin{equation}
\int_{\R}
 \bigl(\mathbf1\{a\le z\}-\mathbf1\{b\le z\}\bigr)^2\,\mathrm dz
=|a-b|,
\label{eq:quantile-threshold-distance}
\end{equation}
because the indicators differ precisely between $a$ and $b$.
Lebesgue integration over thresholds therefore respects distance on the
action axis without selecting a finite set of cutoffs. It is a choice of
supervision, not the only possible weighting of threshold events.

\begin{proof}[Proof of Theorem~\ref{thm:quantile-objective}]
Let $q_\theta(X,u)$ be jointly measurable and nondecreasing in $u$.
Measurability makes the probabilities and expectations below well-defined
and is satisfied by the network operations used in our head. For
$U\sim\operatorname{Unif}(0,1)$ independent of $(X,Y)$, the induced
conditional CDF is
\[
G_\theta(z\mid X)
=\int_0^1\mathbf1\{q_\theta(X,u)\le z\}\,\mathrm du.
\]
The variable $U$ indexes quantile levels; its uniform law imposes no
distributional assumption on the action $Y$.
As in the theorem, assume $\E|Y|<\infty$ and
$\E_X\int_0^1|q_\theta(X,u)|\,\mathrm du<\infty$.
The integrated threshold score is the continuous ranked probability score
\citep{gneiting2007strictly}:
\begin{equation}
S(G_\theta;y\mid X)
=\int_{\R}
 \bigl(G_\theta(z\mid X)-\mathbf1\{y\le z\}\bigr)^2\,\mathrm dz,
\qquad
\mathcal L_{\mathrm{CDF}}(\theta)=\E_{X,Y}S(G_\theta;Y\mid X).
\label{eq:quantile-cdf-score}
\end{equation}
To see why first moments suffice, set $R=q_\theta(X,U)$. Jensen's
inequality, Tonelli's theorem, and \eqref{eq:quantile-threshold-distance} give
\[
0\le\mathcal L_{\mathrm{CDF}}(\theta)
\le\E|R-Y|\le\E|R|+\E|Y|<\infty.
\]
Tonelli applies to the nonnegative terms; the first moments make the
pairwise distances below finite and justify subsequent signed integral
exchanges and subtraction.
They are mild sufficient conditions: bounded actions satisfy the label
condition. For the finite head, a fixed continuous network on a compact
input domain has bounded selected outputs. Bounded image inputs and
fixed-length token sequences provide such an effective input domain.
For a general continuous quantile representation, integrability over both
context and quantile level remains an explicit requirement. Our action
normalization is affine and does not clip outliers, so normalization
alone does not establish bounded targets.

\textbf{Step 1: loss equivalence.}\quad
For almost every $X$, the conditional first moments are finite. Fix such
an $X$ and abbreviate its predictive CDF by $G$. Let $R,R'$ be independent
draws from $G$, also independent of the action label conditional on $X$.
The threshold identity \eqref{eq:quantile-threshold-distance} gives
\begin{align*}
\E|R-y|
&=\int_{\R}\left[G(z)+\mathbf1\{y\le z\}
                  -2G(z)\mathbf1\{y\le z\}\right]\,\mathrm dz,\\
\frac12\E|R-R'|
&=\int_{\R}G(z)(1-G(z))\,\mathrm dz.
\end{align*}
Subtracting yields
\begin{equation}
S(G;y)=\E|R-y|-\frac12\E|R-R'|.
\label{eq:quantile-pairwise-score}
\end{equation}
The pairwise term and its coefficient therefore follow from the integrated
threshold loss; they are not added to the regression or CFM objective.

To express this score through quantiles, write $q=q_\theta(X,\cdot)$ and $R=q(U)$ for
$U\sim\operatorname{Unif}(0,1)$. If $U'$ is an independent copy,
nondecreasingness and integrability give
\begin{align*}
\frac12\E|q(U)-q(U')|
&=\int_0^1\int_0^\tau
       \bigl(q(\tau)-q(v)\bigr)\,\mathrm dv\,\mathrm d\tau\\
&=\int_0^1(2\tau-1)q(\tau)\,\mathrm d\tau.
\end{align*}
Here nondecreasingness permits replacing
$|q(\tau)-q(v)|$ by $q(\tau)-q(v)$ when $v<\tau$; integrability permits
changing the order of integration. Flat segments and jumps are allowed,
so this step requires neither strict ordering nor a continuous density.
Using $\rho_\tau(e)=|e|/2+(\tau-1/2)e$ and
$\int_0^1(2\tau-1)y\,\mathrm d\tau=0$, we obtain
\begin{equation}
\begin{aligned}
S(G;y)
&=\int_0^1\left[|y-q(\tau)|-(2\tau-1)q(\tau)\right]\,\mathrm d\tau\\
&=2\int_0^1\rho_\tau\bigl(y-q(\tau)\bigr)\,\mathrm d\tau.
\end{aligned}
\label{eq:quantile-score-identity}
\end{equation}
Taking expectation over $(X,Y)$ proves
\eqref{eq:quantile-integrated-objective}, the first claim of the theorem.
This identity holds for every admissible predictor, whether or not its
model class can represent the true distribution.

\textbf{Step 2: population optimum.}\quad
Since $Z_z$ is binary, the conditional squared-risk decomposition is
\[
\E\!\left[(G_\theta(z\mid X)-Z_z)^2\mid X\right]
=\bigl(G_\theta(z\mid X)-F_X(z)\bigr)^2+F_X(z)(1-F_X(z)).
\]
For a conditional CDF $H$, write
$\mathcal L_{\mathrm{CDF}}(H)=\E_{X,Y}S(H;Y\mid X)$.
Integrating the decomposition gives
\begin{equation}
\mathcal L_{\mathrm{CDF}}(G_\theta)-\mathcal L_{\mathrm{CDF}}(F)
=\E_X\int_{\R}
 \bigl(G_\theta(z\mid X)-F_X(z)\bigr)^2\,\mathrm dz\ge0.
\label{eq:quantile-proper-risk}
\end{equation}
Equality implies $G_\theta(\cdot\mid X)=F_X$ for almost every $X$:
zero integral first gives equality for almost every threshold, and right
continuity extends it to every threshold. Thus the true conditional CDF
is the unique population optimum whenever it is representable. Its
nondecreasing quantile representation equals
$F_X^{-1}(\tau):=\inf\{z:F_X(z)\ge\tau\}$ almost everywhere, with respect
to the law of $X$ and Lebesgue measure on $(0,1)$. By Step~1, this is also
the optimum of the integrated quantile loss, proving the second claim.
The argument allows atoms and requires neither a density nor symmetry.
Representability is used only to attain the lower bound at $F_X$ within
the model class. Without it, \eqref{eq:quantile-proper-risk} still holds:
minimizing population loss seeks the best class approximation in expected
integrated squared CDF distance. This describes an infimum and does not
assert that a finite parameter attains it. Thus representability is an
ideal population benchmark, not a property guaranteed by a finite head.
\end{proof}

\textbf{Consequences for a finite quantile head.}\quad
For selected levels $0<\tau_0<\cdots<\tau_{K-1}<1$ and positive weights
$w_k$ summing to one, the finite training objective is
\begin{equation}
\ell_Q(X,Y)=\sum_{k=0}^{K-1}
w_k\rho_{\tau_k}\bigl(Y-q_k(X)\bigr).
\label{eq:quantile-finite-loss}
\end{equation}
Weight normalization only fixes the loss scale. Assume finite first
moments for $Y$ and the selected outputs $q_k(X)$.
Its conditional subgradient for a single quantile is
\[
\partial_q\E[\rho_\tau(Y-q)\mid X]
=[F_X(q^-)-\tau,\,F_X(q)-\tau].
\]
It contains zero exactly when $F_X(q^-)\le\tau\le F_X(q)$.
Thus free nondecreasing output functions attain the population minimum
at the selected true quantiles, uniquely when each quantile is unique.
A shared model attains this minimum if it can jointly represent those
quantile functions. This is weaker than representing the entire conditional
distribution. If it cannot, its approximation error is the infimum
excess finite pinball risk over its model class. This finite-grid argument
does not require the finite loss to equal the continuous integral.
For example, linear interpolation with constant tails extends an ordered
finite vector to an integrable nondecreasing representation. Theorem~\ref{thm:quantile-objective}
applies to that extension, but its integrated loss need not equal the
finite sum. In particular, our
$K=21$ levels on $[0.025,0.975]$ supervise those selected quantiles;
they do not determine the intervening quantiles or the omitted tails.
Applying this scalar result coordinate-wise identifies only the selected
quantiles of each action marginal; it does not identify their joint dependence.

\textbf{Ordering and bounded output feedback.}\quad
The median-and-gap construction in
Equations~\ref{eq:quantile-gaps}--\ref{eq:ordered-quantiles} enforces
ordering at the selected levels. Separate left and right gaps allow
asymmetric spacing, so the head adds no symmetry requirement.
Away from ties, the per-output derivative is
\[
\partial_{q_k}\ell_Q
=w_k\left[\mathbf1\{Y\le q_k\}-\tau_k\right],
\]
with subgradient interval $w_k[-\tau_k,1-\tau_k]$ at a tie. Hence
$|\partial_{q_k}\ell_Q|\le w_k\max(\tau_k,1-\tau_k)$ for every
subgradient, independently of residual magnitude. This is a bound on
loss feedback at each quantile output, not on gradients through an
arbitrary network parameterization.

\textbf{Repeated quantiles and strict gaps.}\quad
At a fixed input, positive gaps can represent any strictly ordered finite
quantile vector. Repeated true quantiles, which can occur for discrete
or deterministic actions, cannot be represented exactly by strictly
positive gaps. The scale $\delta_0$ in \eqref{eq:quantile-gaps} is not a
positive lower bound: softplus gaps can approach zero.

The resulting approximation can be quantified at the output level. Let
$K=2c+1$ with $c\ge1$, let $q_k^*(X)=F_X^{-1}(\tau_k)$ be the true selected quantiles,
and define $R_K(q)=\E\sum_k w_k\rho_{\tau_k}(Y-q_k(X))$.
The label first moment ensures that these interior true quantiles are
integrable. Our grid has $c=10$.
For any $\varepsilon>0$, set
$q_k^\varepsilon(X)=q_k^*(X)+\varepsilon(k-c)/c$.
Adjacent gaps are at least $\varepsilon/c$, the middle output is unchanged,
and each output moves by at most $\varepsilon$. Since pinball loss is
1-Lipschitz in its predicted value,
\[
0\le R_K(q^\varepsilon)-R_K(q^*)
\le\sum_k w_k\E|q_k^\varepsilon(X)-q_k^*(X)|
\le\varepsilon.
\]
Thus strict ordering alone need not create a positive infimum gap in
population risk. This output-level approximation does not establish
that the shared finite network can represent all such functions or that
training finds them; network approximation, estimation, and optimization
errors remain separate from Theorem~\ref{thm:quantile-objective}.

\textbf{Relation to masked training.}\quad
The random label mask in \eqref{eq:masked-pinball} preserves the finite
objective in expectation over the fresh mask. Conditional on a sample
with $N>0$ valid coordinates, the rule retains a uniform subset of fixed
size $R\ge1$. Each valid coordinate has inclusion probability $R/N$,
so its expected weight in the retained-coordinate average is $1/N$.
This gives the full valid-coordinate average without assuming independent
coordinate masks. When no coordinate is valid, both losses are zero.
This argument concerns random dropping. For a coordinate $j$, write its
effective weight as $\omega_j=V_j/\max(1,N)$. At inputs where
$\E[\omega_j\mid X]>0$, a sufficient condition for the original
conditional quantiles to remain optimal is that $\omega_j$ be determined
by $X$ or conditionally independent of $Y_j$ given $X$. In general,
the target is the conditional distribution reweighted by $\omega_j$.
These distinctions specify how the population result
applies to the implemented supervision; they do not establish a
finite-sample convergence or closed-loop performance guarantee.

\subsection{Proof of Theorem~\ref{thm:quantile-generalization}}
\label{sec:proof-center-generalization}
We compare direct scalar center updates at matched local mean correction
speed. All expectations below are conditional on a fixed input.

\textbf{Calibrated cumulative gaps and the center direction.}\quad
Let $F$ be the scalar conditional action CDF, with unique median $m^*$
and a density $f$ that is positive and twice continuously differentiable
near $m^*$. Write $f_0=f(m^*)>0$. Then $F(m^*)=1/2$, and $F$ has a
locally smooth inverse.
Fix an odd $K\geq3$ and strictly ordered locations
$t_1<\cdots<t_K$ with $t_{K+1-k}=-t_k$ and $t_k\in[-1,1]$.
Set $c=(K+1)/2$ and $\tau_k=1/2+\varepsilon t_k$.
For sufficiently small $\varepsilon>0$, the quantiles
$q_k^*=F^{-1}(\tau_k)$ lie in this neighborhood, are strictly ordered,
and satisfy $q_c^*=m^*$.

These quantiles are compatible with the implemented positive cumulative
gaps. In particular, their adjacent differences define
\[
g_j^{-*}=q_{c-j+1}^*-q_{c-j}^*>0,\qquad
g_j^{+*}=q_{c+j}^*-q_{c+j-1}^*>0,
\qquad 1\leq j\leq (K-1)/2.
\]
Each positive value is representable as
$\kappa\operatorname{softplus}(r)$ for any fixed $\kappa>0$.
There is no equality constraint between the two sides.
Writing $d_k^*=q_k^*-m^*$, hold these calibrated gaps fixed and consider
the scalar center direction
\begin{equation}
q_k(m)=m+d_k^*,\qquad
q_k(m^*+\delta)=q_k^*+\delta.
\label{eq:center-update-fixed-gaps}
\end{equation}
Thus every quantile has direct derivative one with respect to $m$.
For the equal-weight pinball loss and the median-only pinball loss,
the direct center gradients are, away from their breakpoints,
\begin{equation}
\begin{aligned}
G_Q(m,Y)
 &=\frac1K\sum_{k=1}^K
   \bigl[\mathbf{1}\{Y<q_k(m)\}-\tau_k\bigr],\\
G_M(m,Y)
 &=\mathbf{1}\{Y<m\}-\frac12.
\end{aligned}
\label{eq:center-update-direct-gradients}
\end{equation}
The density assumption makes the breakpoints probability-zero events
in the neighborhood used below. This identifies the gradients of the
actual cumulative-gap construction along the stated center direction.

\textbf{Exact gradient variance at calibration.}\quad
Define
$s_k=\mathbf{1}\{Y<q_k^*\}-\tau_k$.
Calibration gives $\E[s_k]=0$ and hence
$\E[G_Q(m^*,Y)]=\E[G_M(m^*,Y)]=0$.
The indicators are nested, so their product has expectation
$\min(\tau_k,\tau_\ell)$. Consequently,
\begin{equation}
\operatorname{Cov}(s_k,s_\ell)
=\min(\tau_k,\tau_\ell)-\tau_k\tau_\ell.
\label{eq:center-update-score-covariance}
\end{equation}
This is the usual composite-quantile score covariance
\citep{zou2008composite}; in particular, the proof does not assume that
the quantile scores from the same label are independent.

Let $B_h=\operatorname{Var}(G_h(m^*,Y))$ for $h\in\{Q,M\}$.
Since $\tau_c=1/2$, $B_M=1/4$. For the joint gradient, substituting the
selected levels in Equation~\ref{eq:center-update-score-covariance} gives
\[
\min(\tau_k,\tau_\ell)-\tau_k\tau_\ell
=\frac14-\frac{\varepsilon}{2}|t_k-t_\ell|
 -\varepsilon^2t_kt_\ell.
\]
Symmetry implies $\sum_k t_k=0$, so summing over all pairs yields the
exact identity
\begin{equation}
B_Q=\frac14-C_t\varepsilon,\qquad
C_t=\frac{1}{2K^2}\sum_{k,\ell=1}^K|t_k-t_\ell|>0.
\label{eq:center-update-variance}
\end{equation}
Strict positivity follows from the distinct levels. The variance
reduction is therefore of first order in $\varepsilon$.

\textbf{Local mean correction.}\quad
Under the fixed-gap perturbation in
Equation~\ref{eq:center-update-fixed-gaps},
\[
\E[G_Q(m^*+\delta,Y)]
=\frac1K\sum_{k=1}^K
  \bigl[F(q_k^*+\delta)-F(q_k^*)\bigr].
\]
It follows that
\begin{equation}
\begin{aligned}
a_Q
&:=\left.\frac{\partial}{\partial m}
       \E[G_Q(m,Y)]\right|_{m=m^*}
 =\frac1K\sum_{k=1}^K f(q_k^*),\\
a_M
&:=\left.\frac{\partial}{\partial m}
       \E[G_M(m,Y)]\right|_{m=m^*}
 =f_0.
\end{aligned}
\label{eq:center-update-slopes}
\end{equation}
Both are positive. Define $r(\tau)=f(F^{-1}(\tau))$ in a neighborhood
of $1/2$. The local positivity and $C^2$ smoothness of $f$ imply that
$r$ is $C^2$ there. Its Taylor expansion gives
\[
r(1/2+\varepsilon t_k)
=f_0+\varepsilon t_k r'(1/2)
 +\frac{\varepsilon^2t_k^2}{2}r''(1/2)
 +o(\varepsilon^2).
\]
Because $K$ and the $t_k$ are fixed, this remainder can be summed over
the finite set of levels. The first-order terms cancel, giving
\begin{equation}
a_Q
=f_0+\frac{\varepsilon^2 r''(1/2)}{2K}
       \sum_{k=1}^Kt_k^2+o(\varepsilon^2)
=f_0+O(\varepsilon^2).
\label{eq:center-update-slope-expansion}
\end{equation}
Thus the local mean correction slope changes only at second order.
This cancellation uses symmetry of the \emph{probability levels},
not symmetry of $f$ or equality of the left and right gaps.

\textbf{Noise after matching correction speeds.}\quad
The normalized noise coefficient is $V_h=B_h/a_h^2$ for $h\in\{Q,M\}$.
Equations~\ref{eq:center-update-variance} and
\ref{eq:center-update-slope-expansion} imply
\[
\frac{V_Q}{V_M}
=\frac{(1/4-C_t\varepsilon)/a_Q^2}{(1/4)/f_0^2}
=(1-4C_t\varepsilon)\frac{f_0^2}{a_Q^2}
=1-4C_t\varepsilon+O(\varepsilon^2).
\]
Since $C_t>0$, there is an $\varepsilon_0>0$ such that this ratio is
strictly less than one whenever $0<\varepsilon<\varepsilon_0$.
For fixed $K$, this inequality also holds uniformly over symmetric grid
shapes. Normalize $\max_k|t_k|=1$, so $\varepsilon$ is the largest
distance of a level from $1/2$. The two extreme rows of the pairwise
sum each contribute $K$, giving $C_t\ge1/K$; bounded $r''$ makes the
$O(\varepsilon^2)$ remainder uniform for $|t_k|\le1$.
Thus sufficiently close symmetric levels give $V_Q/V_M<1$ without
requiring fixed relative spacing.
We now translate this normalized noise comparison into the update
variance stated in the theorem.

For an explicit update interpretation, fix $0<\eta<1$ and set
$\delta_h^+=\delta-\eta G_h(m^*+\delta,Y)/a_h$.
Taylor expansion of the mean gradient in $\delta$ gives, for each
fixed sufficiently small $\varepsilon$,
\[
\E[\delta_h^+]=(1-\eta)\delta+O(\delta^2),
\qquad h\in\{Q,M\}.
\]
The two updates therefore have the same first-order mean correction
speed. At the true center, $\delta=0$, the center increments in the
theorem are $\Delta m_h=-\eta G_h(m^*,Y)/a_h=\delta_h^+$.
They have zero mean, so
\[
\operatorname{Var}(\Delta m_h)=\E[(\delta_h^+)^2]=\eta^2V_h,\qquad
\frac{\operatorname{Var}(\Delta m_Q)}{\operatorname{Var}(\Delta m_M)}
=\frac{V_Q}{V_M}<1.
\]
This proves Equation~\ref{eq:center-update-noise}.
The improvement is consequently not explained solely by a smaller
gradient magnitude or an unmatched learning rate. \hfill$\square$

\textbf{Scope of the conclusion.}\quad
The limit is a shrinking quantile-level spread for fixed finite $K$;
it is neither a sample-size limit nor a statement uniform in the number
of quantiles. Each sufficiently small positive
$\varepsilon$ gives strictly positive representable gaps.
No parametric distribution family or symmetry of the conditional
density is required. Calibration and the fixed-gap center direction
are substantive conditions: the result does not account for inaccurate
gaps or simultaneous updates to gaps and shared network parameters.
The normalization by $a_h$ defines the comparison of local correction
speeds; it is not a change to the optimizer used in our experiments.
Accordingly, the theorem establishes a local center-update mechanism,
not a guarantee of joint-training generalization or task success, and
does not by itself certify the wider quantile grid used in the
experiments.

\clearpage
\raggedbottom
\setlength{\parskip}{3pt}
\setlength{\textfloatsep}{12pt plus 2pt minus 2pt}
\setlength{\floatsep}{8pt plus 2pt minus 2pt}
\section{Experimental details and additional results}
\label{sec:appendix-simulation-results}
\label{sec:appendix-libero-pro}

\textbf{Simulation training.}\quad
We implement our method on $\pi_{0.5}$ with a frozen VLM, training
the action expert, 16 prompt tokens, and quantile heads.
The head uses the 21 levels, positive-gap parameterization, and
$10\%$ random coordinate-label mask specified in Section~\ref{sec:method}.
Simulation training uses AdamW with a global batch size of 128 across
two GPUs.
Prompt tokens use a peak learning rate of $10^{-4}$ and no weight
decay; other trainable parameters use $2.5\times10^{-5}$ and weight
decay $0.01$.
Both learning rates follow cosine decay with 200 warmup steps for LIBERO
and 666 for LIBERO-Plus.
Both models start from the same pretrained checkpoint and train for
one epoch on their respective datasets.


\textbf{Training and transfer settings.}\quad
On LIBERO, we evaluate policies trained on its standard demonstrations.
To assess their robustness, we directly evaluate these policies on
LIBERO-Plus and LIBERO-Pro without further training.
For LIBERO-Plus, we also evaluate policies fine-tuned on its training set,
reporting \emph{zero-shot transfer} and \emph{supervised fine-tuning}
results separately. Quantile Head uses deterministic median decoding
for Tables~\ref{tab:libero-pro-combined}--\ref{tab:libero-component-ablation};
alternative sampling rules are evaluated separately in
Section~\ref{sec:sampling-ablations} and the two-demonstration studies.
At each replanning step, the policy supplies a ten-step action sequence
to the controller, which executes it before obtaining a new observation
and replanning. Evaluation does not update model parameters. An episode
succeeds if the benchmark's task-success condition is met within its
rollout budget; an episode that terminates without success is a failure.

\textbf{LIBERO evaluation and aggregation.}\quad
We evaluate all ten tasks in each of Spatial, Object, Goal, and Long
(LIBERO-10). Our results in Table~\ref{tab:libero-pro-combined} use models
trained with three different random seeds, with each model evaluated for
1,000 episodes per suite.
Within each trained model, a suite success rate averages
its ten task success rates, and Avg. averages the four suite rates with
equal weight. We report the arithmetic mean of the corresponding
success rates over the three training seeds.
The matched head and component comparisons in
Table~\ref{tab:libero-component-ablation} use the same four suites and
task/suite averaging, with 100 evaluation episodes per task and setting.

\textbf{LIBERO-Plus protocol.}\quad
Both zero-shot and fine-tuned evaluations cover all 10,030 benchmark
instances, with one rollout per instance and trained model, using the
benchmark-provided initial states. Spatial, Object, Goal, and Long contain
2,402, 2,518, 2,591, and 2,519 instances, respectively. Their rollout
limits are 280, 280, 300, and 520 environment steps.
We assign instances to the seven official perturbation categories.
Within each model, each category rate pools successful rollouts over all
its instances across the four suites; Total pools all 10,030 instances.
Thus, Total is instance-weighted, rather than an unweighted mean of
the seven category rates or four suite rates.
For Table~\ref{tab:libero-plus-comparison}, we train models with three
different random seeds and report the arithmetic mean of their
corresponding success rates.

\textbf{LIBERO-Pro protocol.}\quad
We evaluate the three seed-trained LIBERO policies on object, position,
semantic, and task perturbations. For each trained model, we evaluate
each task for 100 episodes.
We use the benchmark-provided
initial states; rollout limits are 280 steps for Spatial and Object,
300 for Goal, and 800 for Long.
Within each model, each perturbation rate averages the four suite rates,
and Avg. averages the four perturbation rates with equal weight.
Reported rates then average the corresponding values over the three
training seeds.
Original and Environment are
excluded from Avg. The Original column in
Table~\ref{tab:libero-pro-by-suite} is the unperturbed control from the
separate Pro evaluation, rather than a reuse of the standard LIBERO result.

\textbf{External comparisons.}\quad
Cited baselines in Tables~\ref{tab:libero-pro-combined}
and~\ref{tab:libero-plus-comparison} retain their source-reported
training, evaluation, and aggregation protocols. Their pretraining,
trainable modules, observation histories, and per-suite versus joint
training can differ. These tables compare reported system performance;
Table~\ref{tab:libero-component-ablation} provides the matched comparisons
of action heads and training components. Superscripts in the comparison
tables identify the source of each reused result.

\textbf{Ablation Settings.}\quad
Each setting in Table~\ref{tab:libero-component-ablation} is evaluated for
100 episodes per task on the four LIBERO suites.
Quantile and direct-regression variants use deterministic center decoding.
Avg. reports the mean success rate across the four LIBERO suites.
Average episode time in (a) is measured over all evaluation episodes,
including both successful and failed trials. The relative reduction in
average episode time uses $L_1$ as the reference.
This episode-level quantity includes the effects of rollout length and
termination, and is distinct from the latency of one policy forward pass.
Unspecified components in Table~\ref{tab:libero-component-ablation} follow the full model.
The $L_2$ and $L_1$ rows use direct regression heads with
$e^2$ and $|e|$ losses, respectively;
Median-Only in (b) retains the quantile head and supervises only its median.
Detached Anchor detaches $m$ only in noncentral quantiles, whose losses
still update shared features. This variant additionally multiplies the
gradient with respect to the median output $m$ by $K=21$.
Symmetric Quantiles share left/right offsets.

The component ablations in (c) retain the ordered median-and-positive-gap head.
Pinball Loss supervises $21$ quantiles; disabling it uses only $0.5|e|$
center supervision, normalized over one output. Label Mask randomly drops
$10\%$ of valid coordinate labels, sharing the mask across quantiles.
Prompts are $16$ trainable context tokens. Checkmarks enable components;
blanks disable them. Rows cover all eight combinations of the three listed
components: all off, three single-component, three two-component, and all on.
Without joint supervision, gap branches receive no supervised gradients.
Removing only Pinball Loss matches Median-Only in (b). Task instructions
and validity/padding masks remain active.

\textbf{Flow-matching inference.}\quad
The matched Flow Matching baseline in
Table~\ref{tab:libero-component-ablation}(a) starts from Gaussian noise
and uses ten Euler integration steps from $t=1$ to $t=0$ per replanning
call. These are numerical integration steps; the controller executes
ten environment actions between replanning calls.

\begin{figure}[!t]
  \centering
  \includegraphics[width=0.8\linewidth]{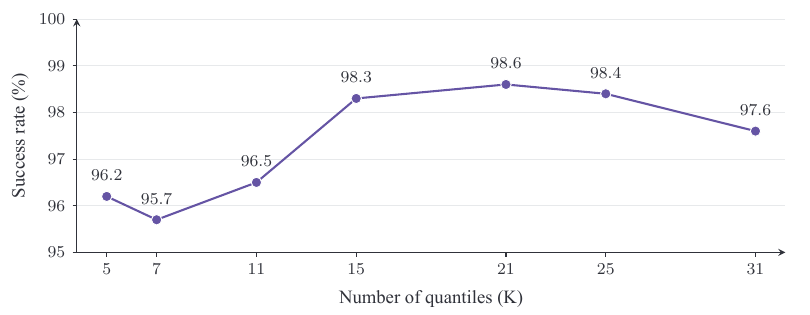}
  \caption{\textbf{Number of quantiles on LIBERO-Long.}
  Success rate as the number of predicted quantiles $K$ varies.
  Rates average three training seeds, with 1,000 evaluation episodes per seed and setting.
  The default $K=21$ attains the highest observed rate of 98.6\%.}
  \label{fig:quantile-count-ablation}
\end{figure}

\textbf{Number of Quantiles.}\quad
We compare seven quantile counts, from $K=5$ to $K=31$, on LIBERO-Long.
For each setting, we train models with three different random seeds,
evaluate each model over 1,000 episodes, and report the arithmetic mean
of the three success rates.
Figure~\ref{fig:quantile-count-ablation} shows higher success rates for
$K=15$--$25$ (98.3--98.6\%) than for $K=5$--$11$ (95.7--96.5\%),
followed by a decrease to 97.6\% at $K=31$.
Our default $K=21$ achieves the best observed success rate while using
fewer quantiles than $K=25$; increasing $K$ beyond 21 provides no further
gain in this exploratory study.
\begin{figure}[!t]
  \centering
  \includegraphics[width=0.8\linewidth]{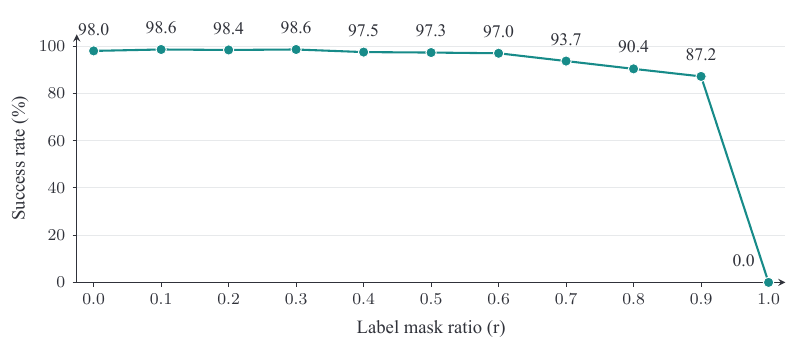}
  \caption{\textbf{Label mask ratio on LIBERO-Long.}
  Success rate as the fraction of masked action labels $r$ increases.
  Rates average three training seeds, with 1,000 evaluation episodes per seed and setting.
  The sweep includes the fully masked setting $r=1.0$.}
  \label{fig:label-mask-ratio-ablation}
\end{figure}

\textbf{Label Mask Ratio.}\quad
We vary $r$ from 0 to 1.0 in steps of 0.1 on LIBERO-Long.
For each setting, we train models with three different random seeds,
evaluate each model over 1,000 episodes, and report the arithmetic mean
of the three success rates.
In Figure~\ref{fig:label-mask-ratio-ablation}, the default $r=0.1$ and
$r=0.3$ both reach 98.6\% success, compared with 98.0\% without masking.
Success declines at higher masking ratios, reaching 87.2\% at $r=0.9$
and 0.0\% at $r=1.0$.
For the fully masked $r=1.0$ setting, we disable the requirement to
retain at least one label in Section~\ref{sec:label-mask} and drop all
valid action labels.

\textbf{Sampling Analysis Settings.}\quad
We study action decoding and temporal sampling structure on LIBERO-10.
Each configuration in Figure~\ref{fig:inference-sampling-ablation} is
evaluated for 100 episodes per task across all ten tasks.
The eleven configurations for $[0.2,0.8]$ and $[0.3,0.7]$, including
the median baseline, follow this budget. We use $T_s=0.35$.
At each replanning call, the controller receives and executes a ten-step
action sequence; the gripper uses its median. Sampled decoders transform
standard-normal latent variables into ranks by
$U=\ell+(1-2\ell)\Phi(T_s Z)$, where $\Phi$ is the standard normal CDF,
using windows $[\ell,1-\ell]=[0.2,0.8]$ and $[0.3,0.7]$.
Let $Q(u)$ interpolate the predicted quantile knots piecewise linearly
for each step and coordinate.
Quantile decoders use $Q(U)$; the Gaussian surrogate below uses the same
ranks and matches the predicted median and $10$--$90\%$ width. For each
predicted step and action coordinate, it decodes
\begin{equation}
 a_G(U)=m+\sigma_G\Phi^{-1}(U),\qquad m=Q(0.5),\qquad
 \sigma_G=\frac{Q(0.9)-Q(0.1)}{2\Phi^{-1}(0.9)}.
 \label{eq:gaussian-surrogate-decoder}
\end{equation}
It retains the same rank window, temperature, temporal rank process,
and median-decoded gripper. Thus the comparison changes the marginal
inverse CDF while matching its center and central interval width.
The median result is shared across panels.
Both panels additionally evaluate sampling windows
$[0.1,0.9]$ and $[0.4,0.6]$.
The two additional Gaussian-surrogate configurations in (a) each use
100 episodes per task, giving 1,000 episodes per configuration.
Both panels reuse the median and all four chunk-shared
quantile results. Chunk-shared ranks vary independently across motion
coordinates; $\rho$ controls the Gaussian latent AR process.

\begin{table}[!t]
  \centering
  \caption{Quantile Head success rates (\%) by suite and perturbation type on LIBERO-Pro.}
  \label{tab:libero-pro-by-suite}
  \vspace{4pt}
  \scalebox{0.8}{%
    \begin{minipage}{\linewidth}
      \centering
  \small
  \renewcommand{\arraystretch}{1.06}
  \setlength{\tabcolsep}{4pt}
  \begin{tabular*}{\linewidth}{@{\extracolsep{\fill}}lcccccc@{}}
    \toprule
    Suite & Original & Object & Position & Semantic & Task & Avg. \\
    \midrule
    Spatial & 99.0 & 100.0 & 49.0 & 98.0 & 47.0 & 73.5 \\
    Object & 100.0 & 95.0 & 26.0 & 99.0 & 10.0 & 57.5 \\
    Goal & 95.0 & 81.0 & 34.0 & 97.0 & 27.0 & 59.8 \\
    Long & 98.0 & 67.0 & 16.0 & 97.0 & 20.0 & 50.0 \\
    Mean & 98.0 & 85.8 & 31.3 & 97.8 & 26.0 & 60.2 \\
    \bottomrule
  \end{tabular*}
    \end{minipage}%
  }
\end{table}

\begin{table}[!t]
  \centering
  \caption{Quantile Head success rates (\%) on LIBERO-Plus by suite and perturbation.\newline
  Overall uses instance-weighted aggregation across the four suites.}
  \label{tab:libero-plus-by-suite}
  \vspace{4pt}
  \scalebox{0.8}{%
    \begin{minipage}{\linewidth}
      \centering
      \small
      \renewcommand{\arraystretch}{1.06}
      \setlength{\tabcolsep}{3pt}
      \begin{tabular*}{\linewidth}{@{\extracolsep{\fill}}lcccccccc@{}}
        \multicolumn{9}{@{}l}{\textbf{(a) Zero-shot transfer}} \\
        \toprule
        Suite & Camera & Robot & Lang. & Light & Bkg. & Noise & Layout & Total \\
        \midrule
        Spatial & 75.8 & 89.4 & 97.2 & 100.0 & 99.6 & 98.6 & 98.4 & 93.7 \\
        Object & 87.1 & 73.6 & 93.5 & 100.0 & 100.0 & 99.1 & 92.3 & 91.5 \\
        Goal & 78.2 & 79.0 & 68.5 & 91.8 & 95.4 & 94.7 & 73.2 & 81.7 \\
        Long & 46.8 & 81.9 & 92.7 & 92.0 & 94.5 & 87.8 & 88.8 & 82.1 \\
        Overall & 71.6 & 80.7 & 87.6 & 96.1 & 97.2 & 94.8 & 87.8 & 87.1 \\
        \bottomrule
      \end{tabular*}
      \vspace{7pt}
      \begin{tabular*}{\linewidth}{@{\extracolsep{\fill}}lcccccccc@{}}
        \multicolumn{9}{@{}l}{\textbf{(b) Supervised fine-tuning}} \\
        \toprule
        Suite & Camera & Robot & Lang. & Light & Bkg. & Noise & Layout & Total \\
        \midrule
        Spatial & 97.9 & 88.3 & 95.9 & 100.0 & 100.0 & 96.6 & 96.9 & 96.3 \\
        Object & 97.5 & 68.8 & 91.0 & 100.0 & 100.0 & 99.5 & 91.1 & 91.9 \\
        Goal & 79.9 & 77.3 & 65.6 & 97.5 & 92.2 & 94.5 & 72.0 & 81.3 \\
        Long & 80.4 & 74.8 & 94.0 & 91.6 & 97.6 & 86.6 & 91.7 & 87.3 \\
        Overall & 88.6 & 77.0 & 86.2 & 97.4 & 97.3 & 94.1 & 87.3 & 89.1 \\
        \bottomrule
      \end{tabular*}
    \end{minipage}%
  }
\end{table}

\begin{figure}[t]
  \centering
  \includegraphics[width=\linewidth]{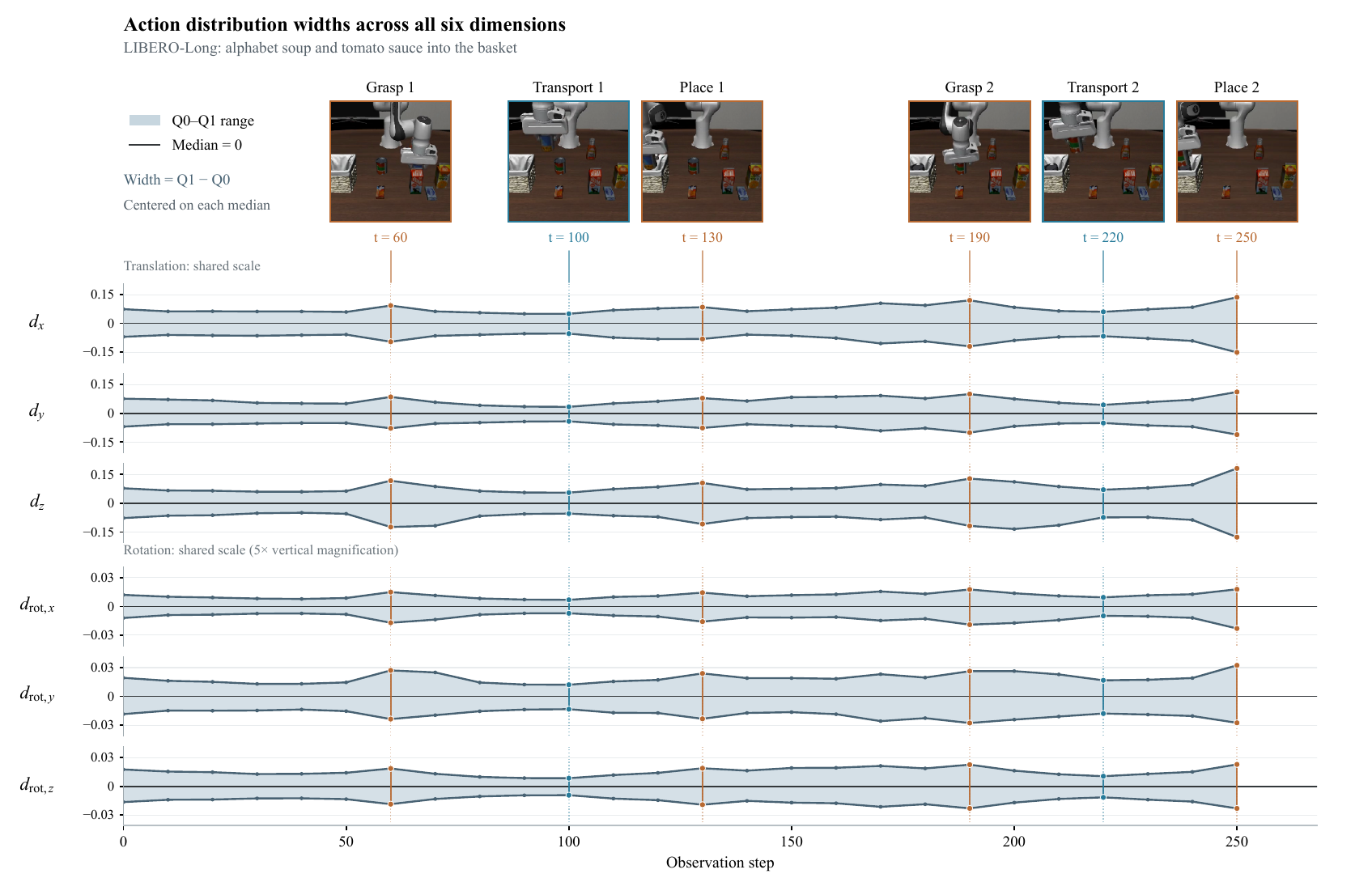}
  \caption{\textbf{Predicted action spread across six motion dimensions.}
  Median-centered ranges in environment-command units from the rollout in
  Figure~\ref{fig:quantile-mean-range}; rotation panels use fivefold vertical magnification.}
  \label{fig:quantile-ranges-all-dimensions}
\end{figure}

\subsection{Per-dimension action distributions}
\label{sec:appendix-action-distributions}

Let $\widetilde Q_{\tau,d}(t)$ be the normalized quantile for coordinate $d$
at observation $t$ in Figure~\ref{fig:quantile-mean-range}. Averaging its
median-centered endpoints over six coordinates, excluding the gripper, gives:
\begin{align*}
  L(t) &= \frac{1}{6}\sum_{d=1}^{6}
  \left(\widetilde Q_{0,d}(t)-\widetilde Q_{0.5,d}(t)\right),
  & U(t) &= \frac{1}{6}\sum_{d=1}^{6}
  \left(\widetilde Q_{1,d}(t)-\widetilde Q_{0.5,d}(t)\right), \\
  W(t) &= U(t)-L(t)
  = \frac{1}{6}\sum_{d=1}^{6}
  \left(\widetilde Q_{1,d}(t)-\widetilde Q_{0,d}(t)\right).
\end{align*}
This is a descriptive mean of marginal ranges, not the quantile range of
an averaged action. All outputs use the first prediction step ($h=0$);
$Q_0$ and $Q_1$ are obtained by linear extrapolation of the outer quantile knots.

Figure~\ref{fig:quantile-ranges-all-dimensions} shows the individual ranges
in environment-command units. The three translation panels share one
vertical scale; the three rotation panels share another, magnified fivefold.
Both figures use 26 replanning observations from one successful LIBERO-Long
episode that places alphabet soup and tomato sauce in a basket. Video frames
are matched to observations by their recorded indices. ``Place'' marks
lowering into the basket and does not imply that release is complete.

\begin{figure}[!t]
  \centering
  \includegraphics[width=\linewidth]{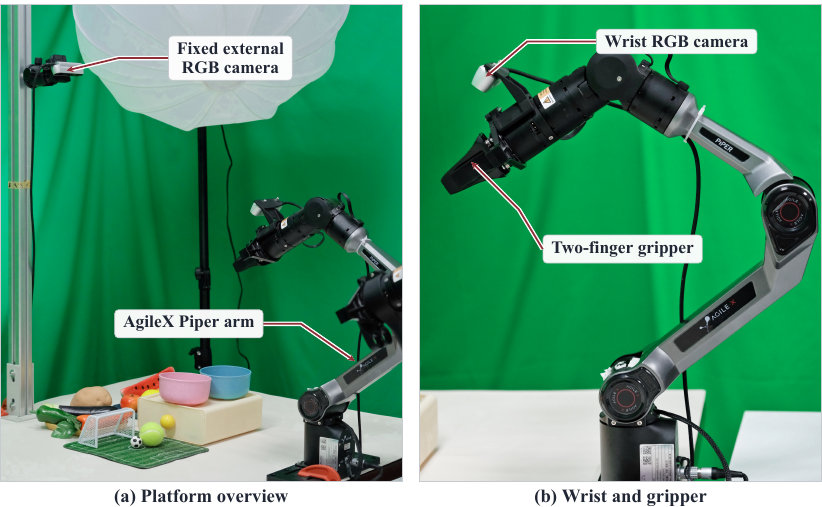}
  \caption{\textbf{Real-robot platform.}
  The experimental setup and arm assembly, with the AgileX Piper arm,
  two-finger gripper, wrist RGB camera, and fixed external RGB camera labeled.}
  \label{fig:real-robot-platform}
\end{figure}

\subsection{Real-robot experiments}
\label{sec:appendix-real-robot}

\textbf{Platform.}\quad
The platform has an AgileX Piper arm, a two-finger gripper, a wrist RGB
camera, and a fixed external RGB camera
(Figure~\ref{fig:real-robot-platform}).

\textbf{Tasks.}\quad
We evaluate two tasks: placing an apple on a yellow plate and removing a
cuboid from a blue plate.
Figure~\ref{fig:real-robot-tasks} illustrates their action sequences using
frames from the recorded videos.

\begin{figure}[!t]
  \centering
  \resizebox{\linewidth}{!}{
\begingroup%
\definecolor{rrMotion}{HTML}{305FA0}%
\definecolor{rrTaskInk}{HTML}{283443}%
\definecolor{rrTaskBorder}{HTML}{CBD1D7}%
\begin{tikzpicture}[x=1cm,y=1cm,line cap=round,line join=round,
  every node/.style={font=\rmfamily\fontsize{8}{9}\selectfont,text=rrTaskInk,inner sep=1.5pt},
  rr motion/.style={draw=rrMotion,line width=1.2pt,
    preaction={draw=white,line width=2.5pt,-},
    -{Latex[length=2.1mm,width=1.6mm]}},
  rr ring/.style={draw=rrMotion,line width=.85pt,
    preaction={draw=white,line width=1.9pt}}]
\node[anchor=west,font=\rmfamily\bfseries\fontsize{9}{10.5}\selectfont] at (0,7.39273) {(a) Apple: table $\rightarrow$ yellow plate};
\begin{scope}[shift={(0.00000,3.87636)}]
\begin{scope}
\clip (0,0) rectangle (4.45000,3.23636);
\node[anchor=south west,inner sep=0pt] at (-2.99364,-0.32364)
  {\includegraphics[width=9.19127cm]{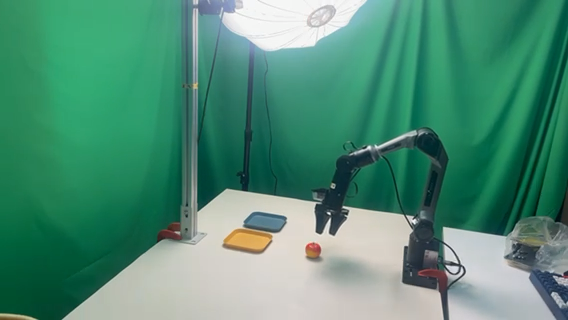}};
\end{scope}
\draw[rrTaskBorder,line width=.45pt] (0,0) rectangle (4.45000,3.23636);
\draw[rr ring] (2.07127,0.80909) circle (0.19418);
\node[anchor=north west,fill=white,rounded corners=1.5pt,inner sep=2pt] at (.10,3.13636)
  {\textcolor{rrMotion}{\textbf{1}}\quad Approach};
\node[anchor=north east,fill=white,rounded corners=1.5pt,inner sep=2pt] at (4.35000,3.13636) {3.60\,s};
\end{scope}
\begin{scope}[shift={(4.75000,3.87636)}]
\begin{scope}
\clip (0,0) rectangle (4.45000,3.23636);
\node[anchor=south west,inner sep=0pt] at (-2.99364,-0.32364)
  {\includegraphics[width=9.19127cm]{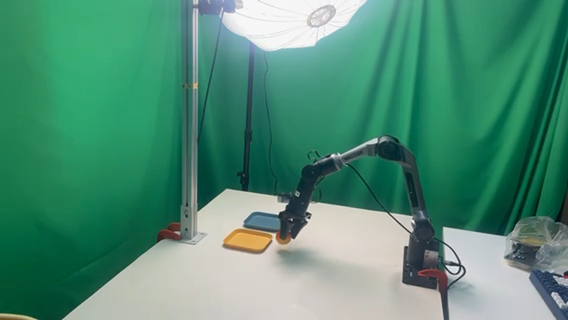}};
\end{scope}
\draw[rrTaskBorder,line width=.45pt] (0,0) rectangle (4.45000,3.23636);
\draw[rr motion] (2.10364,0.48545) .. controls (1.78000,0.14564) and (1.14891,0.21036) .. (0.98709,0.72818);
\node[anchor=north west,fill=white,rounded corners=1.5pt,inner sep=2pt] at (.10,3.13636)
  {\textcolor{rrMotion}{\textbf{2}}\quad Transfer};
\node[anchor=north east,fill=white,rounded corners=1.5pt,inner sep=2pt] at (4.35000,3.13636) {5.20\,s};
\end{scope}
\begin{scope}[shift={(9.50000,3.87636)}]
\begin{scope}
\clip (0,0) rectangle (4.45000,3.23636);
\node[anchor=south west,inner sep=0pt] at (-2.99364,-0.32364)
  {\includegraphics[width=9.19127cm]{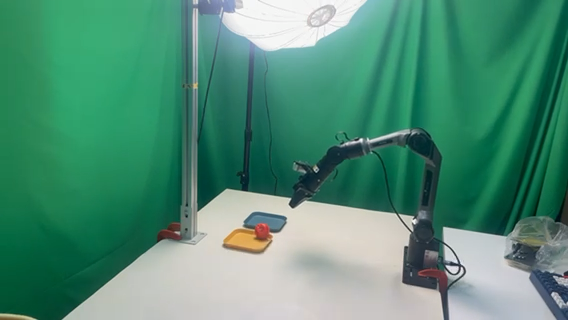}};
\end{scope}
\draw[rrTaskBorder,line width=.45pt] (0,0) rectangle (4.45000,3.23636);
\draw[rr ring] (1.24600,1.13273) circle (0.19418);
\node[anchor=north west,fill=white,rounded corners=1.5pt,inner sep=2pt] at (.10,3.13636)
  {\textcolor{rrMotion}{\textbf{3}}\quad Complete};
\node[anchor=north east,fill=white,rounded corners=1.5pt,inner sep=2pt] at (4.35000,3.13636) {8.10\,s};
\end{scope}
\node[anchor=west,font=\rmfamily\bfseries\fontsize{9}{10.5}\selectfont] at (0,3.51636) {(b) Cuboid: blue plate $\rightarrow$ table};
\begin{scope}[shift={(0.00000,0.00000)}]
\begin{scope}
\clip (0,0) rectangle (4.45000,3.23636);
\node[anchor=south west,inner sep=0pt] at (-2.99364,-0.32364)
  {\includegraphics[width=9.19127cm]{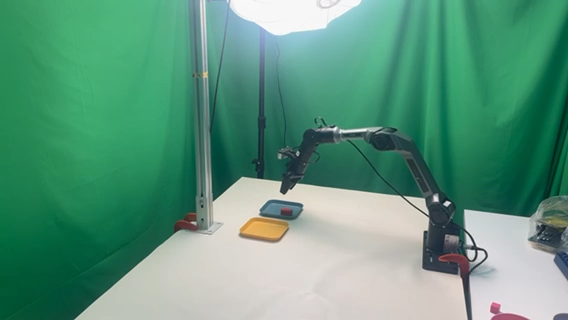}};
\end{scope}
\draw[rrTaskBorder,line width=.45pt] (0,0) rectangle (4.45000,3.23636);
\draw[rr ring] (1.63436,1.45636) circle (0.14564);
\node[anchor=north west,fill=white,rounded corners=1.5pt,inner sep=2pt] at (.10,3.13636)
  {\textcolor{rrMotion}{\textbf{1}}\quad Approach};
\node[anchor=north east,fill=white,rounded corners=1.5pt,inner sep=2pt] at (4.35000,3.13636) {3.27\,s};
\end{scope}
\begin{scope}[shift={(4.75000,0.00000)}]
\begin{scope}
\clip (0,0) rectangle (4.45000,3.23636);
\node[anchor=south west,inner sep=0pt] at (-2.99364,-0.32364)
  {\includegraphics[width=9.19127cm]{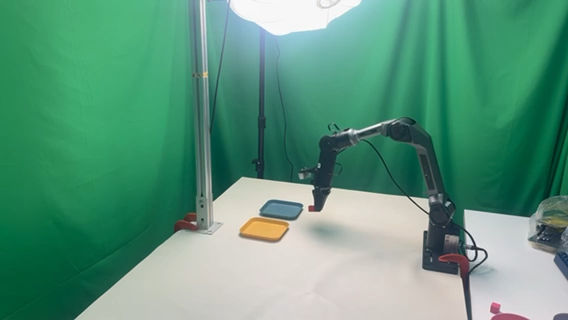}};
\end{scope}
\draw[rrTaskBorder,line width=.45pt] (0,0) rectangle (4.45000,3.23636);
\draw[rr motion] (1.50491,1.11655) .. controls (1.53727,0.71200) and (1.92564,0.48545) .. (2.28164,0.71200);
\node[anchor=north west,fill=white,rounded corners=1.5pt,inner sep=2pt] at (.10,3.13636)
  {\textcolor{rrMotion}{\textbf{2}}\quad Transfer};
\node[anchor=north east,fill=white,rounded corners=1.5pt,inner sep=2pt] at (4.35000,3.13636) {5.20\,s};
\end{scope}
\begin{scope}[shift={(9.50000,0.00000)}]
\begin{scope}
\clip (0,0) rectangle (4.45000,3.23636);
\node[anchor=south west,inner sep=0pt] at (-2.99364,-0.32364)
  {\includegraphics[width=9.19127cm]{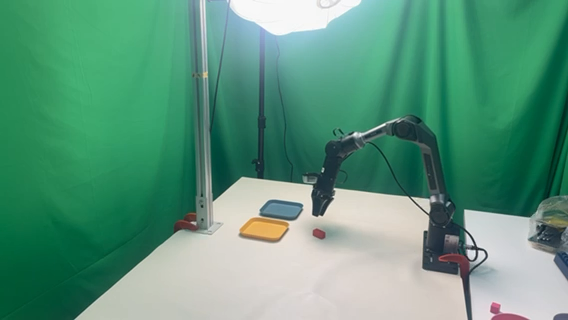}};
\end{scope}
\draw[rrTaskBorder,line width=.45pt] (0,0) rectangle (4.45000,3.23636);
\draw[rr ring] (2.18455,1.06800) circle (0.14564);
\node[anchor=north west,fill=white,rounded corners=1.5pt,inner sep=2pt] at (.10,3.13636)
  {\textcolor{rrMotion}{\textbf{3}}\quad Complete};
\node[anchor=north east,fill=white,rounded corners=1.5pt,inner sep=2pt] at (4.35000,3.13636) {6.00\,s};
\end{scope}
\end{tikzpicture}%
\endgroup%
}
  \caption{\textbf{Two real-robot tasks.}
  Frames from the task videos show apple placement and cuboid removal.
  Numbered keyframes indicate temporal order, with source-video times in seconds.
  Blue circles mark the manipulated objects, and arrows schematically indicate
  the direction of motion.}
  \label{fig:real-robot-tasks}
\end{figure}

\textbf{Training.}\quad
Each task has 100 demonstration trajectories. We train using two NVIDIA
A8000 GPUs for 6,000 optimizer updates per training run. The compared methods are Quantile Head and
$\pi_{0.5}$. Both methods use the same demonstration data, optimization
settings, and training budget. The $\pi_{0.5}$ baseline is fully
fine-tuned, including its VLM. Quantile Head keeps the VLM frozen and
trains the action expert, prompts, and output heads, as described in
Section~\ref{sec:action-features}. Quantile Head uses deterministic
median decoding at deployment.
At each replanning call, the policy supplies a ten-step action sequence
to the controller, which executes it before obtaining a new observation
and replanning.

\textbf{Evaluation conditions.}\quad
For each task, both methods are evaluated over the same ranges of initial
robot and scene configurations, using the same task-specific success
criteria and timeout limits. A trial is successful when the task goal is
achieved within its time limit; a trial that reaches the timeout without
meeting the success criterion is counted as a failure.

\textbf{Evaluation budget and aggregation.}\quad
For each method and task, we evaluate each trained model with a budget
of 100 trials.
We compute each model's task success rate as the percentage of successful
trials.
Table~\ref{tab:real-robot-results} reports per-task success rates; its
Average row gives the unweighted mean over the two tasks. The reported
$\Delta$ is the difference between the two methods in percentage points.
The video frames in Figure~\ref{fig:real-robot-tasks} illustrate task
execution and are not additional evaluation trials.

\subsection{Controlled two-demonstration experiment}
\label{sec:two-demo-distributions}

\textbf{Task and demonstration construction.}\quad
We use task 0 of LIBERO-Object with the instruction
``pick up the alphabet soup and place it in the basket.''
A scripted geometric controller generates two successful, complete
demonstrations, each containing 315 actions. Starting from the same
simulator state, the demonstrations take opposite 11\,cm lateral detours,
perpendicular to the initial approach direction, before aligning with
the object and completing grasping, transport, and placement.
Both use initial-state index 0, environment seed 20260922, and ten
settling steps. The initial external-camera image, wrist-camera image,
robot state, and simulator state are verified to be identical.
The instruction contains no left/right cue, so the shared initial
observation has two different demonstrated actions.
Each observation contains two $224\times224$ RGB images and an
eight-dimensional robot state; each action contains six motion commands
and one gripper command.
Figure~\ref{fig:two-demo-data-description} shows the recorded approach
paths, the first ten action commands, and representative observations
from these two training trajectories, with top-down renderings of the
saved states at maximum route separation.

\begin{figure}[!t]
  \centering
\begingroup
\definecolor{demoLeft}{HTML}{0072B2}
\definecolor{demoRight}{HTML}{D55E00}
\captionsetup[subfigure]{font=footnotesize,justification=raggedright,
  singlelinecheck=false,skip=3pt}
\setlength{\fboxsep}{0pt}
\setlength{\fboxrule}{0.6pt}
\centering

\begin{subfigure}[t]{0.48\linewidth}
  \centering
  \includegraphics[width=\linewidth]{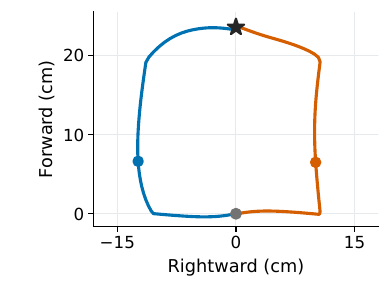}
  \caption{Measured approach paths.}
  \label{fig:two-demo-data-paths}
\end{subfigure}\hfill
\begin{subfigure}[t]{0.48\linewidth}
  \centering
  \includegraphics[width=\linewidth]{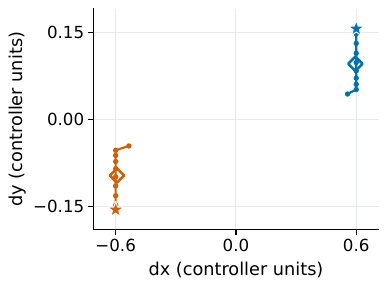}
  \caption{First ten action commands.}
  \label{fig:two-demo-data-actions}
\end{subfigure}

\medskip
\begin{subfigure}[t]{0.235\linewidth}
  \centering
  \fcolorbox{demoLeft}{white}{\includegraphics[width=\dimexpr\linewidth-2\fboxrule\relax]{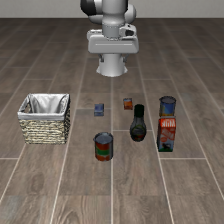}}
  \caption{\textcolor{demoLeft}{Left}: shared start.\newline $t=0$.}
  \label{fig:two-demo-data-left-start}
\end{subfigure}\hfill
\begin{subfigure}[t]{0.235\linewidth}
  \centering
  \fcolorbox{demoLeft}{white}{%
  \begin{tikzpicture}
    \node[anchor=south west,inner sep=0] (view) at (0,0)
      {\includegraphics[width=\dimexpr\linewidth-2\fboxrule\relax]{figures/two_demo_data/overhead/left.png}};
    \begin{scope}[x={(view.south east)},y={(view.north west)}]
      \clip (0,0) rectangle (1,1);
      \csname arxivOverlay@two_demo_data@left\endcsname
    \end{scope}
  \end{tikzpicture}%
}
  \caption{\textcolor{demoLeft}{Left}: top view.\newline $t=29$; $12.37$\,cm left.}
  \label{fig:two-demo-data-left-separated}
\end{subfigure}\hfill
\begin{subfigure}[t]{0.235\linewidth}
  \centering
  \fcolorbox{demoLeft}{white}{\includegraphics[width=\dimexpr\linewidth-2\fboxrule\relax]{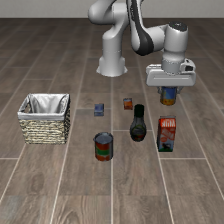}}
  \caption{\textcolor{demoLeft}{Left}: object lifted.\newline $t=120$.}
  \label{fig:two-demo-data-left-lifted}
\end{subfigure}\hfill
\begin{subfigure}[t]{0.235\linewidth}
  \centering
  \fcolorbox{demoLeft}{white}{\includegraphics[width=\dimexpr\linewidth-2\fboxrule\relax]{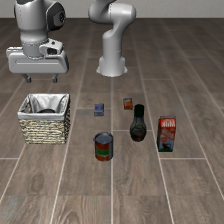}}
  \caption{\textcolor{demoLeft}{Left}: final success.\newline $t=315$.}
  \label{fig:two-demo-data-left-final}
\end{subfigure}

\medskip
\begin{subfigure}[t]{0.235\linewidth}
  \centering
  \fcolorbox{demoRight}{white}{\includegraphics[width=\dimexpr\linewidth-2\fboxrule\relax]{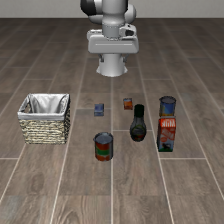}}
  \caption{\textcolor{demoRight}{Right}: shared start.\newline $t=0$.}
  \label{fig:two-demo-data-right-start}
\end{subfigure}\hfill
\begin{subfigure}[t]{0.235\linewidth}
  \centering
  \fcolorbox{demoRight}{white}{%
  \begin{tikzpicture}
    \node[anchor=south west,inner sep=0] (view) at (0,0)
      {\includegraphics[width=\dimexpr\linewidth-2\fboxrule\relax]{figures/two_demo_data/overhead/right.png}};
    \begin{scope}[x={(view.south east)},y={(view.north west)}]
      \clip (0,0) rectangle (1,1);
      \csname arxivOverlay@two_demo_data@right\endcsname
    \end{scope}
  \end{tikzpicture}%
}
  \caption{\textcolor{demoRight}{Right}: top view.\newline $t=29$; $10.07$\,cm right.}
  \label{fig:two-demo-data-right-separated}
\end{subfigure}\hfill
\begin{subfigure}[t]{0.235\linewidth}
  \centering
  \fcolorbox{demoRight}{white}{\includegraphics[width=\dimexpr\linewidth-2\fboxrule\relax]{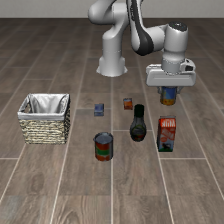}}
  \caption{\textcolor{demoRight}{Right}: object lifted.\newline $t=120$.}
  \label{fig:two-demo-data-right-lifted}
\end{subfigure}\hfill
\begin{subfigure}[t]{0.235\linewidth}
  \centering
  \fcolorbox{demoRight}{white}{\includegraphics[width=\dimexpr\linewidth-2\fboxrule\relax]{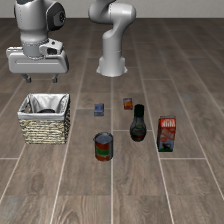}}
  \caption{\textcolor{demoRight}{Right}: final success.\newline $t=315$.}
  \label{fig:two-demo-data-right-final}
\end{subfigure}

\caption{\textbf{Two training demonstrations with the same initial observation.}
Blue and orange denote left and right detours under the shared instruction
``pick up the alphabet soup and place it in the basket.''
(a) Measured end-effector paths for $t=0,\ldots,65$, resolved into rightward
and forward displacement relative to the initial approach direction.
The gray circle marks the shared start, the black star the initial target
position, and colored dots the largest simultaneous XY separation
(22.45\,cm at $t=29$).
(b) Raw controller commands: stars mark the first actions and hollow
diamonds the ten-step means.
(d), (h) Top-down renderings of the recorded $t=29$ states, with screen
right aligned to the rightward axis in (a). Dashed lines mark the shared
start--target centerline; colored rings mark the projected end effector
and arrows show its lateral offset in the restored state.
All other frames are original training-camera
observations. Here $t$ counts executed actions.
Both 315-step demonstrations end in task success.}
\label{fig:two-demo-data-description}
\endgroup
\end{figure}

\textbf{Training samples and shared initialization.}\quad
Each training example pairs a pre-action observation with the next ten
actions from the same demonstration, giving 630 available start indices.
We mask steps beyond the end of
a trajectory and unused action dimensions. Each batch contains 16
examples: two copies of the initial observation from each demonstration
and six uniformly sampled start indices from each demonstration.
Thus, 25\% of the batch slots are reserved for the ambiguous start;
the remaining 75\% samples both trajectories equally, with replacement.
All four variants use the same batch sequence, instruction, and native
state/action preprocessing from the same pretrained $\pi_{0.5}$ checkpoint.
They also share the initialization of the action expert, the freshly
initialized linear output projection, and 16 trainable prompt tokens.
The VLM is frozen; each variant trains its action expert and other
action-side modules, including the prompt tokens and output head.
Quantile Head additionally trains its
gap projection, so the trainable parameter counts are not identical.

\textbf{Objectives and optimization.}\quad
Quantile Head uses ordered quantiles at 21 equally spaced levels from
0.025 to 0.975 and averages their pinball losses.
The $L_1$ and $L_2$ heads use absolute and squared action errors,
respectively. For Flow Matching, we draw standard-normal noise
$\epsilon$, set $x_t=(1-t)a+t\epsilon$, and supervise the velocity
$\epsilon-a$ with squared error, where
$t=0.001+0.999b$ and $b\sim\operatorname{Beta}(1.5,1)$.
All losses average over valid coordinates within each example and then
over the batch. This controlled experiment uses padding masks only,
without random coordinate-label dropping.
We train each variant for 6,000 updates
on one NVIDIA RTX 5090 GPU in float32, with AMP and TF32 disabled.
AdamW uses $(\beta_1,\beta_2)=(0.9,0.95)$, $\epsilon=10^{-8}$,
and gradient clipping at norm 1.
The peak learning rate is $10^{-4}$ for prompts and
$2.5\times10^{-5}$ for other trainable parameters, with weight decay
0 and 0.01, respectively. Both rates use 200 linear warmup updates
followed by cosine decay toward 10\% of their peak values.
Frozen VLM prefixes are cached during training; cached and complete
forward passes are checked for output and gradient agreement.
The frozen parameters are also verified to remain unchanged.

\textbf{Fixed-observation distribution sampling.}\quad
For one illustrated training run, we compare 1,000 action samples per
head at the shared initial observation. Each sample contains ten prediction steps;
Figure~\ref{fig:four-head-six-dimensions} shows the first step only.
These 1,000 draws are distinct from the 100 closed-loop trials per decoder
visualized in Figure~\ref{fig:four-head-dx-distribution}.
Quantile Head draws one $z\sim\mathcal{N}(0,1)$ per sample and uses
$\tau=\Phi(z)$ for all six motion coordinates and all ten prediction
steps, with the gripper at its median. Here, $\Phi$ denotes the standard
normal CDF. Actions are decoded from the exported quantile function
using piecewise-linear interpolation and linear tail extrapolation
beyond its $[0.025,0.975]$ knots.
This fixed-observation decoding does not require a new network forward
pass for each noise draw. Flow Matching uses ten integration steps;
its samples and the deterministic $L_1$/$L_2$ outputs use the same
observation and instruction. No noise is added after decoding.
These draws characterize conditional output distributions, rather than
variation across training runs or closed-loop episodes.
The shared quantile rank specifies a coupling across coordinates;
the marginal plots alone do not establish learned joint dependence.

\begin{figure}[!t]
  \centering
  \includegraphics[width=\linewidth]{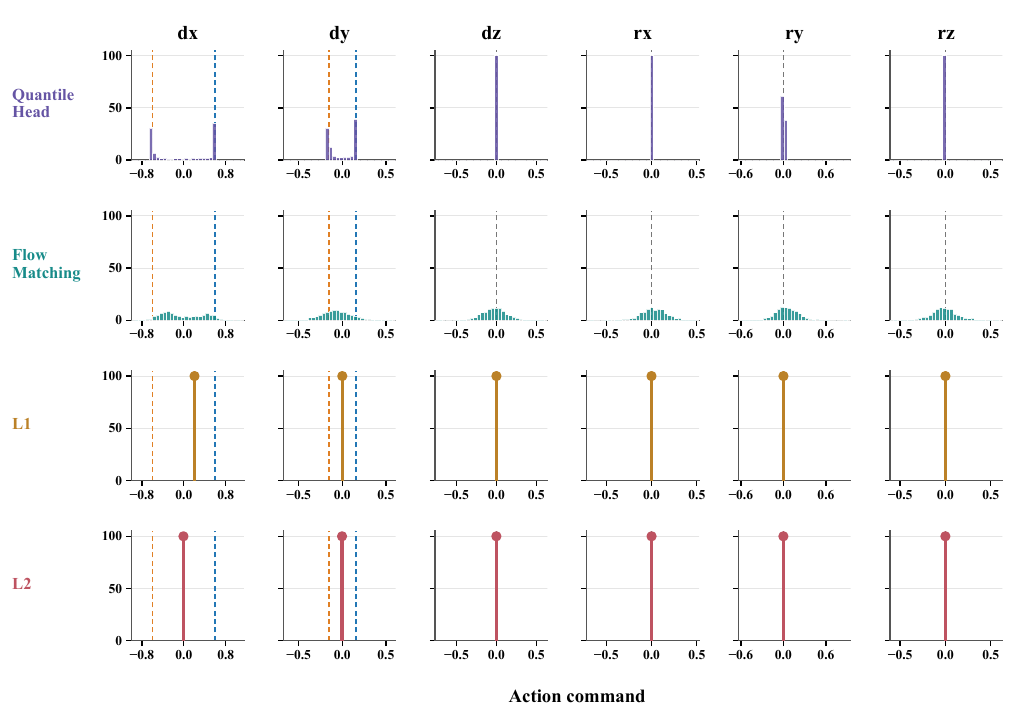}
  \caption{\textbf{Action distributions learned across six motion dimensions.}
  Rows compare the four heads and columns show first-step action commands
  from the no-obstacle setting in Figure~\ref{fig:four-head-dx-distribution}.
  Every panel uses 1,000 samples from one illustrated training run; bins and axes are shared across heads
  within each column. The vertical axis is probability mass per bin
  (\%); stems denote point masses. Blue and orange dashed lines mark
  the two demonstrations, with a gray line where their values coincide.
  Only $d_x$ and $d_y$ have distinct demonstration targets at this step;
  the other four targets are zero. A narrow sampled distribution can
  occupy a single bin and need not be a point mass.}
  \label{fig:four-head-six-dimensions}
\end{figure}

\textbf{Closed-loop sampling settings.}\quad
Using the same trained checkpoints, we initially evaluate seven settings:
three stochastic Quantile decoders, Quantile median, Flow Matching,
$L_1$, and $L_2$.
For each stochastic Quantile decoder, the latent for motion coordinate
$d$ in replanning chunk $k$ follows
\begin{equation}
  z_{0,d}\sim\mathcal{N}(0,1),\qquad
  z_{k,d}=0.9z_{k-1,d}+\sqrt{1-0.9^2}\,\epsilon_{k,d},
  \qquad \epsilon_{k,d}\sim\mathcal{N}(0,1).
  \label{eq:two-demo-temporal-noise}
\end{equation}
Innovations are independent across coordinates and episodes.
The native decoder uses $\tau_{k,d}=0.1+0.8\Phi(0.35z_{k,d})$;
the independent-rank decoder uses $\tau_{k,d}=\Phi(z_{k,d})$;
the shared-rank decoder uses $\tau_{k,d}=\Phi(z_{k,1})$
for all six motion coordinates.
Each rank is held constant over the ten prediction steps in its chunk,
and the gripper always uses rank 0.5.
The three stochastic decoders therefore share the same temporal
correlation; ``independent'' refers to ranks across coordinates.
All use the quantile interpolation and tail rule described above.
Median decoding sets every rank to 0.5. Flow~Matching draws fresh
Gaussian noise at each replan and uses ten Euler integration steps.
The $L_1$ and $L_2$ heads return deterministic commands.

\begin{table}[!t]
  \centering
  \caption{\textbf{Closed-loop success rates in the two-demonstration experiment.}
  Evaluation uses the same training initial state.
  These rates measure completion at this fixed state.}
  \label{tab:two-demo-closed-loop}
  \small
  \begin{tabular}{lr}
    \toprule
    Setting & Success rate (\%) \\
    \midrule
    Quantile: native ranks & 94.0 \\
    Quantile: independent ranks & 87.0 \\
    Quantile: shared ranks & 92.0 \\
    Quantile: median & \textbf{100.0} \\
    Flow Matching (10 integration steps) & 3.0 \\
    $L_1$ & \textbf{100.0} \\
    $L_2$ & \textbf{100.0} \\
    \bottomrule
  \end{tabular}
\end{table}

\begin{figure}[!t]
  \centering
  \captionsetup[subfigure]{justification=raggedright,singlelinecheck=false}
  \begin{subfigure}[t]{0.31\linewidth}
    \centering
    \includegraphics[width=\linewidth]{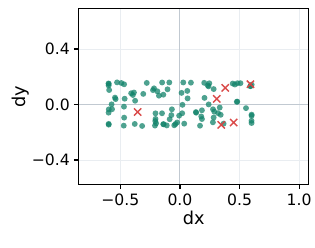}
    \caption{Quantile: native ranks.\newline 94\% success.}
    \label{fig:two-demo-xy-native}
  \end{subfigure}\hfill
  \begin{subfigure}[t]{0.31\linewidth}
    \centering
    \includegraphics[width=\linewidth]{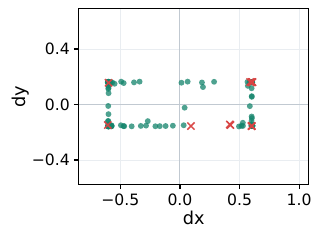}
    \caption{Quantile: independent ranks.\newline 87\% success.}
    \label{fig:two-demo-xy-independent}
  \end{subfigure}\hfill
  \begin{subfigure}[t]{0.31\linewidth}
    \centering
    \includegraphics[width=\linewidth]{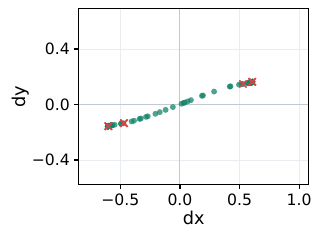}
    \caption{Quantile: shared ranks.\newline 92\% success.}
    \label{fig:two-demo-xy-shared}
  \end{subfigure}

  \medskip
  \begin{subfigure}[t]{0.31\linewidth}
    \centering
    \includegraphics[width=\linewidth]{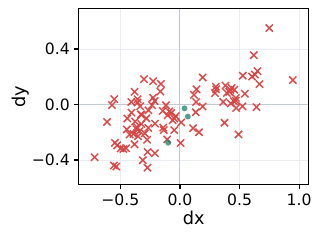}
    \caption{Flow Matching.\newline 3\% success.}
    \label{fig:two-demo-xy-fm}
  \end{subfigure}\hfill
  \begin{subfigure}[t]{0.31\linewidth}
    \centering
    \includegraphics[width=\linewidth]{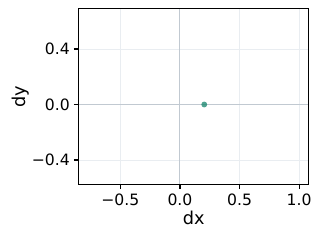}
    \caption{$L_1$.\newline 100\% success.}
    \label{fig:two-demo-xy-l1}
  \end{subfigure}\hfill
  \begin{subfigure}[t]{0.31\linewidth}
    \centering
    \includegraphics[width=\linewidth]{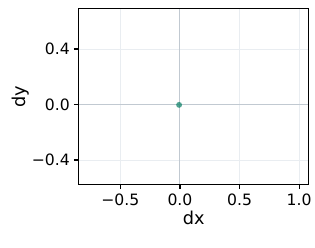}
    \caption{$L_2$.\newline 100\% success.}
    \label{fig:two-demo-xy-l2}
  \end{subfigure}

  \medskip
  \begin{subfigure}[t]{0.31\linewidth}
    \centering
    \includegraphics[width=\linewidth]{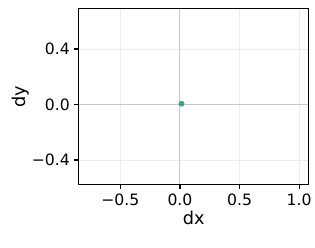}
    \caption{Quantile: median.\newline 100\% success.}
    \label{fig:two-demo-xy-median}
  \end{subfigure}
  \caption{\textbf{Initial action commands and closed-loop outcomes.}
  Points show one training run's 100 episodes per decoder; subcaptions
  report success rates.
  Each point represents one episode's first executed action. Green circles indicate eventual success; red crosses indicate failure.
  All panels use the same axes and, within this experiment, the same
  initial observation. Axis limits also match
  Figure~\ref{fig:two-demo-obstacle-xy} for comparison with the obstacle setting.
  The 100 points in each of (e), (f), and (g) coincide exactly; no
  coordinate jitter is added. Colors label whole episodes, rather than
  the correctness of individual action coordinates.}
  \label{fig:two-demo-closed-loop-xy}
\end{figure}

\textbf{Closed-loop evaluation protocol.}\quad
We use an evaluation budget of 100 complete simulator episodes per
setting. Every episode resets to the same training initial state and
environment seed used to collect the demonstrations, with the same
images, robot state, and instruction at the start.
Random generators and temporal latents are maintained separately for
each episode. The policy predicts and executes ten actions before
observing again. Evaluation stops on environment termination or
truncation, with a maximum of 400 executed actions.
Success means that the environment's success flag becomes true during
the episode. Every recorded success also has a successful final state.
Evaluation performs no weight updates.
Inference batches eight separate environments; the first two predicted
chunks and decoded actions are checked against single-example inference
with the same inputs and noise.
The illustrated run records executed actions, states, predictions,
noise, success flags, and video for checking the plotted trajectories.

\textbf{Action coordinates and interpretation.}\quad
Figure~\ref{fig:two-demo-closed-loop-xy} places each episode's first
executed action from one illustrated training run at its $(d_x,d_y)$ coordinates and colors it by the
final episode outcome. These coordinates are environment action
commands, not measured Cartesian displacements in meters.
We use the first action because its observation is shared across all
episodes; later actions may be conditioned on different states.
The successful and failed Quantile episodes occupy overlapping regions,
so these coordinates do not define a success/failure threshold:
other action dimensions and later sampled actions also affect completion.
As shown in Table~\ref{tab:two-demo-closed-loop}, the three sampled
Quantile decoders achieve \textbf{87--94\% success}, while Flow Matching
achieves \textbf{3\%}. Median, $L_1$, and $L_2$ each achieve \textbf{100\%}
with identical repeated action sequences within each trained model.
Repeated executions at this fixed state are not independent task instances.
Moreover, high task success alone does not establish coverage of both
demonstrated approach routes.

\textbf{Supplemental density-weighted sampling.}\quad
We additionally evaluate the same no-obstacle Quantile models with the
six-motion-coordinate density-weighted decoder defined
in Equations~(\ref{eq:obstacle-density-score})--(\ref{eq:obstacle-density-probability}).
It squares the mean marginal density over six motion coordinates and ten
prediction steps, samples one shared rank, and fixes the gripper at its median.
The initial state, instruction, 400-action limit, ten-action execution
chunks, AR coefficient 0.9, and evaluation budget match the original
shared-rank evaluation. No weights are updated.
The success rate is \textbf{95\%}, compared with \textbf{92\%} for uniform shared ranks and
\textbf{100\%} for median decoding.
Figure~\ref{fig:four-head-dx-distribution} illustrates 100 first-action
points and their episode outcomes from one training run.

\subsection{Two demonstrations with a central obstacle}
\label{sec:two-demo-obstacle}

Appendix~\ref{sec:two-demo-distributions} permits deterministic policies
to complete the task despite the two distinct demonstrated routes.
We add a physical obstruction to examine completion in a setting where
the prescribed direct approach fails.

\begin{figure}[!t]
  \centering
\begingroup
\definecolor{demoLeft}{HTML}{0072B2}
\definecolor{demoRight}{HTML}{D55E00}
\captionsetup[subfigure]{font=footnotesize,justification=raggedright,
  singlelinecheck=false,skip=3pt}
\setlength{\fboxsep}{0pt}
\setlength{\fboxrule}{0.6pt}
\centering

\begin{subfigure}[t]{0.48\linewidth}
  \centering
  \includegraphics[width=\linewidth]{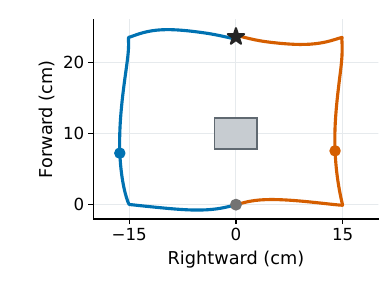}
  \caption{Measured approach paths.}
  \label{fig:two-demo-obstacle-data-paths}
\end{subfigure}\hfill
\begin{subfigure}[t]{0.48\linewidth}
  \centering
  \includegraphics[width=\linewidth]{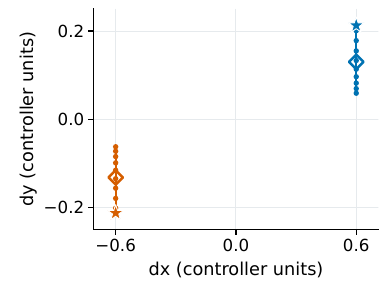}
  \caption{First ten action commands.}
  \label{fig:two-demo-obstacle-data-actions}
\end{subfigure}

\medskip
\begin{subfigure}[t]{0.235\linewidth}
  \centering
  \fcolorbox{demoLeft}{white}{\includegraphics[width=\dimexpr\linewidth-2\fboxrule\relax]{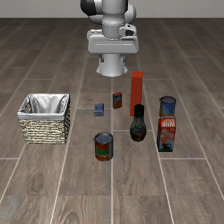}}
  \caption{\textcolor{demoLeft}{Left}: shared start.\newline $t=0$.}
  \label{fig:two-demo-obstacle-data-left-start}
\end{subfigure}\hfill
\begin{subfigure}[t]{0.235\linewidth}
  \centering
  \fcolorbox{demoLeft}{white}{%
  \begin{tikzpicture}
    \node[anchor=south west,inner sep=0] (view) at (0,0)
      {\includegraphics[width=\dimexpr\linewidth-2\fboxrule\relax]{figures/two_demo_obstacle_data/overhead/left.png}};
    \begin{scope}[x={(view.south east)},y={(view.north west)}]
      \clip (0,0) rectangle (1,1);
      \csname arxivOverlay@two_demo_obstacle_data@left\endcsname
    \end{scope}
  \end{tikzpicture}%
}
  \caption{\textcolor{demoLeft}{Left}: top view.\newline $t=40$; $16.32$\,cm left.}
  \label{fig:two-demo-obstacle-data-left-separated}
\end{subfigure}\hfill
\begin{subfigure}[t]{0.235\linewidth}
  \centering
  \fcolorbox{demoLeft}{white}{\includegraphics[width=\dimexpr\linewidth-2\fboxrule\relax]{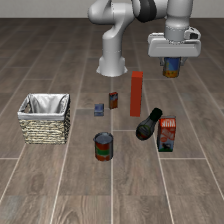}}
  \caption{\textcolor{demoLeft}{Left}: object lifted.\newline $t=180$.}
  \label{fig:two-demo-obstacle-data-left-lifted}
\end{subfigure}\hfill
\begin{subfigure}[t]{0.235\linewidth}
  \centering
  \fcolorbox{demoLeft}{white}{\includegraphics[width=\dimexpr\linewidth-2\fboxrule\relax]{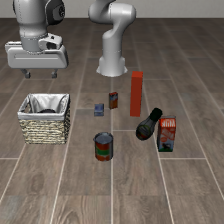}}
  \caption{\textcolor{demoLeft}{Left}: final success.\newline $t=430$.}
  \label{fig:two-demo-obstacle-data-left-final}
\end{subfigure}

\medskip
\begin{subfigure}[t]{0.235\linewidth}
  \centering
  \fcolorbox{demoRight}{white}{\includegraphics[width=\dimexpr\linewidth-2\fboxrule\relax]{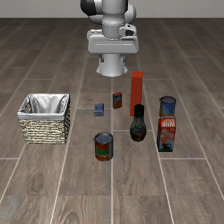}}
  \caption{\textcolor{demoRight}{Right}: shared start.\newline $t=0$.}
  \label{fig:two-demo-obstacle-data-right-start}
\end{subfigure}\hfill
\begin{subfigure}[t]{0.235\linewidth}
  \centering
  \fcolorbox{demoRight}{white}{%
  \begin{tikzpicture}
    \node[anchor=south west,inner sep=0] (view) at (0,0)
      {\includegraphics[width=\dimexpr\linewidth-2\fboxrule\relax]{figures/two_demo_obstacle_data/overhead/right.png}};
    \begin{scope}[x={(view.south east)},y={(view.north west)}]
      \clip (0,0) rectangle (1,1);
      \csname arxivOverlay@two_demo_obstacle_data@right\endcsname
    \end{scope}
  \end{tikzpicture}%
}
  \caption{\textcolor{demoRight}{Right}: top view.\newline $t=40$; $13.92$\,cm right.}
  \label{fig:two-demo-obstacle-data-right-separated}
\end{subfigure}\hfill
\begin{subfigure}[t]{0.235\linewidth}
  \centering
  \fcolorbox{demoRight}{white}{\includegraphics[width=\dimexpr\linewidth-2\fboxrule\relax]{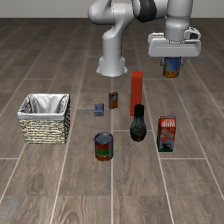}}
  \caption{\textcolor{demoRight}{Right}: object lifted.\newline $t=180$.}
  \label{fig:two-demo-obstacle-data-right-lifted}
\end{subfigure}\hfill
\begin{subfigure}[t]{0.235\linewidth}
  \centering
  \fcolorbox{demoRight}{white}{\includegraphics[width=\dimexpr\linewidth-2\fboxrule\relax]{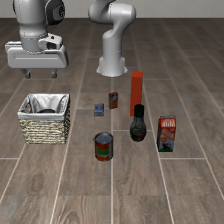}}
  \caption{\textcolor{demoRight}{Right}: final success.\newline $t=430$.}
  \label{fig:two-demo-obstacle-data-right-final}
\end{subfigure}

\caption{\textbf{Two successful training demonstrations around a central obstacle.}
Blue and orange denote left and right detours from the same initial
observation under the shared instruction ``pick up the alphabet soup and
place it in the basket.''
(a) Measured end-effector approach paths for $t=0,\ldots,105$; the shaded
rectangle is the footprint of the $6\times4.4\times19$\,cm obstacle.
The gray circle marks the shared start, the black star the initial target
position, and colored dots the largest simultaneous XY separation
(30.24\,cm at $t=40$).
(b) Raw controller commands: stars mark the first actions and hollow
diamonds the ten-step means.
(d), (h) True overhead renderings of the saved $t=40$ states, with screen
right aligned to the rightward axis in (a). Dashed centerlines and colored
end-effector rings and offset arrows distinguish the two routes; the red
dashed outline marks the obstacle's projected top face, including occluded
edges. All other frames are original training-camera observations.
Here $t$ counts executed actions. Both 430-action demonstrations finish
successfully without recorded obstacle contact and are the only training
trajectories for this experiment.}
\label{fig:two-demo-obstacle-data-description}
\endgroup
\end{figure}

\textbf{Task and demonstration construction.}\quad
We extend the task in Appendix~\ref{sec:two-demo-distributions} with a
fixed, visible, collidable box on the approach to the alphabet soup.
The box measures $6\times4.4\times19$\,cm and is centered 10\,cm ahead
of the initial end-effector position along the tabletop approach direction.
The task instruction remains ``pick up the alphabet soup and place it
in the basket,'' with no left/right cue. Initial-state index 0 and
environment seed 20260922 are retained, and both demonstrations are
collected anew in this modified scene. Their initial external and wrist
images, robot states, and simulator states are identical to each other.
The original and modified scenes have different rendered observations
and full simulator states, so this equality applies within each experiment.

The scripted controller generates one complete left-detour demonstration
and one complete right-detour demonstration. Each contains 430 actions,
uses a nominal 15\,cm lateral route offset, and completes the task without
recorded obstacle contact. In comparison, the demonstrations in
Appendix~\ref{sec:two-demo-distributions} contain 315 actions each and use
11\,cm offsets in a scene without the added box. The approach height is
reduced from 18 to 12\,cm, and transport after grasping also follows a
lateral detour around the box. A separate straight-approach
control fails to complete the obstacle task and records contact on 179
of its 475 executed steps. This failed control is excluded from training.
It verifies that this particular direct approach is obstructed;
the environment retains the original object-in-basket success criterion.
Obstacle contact alone does not terminate an episode or label it a failure.

Figure~\ref{fig:two-demo-obstacle-data-description} documents the two
training samples in the same format as
Figure~\ref{fig:two-demo-data-description}. The measured paths and aligned
overhead views show the two approaches passing on opposite sides of the
central box; both demonstrations then lift the object and complete placement.

\textbf{Training samples and optimization.}\quad
We train all four variants anew on only the two obstacle demonstrations,
giving 860 available pre-action start indices. The observation and action
formats, ten-action training targets, padding masks, balanced batch of 16,
and forced sampling of the shared start follow
Appendix~\ref{sec:two-demo-distributions}. All four heads start from the
same pretrained $\pi_{0.5}$ checkpoint and share the action-side and
16-token prompt initialization and training batch sequence within this
experiment. The VLM remains frozen; the action expert, other action-side
modules, output head, and prompts are trained for 6,000 updates.
Objectives, optimizer settings, learning-rate schedules,
and float32 precision also follow the earlier experiment. The changed
demonstrations produce a new training cache and newly fitted checkpoints.
Cached/full forward and gradient agreement and unchanged frozen weights
are verified for every head.

\textbf{Fixed-observation action distributions.}\quad
Figure~\ref{fig:obstacle-four-head-six-dimensions} extends the
four-head comparison in Figure~\ref{fig:four-head-six-dimensions} to
the obstacle demonstrations. Using the checkpoints from one illustrated training run,
we draw 1,000 action chunks per head at their common training-start
observation and instruction. Each chunk contains ten actions; the
figure shows the first action's six motion coordinates.
Quantile Head draws $z\sim\mathcal{N}(0,1)$ independently for each
sample and uses $\tau=\Phi(z)$ across all six coordinates and all ten
prediction steps, with the gripper fixed at its median. We decode the
exported quantile function with the same piecewise-linear interpolation
and tail extrapolation as in Appendix~\ref{sec:two-demo-distributions}.
Flow Matching uses fresh Gaussian noise and ten integration steps;
$L_1$ and $L_2$ are evaluated repeatedly at the same observation.
The frozen VLM prefix is cached, and no parameters are updated.

Bins and axes are shared across heads within each dimension. Blue and
orange reference lines denote the two demonstrations; gray lines mark
coincident targets. At this first step, the targets differ in $d_x$ and
$d_y$, share $d_z=-0.6$, and have zero rotational commands. Thus, the
vertical approach command differs from the zero $d_z$ target in
Figure~\ref{fig:four-head-six-dimensions}. These plots characterize
conditional output distributions at one observation. Task-completion
accuracy is evaluated separately below.

Quantile samples have peaks near both demonstrated $d_x$ and $d_y$
values, with some mass remaining between the peaks. In these coordinates,
$L_1$ and $L_2$ each produce a single value between the two targets,
while Flow Matching spreads its mass over wider action ranges.

\begin{figure}[!t]
  \centering
  \begingroup
  \setlength{\tabcolsep}{0pt}
  \newcommand{\obstacleDistributionHead}[2]{%
    \raisebox{-0.5\height}{\parbox{0.10\linewidth}{\raggedright\footnotesize\color[HTML]{#1}\bfseries #2}}}
  \begin{tabular}{@{}c@{}cccccc@{}}
    & $\boldsymbol{d_x}$ & $\boldsymbol{d_y}$ & $\boldsymbol{d_z}$ & $\boldsymbol{r_x}$ & $\boldsymbol{r_y}$ & $\boldsymbol{r_z}$ \\
    \obstacleDistributionHead{6554A4}{Quantile\\Head}
      & %
    \raisebox{-0.5\height}{%
      \begin{subfigure}[c]{0.147\linewidth}
        \includegraphics[width=\linewidth]{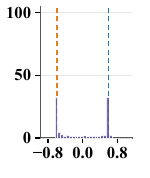}%
      \end{subfigure}}
      & %
    \raisebox{-0.5\height}{%
      \begin{subfigure}[c]{0.147\linewidth}
        \includegraphics[width=\linewidth]{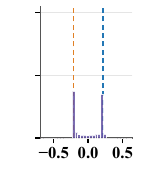}%
      \end{subfigure}}
      & %
    \raisebox{-0.5\height}{%
      \begin{subfigure}[c]{0.147\linewidth}
        \includegraphics[width=\linewidth]{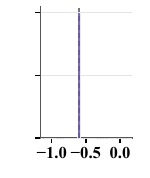}%
      \end{subfigure}}
      & %
    \raisebox{-0.5\height}{%
      \begin{subfigure}[c]{0.147\linewidth}
        \includegraphics[width=\linewidth]{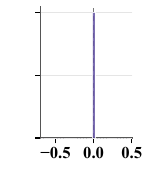}%
      \end{subfigure}}
      & %
    \raisebox{-0.5\height}{%
      \begin{subfigure}[c]{0.147\linewidth}
        \includegraphics[width=\linewidth]{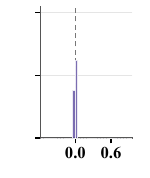}%
      \end{subfigure}}
      & %
    \raisebox{-0.5\height}{%
      \begin{subfigure}[c]{0.147\linewidth}
        \includegraphics[width=\linewidth]{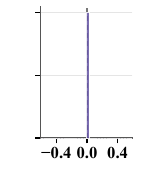}%
      \end{subfigure}} \\[2pt]
    \obstacleDistributionHead{178B89}{Flow\\Matching}
      & %
    \raisebox{-0.5\height}{%
      \begin{subfigure}[c]{0.147\linewidth}
        \includegraphics[width=\linewidth]{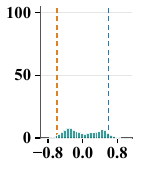}%
      \end{subfigure}}
      & %
    \raisebox{-0.5\height}{%
      \begin{subfigure}[c]{0.147\linewidth}
        \includegraphics[width=\linewidth]{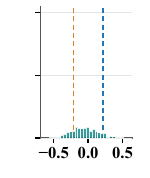}%
      \end{subfigure}}
      & %
    \raisebox{-0.5\height}{%
      \begin{subfigure}[c]{0.147\linewidth}
        \includegraphics[width=\linewidth]{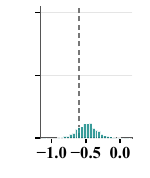}%
      \end{subfigure}}
      & %
    \raisebox{-0.5\height}{%
      \begin{subfigure}[c]{0.147\linewidth}
        \includegraphics[width=\linewidth]{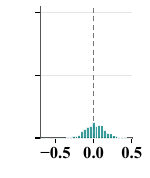}%
      \end{subfigure}}
      & %
    \raisebox{-0.5\height}{%
      \begin{subfigure}[c]{0.147\linewidth}
        \includegraphics[width=\linewidth]{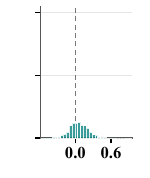}%
      \end{subfigure}}
      & %
    \raisebox{-0.5\height}{%
      \begin{subfigure}[c]{0.147\linewidth}
        \includegraphics[width=\linewidth]{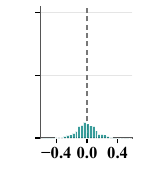}%
      \end{subfigure}} \\[2pt]
    \obstacleDistributionHead{BB8127}{$\boldsymbol{L_1}$}
      & %
    \raisebox{-0.5\height}{%
      \begin{subfigure}[c]{0.147\linewidth}
        \includegraphics[width=\linewidth]{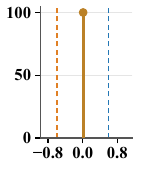}%
      \end{subfigure}}
      & %
    \raisebox{-0.5\height}{%
      \begin{subfigure}[c]{0.147\linewidth}
        \includegraphics[width=\linewidth]{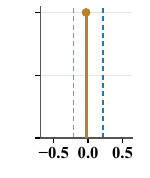}%
      \end{subfigure}}
      & %
    \raisebox{-0.5\height}{%
      \begin{subfigure}[c]{0.147\linewidth}
        \includegraphics[width=\linewidth]{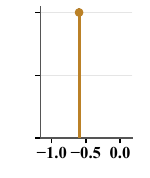}%
      \end{subfigure}}
      & %
    \raisebox{-0.5\height}{%
      \begin{subfigure}[c]{0.147\linewidth}
        \includegraphics[width=\linewidth]{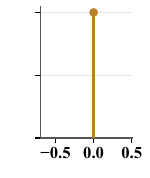}%
      \end{subfigure}}
      & %
    \raisebox{-0.5\height}{%
      \begin{subfigure}[c]{0.147\linewidth}
        \includegraphics[width=\linewidth]{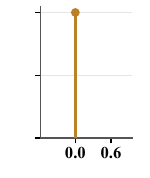}%
      \end{subfigure}}
      & %
    \raisebox{-0.5\height}{%
      \begin{subfigure}[c]{0.147\linewidth}
        \includegraphics[width=\linewidth]{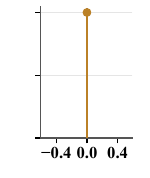}%
      \end{subfigure}} \\[2pt]
    \obstacleDistributionHead{BE5361}{$\boldsymbol{L_2}$}
      & %
    \raisebox{-0.5\height}{%
      \begin{subfigure}[c]{0.147\linewidth}
        \includegraphics[width=\linewidth]{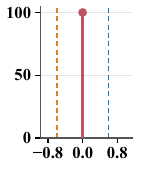}%
      \end{subfigure}}
      & %
    \raisebox{-0.5\height}{%
      \begin{subfigure}[c]{0.147\linewidth}
        \includegraphics[width=\linewidth]{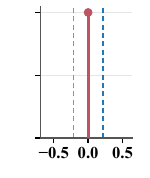}%
      \end{subfigure}}
      & %
    \raisebox{-0.5\height}{%
      \begin{subfigure}[c]{0.147\linewidth}
        \includegraphics[width=\linewidth]{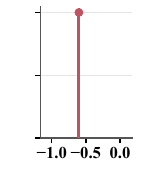}%
      \end{subfigure}}
      & %
    \raisebox{-0.5\height}{%
      \begin{subfigure}[c]{0.147\linewidth}
        \includegraphics[width=\linewidth]{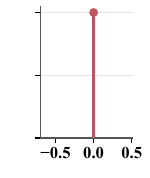}%
      \end{subfigure}}
      & %
    \raisebox{-0.5\height}{%
      \begin{subfigure}[c]{0.147\linewidth}
        \includegraphics[width=\linewidth]{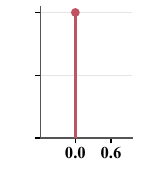}%
      \end{subfigure}}
      & %
    \raisebox{-0.5\height}{%
      \begin{subfigure}[c]{0.147\linewidth}
        \includegraphics[width=\linewidth]{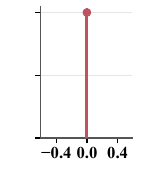}%
      \end{subfigure}}
  \end{tabular}
  \endgroup
  \caption{\textbf{Conditional action distributions with a central obstacle.}
  Rows compare four heads; columns show six motion coordinates of the first predicted action,
  using 1,000 samples per head from one illustrated training run at a shared initial observation.
  Each column shares 32 bins and axes. The vertical axis gives probability mass per bin (\%)
  on a 0--105\% scale. Stems denote exactly constant outputs; narrow distributions remain histograms.
  Blue and orange dashed lines mark the two demonstrations, with gray where references coincide:
  $d_z=-0.6$ and all rotation targets zero.
  Quantile Head uses one shared rank $\tau=\Phi(z)$ across motion coordinates;
  Flow Matching uses ten integration steps.
  These marginal distributions do not establish learned joint dependence or measure closed-loop success.}
  \label{fig:obstacle-four-head-six-dimensions}
\end{figure}

\textbf{Closed-loop sampling and evaluation.}\quad
We compare the same seven decoders as in
Appendix~\ref{sec:two-demo-distributions}. Stochastic Quantile latents use
the AR coefficient 0.9 in \eqref{eq:two-demo-temporal-noise}.
Native ranks use $0.1+0.8\Phi(0.35z_d)$, independent ranks use
$\Phi(z_d)$, and shared ranks use $\Phi(z_1)$ across all six motion
coordinates. The gripper uses its median, and each rank is held over the
ten actions in a chunk. Quantile median, $L_1$, and $L_2$ are deterministic;
Flow Matching uses fresh Gaussian noise and ten Euler integration steps.
Every episode starts in the obstacle training state and predicts and
executes ten actions before replanning. The action budget increases from
400 to 600 to accommodate the longer demonstrations. Each decoder has
an evaluation budget of 100 episodes. In the illustrated run, median actions
and simulator trajectories are identical across repeated executions.
The illustrated trajectories are checked against their recorded success
flags, executed actions, resets, noise, and obstacle contacts.

\textbf{Task success rates.}\quad
Table~\ref{tab:two-demo-obstacle-success} reports task-completion accuracy,
defined as the percentage of completed episodes in which the original
environment success flag becomes true. All recorded successes also have
successful final states. Among these seven matched decoders in the obstacle
setting, shared-rank Quantile sampling attains the highest observed success rate, \textbf{6.0\%},
followed by independent ranks at \textbf{4.0\%} and native ranks
at \textbf{1.0\%}. Flow Matching, $L_1$, $L_2$, and Quantile median each
achieve \textbf{0.0\%}. Thus, the deterministic decoders that achieved
100\% in the earlier setting fail in this modified setting, while
stochastic Quantile decoding retains a small number of successful episodes.
The highest rate within this seven-decoder comparison is only 6\%; it does not establish
reliable obstacle avoidance. Because the scene, demonstrations, fitted
checkpoints, and action budget all change, the cross-experiment difference
is a comparison of two controlled settings rather than an isolated
estimate of the obstacle's effect. Each setting uses one training initial
state; repeated deterministic episodes are not
independent task instances.

\begin{table}[!t]
  \centering
  \caption{\textbf{Task success rates in the original and obstacle settings.}
  Each head is trained for 6,000 updates on its setting's two demonstrations. These rates
  measure completion at a fixed training initial state.}
  \label{tab:two-demo-obstacle-success}
  \small
  \setlength{\tabcolsep}{5pt}
  \begin{tabular}{lrr}
    \toprule
    & \multicolumn{2}{c}{Success rate (\%)} \\
    \cmidrule(lr){2-3}
    Setting & Earlier (\ref{sec:two-demo-distributions}) & Obstacle \\
    \midrule
    Quantile: native ranks & 94.0 & 1.0 \\
    Quantile: independent ranks & 87.0 & 4.0 \\
    Quantile: shared ranks & 92.0 & \textbf{6.0} \\
    Quantile: median & \textbf{100.0} & 0.0 \\
    Flow Matching (10 steps) & 3.0 & 0.0 \\
    $L_1$ & \textbf{100.0} & 0.0 \\
    $L_2$ & \textbf{100.0} & 0.0 \\
    \bottomrule
  \end{tabular}
\end{table}

\begin{figure}[!t]
  \centering
  \captionsetup[subfigure]{justification=raggedright,singlelinecheck=false}
  \begin{subfigure}[t]{0.31\linewidth}
    \centering
    \includegraphics[width=\linewidth]{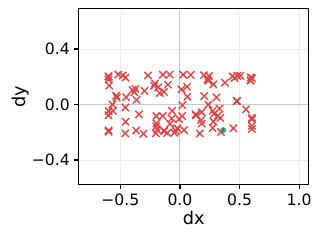}
    \caption{Quantile: native ranks.\newline 1\% success.}
    \label{fig:two-demo-obstacle-xy-native}
  \end{subfigure}\hfill
  \begin{subfigure}[t]{0.31\linewidth}
    \centering
    \includegraphics[width=\linewidth]{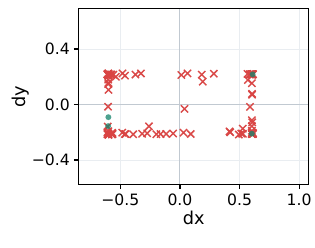}
    \caption{Quantile: independent ranks.\newline 4\% success.}
    \label{fig:two-demo-obstacle-xy-independent}
  \end{subfigure}\hfill
  \begin{subfigure}[t]{0.31\linewidth}
    \centering
    \includegraphics[width=\linewidth]{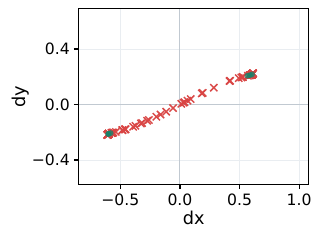}
    \caption{Quantile: shared ranks.\newline 6\% success.}
    \label{fig:two-demo-obstacle-xy-shared}
  \end{subfigure}

  \medskip
  \begin{subfigure}[t]{0.31\linewidth}
    \centering
    \includegraphics[width=\linewidth]{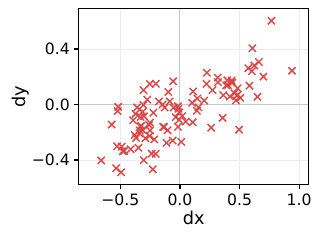}
    \caption{Flow Matching.\newline 0\% success.}
    \label{fig:two-demo-obstacle-xy-fm}
  \end{subfigure}\hfill
  \begin{subfigure}[t]{0.31\linewidth}
    \centering
    \includegraphics[width=\linewidth]{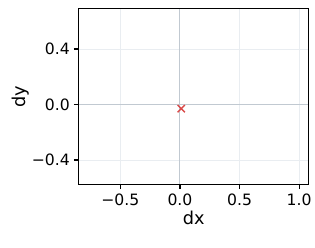}
    \caption{$L_1$.\newline 0\% success.}
    \label{fig:two-demo-obstacle-xy-l1}
  \end{subfigure}\hfill
  \begin{subfigure}[t]{0.31\linewidth}
    \centering
    \includegraphics[width=\linewidth]{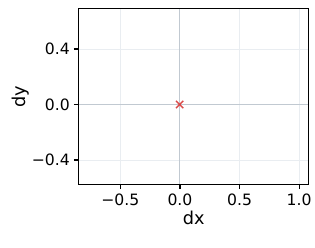}
    \caption{$L_2$.\newline 0\% success.}
    \label{fig:two-demo-obstacle-xy-l2}
  \end{subfigure}

  \medskip
  \begin{subfigure}[t]{0.31\linewidth}
    \centering
    \includegraphics[width=\linewidth]{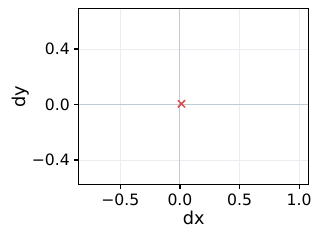}
    \caption{Quantile: median.\newline 0\% success.}
    \label{fig:two-demo-obstacle-xy-median}
  \end{subfigure}
  \caption{\textbf{Initial action commands and outcomes with a central obstacle.}
  Points illustrate one training run; subcaptions report success rates.
  Each point is one episode's first action:
  green circles indicate eventual success and red crosses indicate failure.
  All settings share the same obstacle-task initial observation.
  Axes match Figure~\ref{fig:two-demo-closed-loop-xy} for visual comparison;
  coordinates are environment action commands, not physical displacements.
  The 100 $L_1$ and $L_2$ points and the 100 median points each coincide exactly
  within their respective panels. No coordinate jitter is added.
  Colors describe whole episodes, rather than the correctness of an individual action.}
  \label{fig:two-demo-obstacle-xy}
\end{figure}

\textbf{Action coordinates and interpretation.}\quad
Figure~\ref{fig:two-demo-obstacle-xy} relates the first executed
$(d_x,d_y)$ command to each complete episode's outcome in one illustrated
training run, using the same
axes and colors as Figure~\ref{fig:two-demo-closed-loop-xy} for the earlier
experiment. These are action commands, not measured end-effector positions.
All episodes within each experiment share their initial observation.
The obstacle setting places the deterministic median, $L_1$, and $L_2$
outputs near the origin, with all their episodes failing. Shared-rank
sampling traces a narrow curve; successes occur near its two ends,
alongside failures. Independent ranks span a broader set of coordinate
combinations. This association does not make a first-action coordinate
a sufficient condition for success: other coordinates and later actions
also differ. Inspection of the illustrated run's successful trajectories confirms
that the end-effector first crosses the obstacle-center plane on the left
in three shared-rank episodes and on the right in three. This describes
the end-effector's first crossing of the plane; it does not imply that
every sampled route succeeds or remains free of contact throughout the episode.

\textbf{Effect of the shared sampling interval.}\quad
We next vary only the sampling interval of each trained Quantile
model, retaining the obstacle scene, task instruction, initial state,
and 600-action evaluation budget above. For a lower endpoint
$\ell\in\{0,0.1,0.2,0.3,0.4\}$, the shared motion rank in chunk $k$ is
\begin{equation}
  \tau_k=\ell+(1-2\ell)\Phi(z_{k,1}),
  \qquad \tau_{k,d}=\tau_k\quad(d=1,\ldots,6),
  \label{eq:obstacle-interval-sampling}
\end{equation}
where $z_{k,1}$ follows the Gaussian AR process in
\eqref{eq:two-demo-temporal-noise}. The rank is shared across all ten
actions in each chunk, and the gripper remains at rank $0.5$.
This affine CDF mapping gives a marginally uniform rank within
$[\ell,1-\ell]$, with temporal correlation across chunks; it uses neither
density weighting nor the native decoder's temperature factor $0.35$.
In particular, the shared $[0.1,0.9]$ setting differs from the native
decoder in both its temperature and its cross-coordinate rank sharing.
The sixth setting fixes every coordinate at its median, $\tau=0.5$.

Each setting uses an evaluation budget of 100 simulator episodes. Corresponding
stochastic evaluations use matched underlying Gaussian sequences. The
full-range shared and median results reuse the evaluations in
Table~\ref{tab:two-demo-obstacle-success}. No model is retrained when
changing the sampling interval.

\textbf{Success rates and action distributions.}\quad
For $[0,1]$, $[0.1,0.9]$, $[0.2,0.8]$, $[0.3,0.7]$, $[0.4,0.6]$,
and fixed $0.5$, respectively, success rates are
\textbf{6\%, 4\%, 15\%, 7\%, 3\%, and 0\%}.
The highest success rate is \textbf{15\%} for $[0.2,0.8]$,
9 percentage points above full-range shared sampling. Success does not
increase monotonically as the interval narrows.
Figure~\ref{fig:obstacle-quantile-intervals} plots the first executed
$(d_x,d_y)$ command from each episode in one illustrated training run
and labels it by the full episode's outcome. The narrowest stochastic window reduces coverage of
the demonstrated endpoints; median decoding produces one central
command, with all illustrated episodes failing. Successful episodes coexist
with nearby failed first commands, so these coordinates alone do not
determine task completion. These success rates describe performance
at one training initial state; they do not establish a universally optimal
interval or reliable obstacle avoidance.

\begin{figure}[!tp]
  \centering
  \definecolor{intervalSuccess}{HTML}{087F6C}
  \definecolor{intervalFailure}{HTML}{D84A49}
  \definecolor{intervalExpert}{HTML}{4269A3}
  \captionsetup[subfigure]{font=small,justification=centering,singlelinecheck=false,skip=3pt}
  {\small
  \tikz[baseline=-0.6ex]{\draw[intervalSuccess,line width=1pt] (0,0) circle (0.07);}\; Success
  \hspace{12pt}\tikz[baseline=-0.6ex]{\draw[intervalFailure,line width=1pt] (-.06,-.06)--(.06,.06) (-.06,.06)--(.06,-.06);}\; Failure
  \hspace{12pt}\tikz[baseline=-0.6ex]{\fill[intervalExpert] (0,.085)--(.085,0)--(0,-.085)--(-.085,0)--cycle;}\; Training demonstration}
  \par\vspace{5pt}
  \begin{subfigure}[t]{0.48\linewidth}
    \centering
    \includegraphics[width=\linewidth]{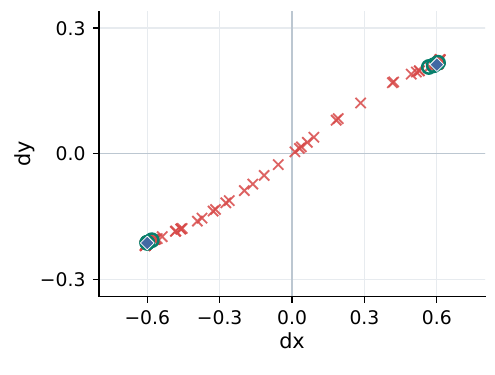}
    \caption{$\tau\in[0,1]$.\newline 6\% success.}
    \label{fig:obstacle-interval-full}
  \end{subfigure}\hfill
  \begin{subfigure}[t]{0.48\linewidth}
    \centering
    \includegraphics[width=\linewidth]{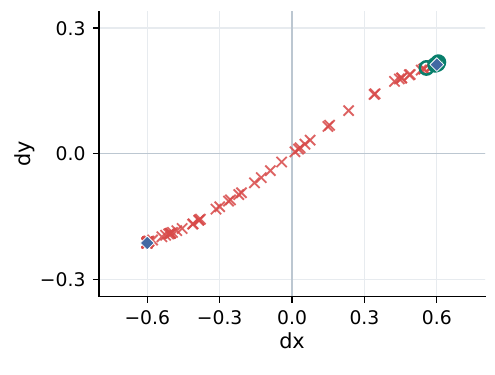}
    \caption{$\tau\in[0.1,0.9]$.\newline 4\% success.}
    \label{fig:obstacle-interval-01-09}
  \end{subfigure}
  \par\medskip
  \begin{subfigure}[t]{0.48\linewidth}
    \centering
    \includegraphics[width=\linewidth]{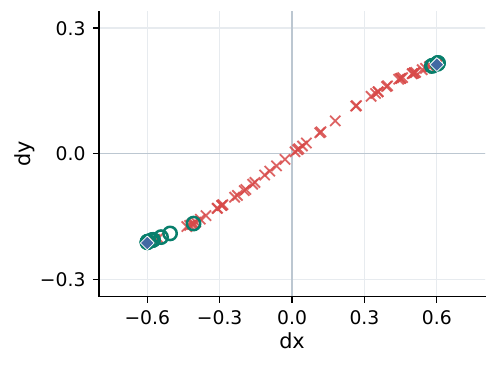}
    \caption{$\tau\in[0.2,0.8]$.\newline 15\% success.}
    \label{fig:obstacle-interval-02-08}
  \end{subfigure}\hfill
  \begin{subfigure}[t]{0.48\linewidth}
    \centering
    \includegraphics[width=\linewidth]{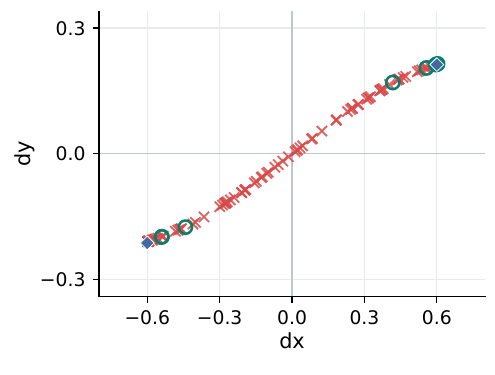}
    \caption{$\tau\in[0.3,0.7]$.\newline 7\% success.}
    \label{fig:obstacle-interval-03-07}
  \end{subfigure}
  \par\medskip
  \begin{subfigure}[t]{0.48\linewidth}
    \centering
    \includegraphics[width=\linewidth]{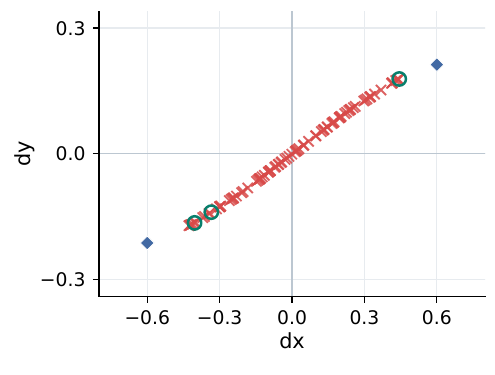}
    \caption{$\tau\in[0.4,0.6]$.\newline 3\% success.}
    \label{fig:obstacle-interval-04-06}
  \end{subfigure}\hfill
  \begin{subfigure}[t]{0.48\linewidth}
    \centering
    \includegraphics[width=\linewidth]{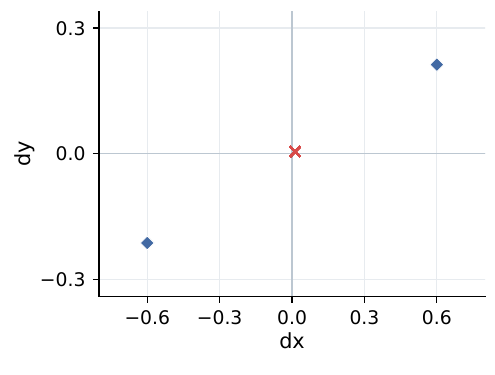}
    \caption{$\tau=0.5$ (median).\newline 0\% success.\newline 100 identical points overlap.}
    \label{fig:obstacle-interval-median}
  \end{subfigure}
  \caption{\textbf{Shared quantile sampling intervals and obstacle-task outcomes.}
  Each panel illustrates 100 episodes from one training run at the same
  initial observation; subcaptions report success rates.
  Each point is the first executed action command:
  green circles indicate eventual episode success, red crosses indicate
  failure, and blue diamonds mark the two training demonstrations' first
  actions. The six motion coordinates share one rank; the gripper uses
  its median. All panels use identical axes in environment action-command
  units, with no jitter or removal of coincident points. The deterministic
  median repeats are not independent task instances.}
  \label{fig:obstacle-quantile-intervals}
\end{figure}

\begin{table}[!t]
  \centering
  \caption{\textbf{Density-based decoding in the obstacle setting.}
  All variants use the same trained Quantile models.}
  \label{tab:obstacle-density-variants}
  \small
  \begin{tabular}{lllr}
    \toprule
    Selection & Scored coordinates & Gripper rank & Success (\%) \\
    \midrule
    Random, squared density & Six motion & Median & 13 \\
    Random, linear density & All seven & Shared sampled rank & 14 \\
    Maximum mean density & All seven & Shared selected rank & 0 \\
    \bottomrule
  \end{tabular}
\end{table}

\begin{table}[!t]
  \centering
  \caption{\textbf{Two-demonstration task success rates (\%).}
  Evaluation uses each setting's fixed training initial state.
  Uniform shared ranks cover $[0,1]$;
  density weighting uses six motion coordinates with exponent two.
  Deterministic repeats are not
  independent task instances.}
  \label{tab:two-demo-main-success}
  \small
  \begin{tabular}{lrr}
    \toprule
    Decoder & No obstacle & Obstacle \\
    \midrule
    Quantile: uniform shared ranks & 92 & 6 \\
    Quantile: density-weighted shared ranks & 95 & \textbf{13} \\
    Quantile: median & \textbf{100} & 0 \\
    Flow Matching (10 integration steps) & 3 & 0 \\
    $L_1$ & \textbf{100} & 0 \\
    $L_2$ & \textbf{100} & 0 \\
    \bottomrule
  \end{tabular}
\end{table}

\textbf{Density-weighted shared-rank sampling.}\quad
\label{sec:obstacle-density-sampling}
We further evaluate a density-weighted decoder using the same obstacle
Quantile models, without retraining. For each model at a new observation,
let $Q_{h,d,j}$ be its predicted quantile at learned level $\tau_j$, in the
model's normalized action coordinates. For each interval between learned
levels, we compute
\begin{equation}
  s_j=\frac{1}{60}\sum_{h=0}^{9}\sum_{d=1}^{6}
  \frac{\tau_{j+1}-\tau_j}
       {\max(Q_{h,d,j+1}-Q_{h,d,j},10^{-6})}.
  \label{eq:obstacle-density-score}
\end{equation}
The 21 learned levels span $[0.025,0.975]$. Adding the two outer
intervals gives 22 intervals covering $[0,1]$; each outer interval
copies the adjacent learned interval's density, consistently with
linear tail extrapolation. For interval $I_j$ of width $\Delta_j$,
the normalized sampling weight is
\begin{equation}
  p_j=\frac{\Delta_j s_j^2}{\sum_k\Delta_k s_k^2}.
  \label{eq:obstacle-density-probability}
\end{equation}
We draw a continuous rank by inverting the resulting piecewise-linear
CDF with $U_k=\Phi(z_{k,1})$, retaining the AR coefficient 0.9 in
\eqref{eq:two-demo-temporal-noise}. The sampled rank is shared across
all six motion coordinates and all ten steps of the chunk; the gripper
is excluded from scoring and decoded at rank 0.5. Scores are recomputed
after each new observation. This is a state-dependent proposal based
on average marginal densities, not an estimate of joint action likelihood.
Soft weighting does not exclude low-density intervals.

The trained models, initial state, instruction, ten-action execution
chunks, 600-action limit, and evaluation budget match the full-range
shared-rank baseline. The success rate is \textbf{13\%}, versus
\textbf{6\%} for uniform shared ranks,
a 7-percentage-point difference at this fixed initial state. The low
absolute success rate does not establish reliable obstacle avoidance.
Applying the same decoding rule to the no-obstacle models gives 95\%
success; see Appendix~\ref{sec:two-demo-distributions} for that
supplemental evaluation, which retains its original 400-action limit.

For completeness, an earlier random decoder averages density over all
seven coordinates, including the gripper, uses linear weights
$p_j\propto\Delta_j s_j$, and applies the selected rank to all seven
coordinates. It achieves \textbf{14\%} success. A separate
deterministic diagnostic selects the midpoint of the learned interval
with the highest seven-coordinate mean density, again using that rank
for all coordinates; its success rate is \textbf{0\%}. These variants
are summarized in Table~\ref{tab:obstacle-density-variants}.
The six- and seven-coordinate random variants differ in the scoring
dimensions, weighting exponent, and gripper decoding, so their
one-percentage-point difference does not isolate any one of these changes.

Table~\ref{tab:two-demo-main-success} summarizes the two settings across
action heads and the three Quantile decoding strategies shown in
Figure~\ref{fig:four-head-dx-distribution}.

\begin{thebibliography}{35}
\providecommand{\natexlab}[1]{#1}
\providecommand{\url}[1]{\texttt{#1}}
\expandafter\ifx\csname urlstyle\endcsname\relax
  \providecommand{\doi}[1]{doi: #1}\else
  \providecommand{\doi}{doi: \begingroup \urlstyle{rm}\Url}\fi

\bibitem[Black et~al.(2024)Black, Brown, Driess, Esmail, Equi, Finn, Fusai,
  Groom, Hausman, Ichter, et~al.]{black2024pi_0}
Kevin Black, Noah Brown, Danny Driess, Adnan Esmail, Michael Equi, Chelsea
  Finn, Niccolo Fusai, Lachy Groom, Karol Hausman, Brian Ichter, et~al.
\newblock $\pi_0$: A vision-language-action flow model for general robot
  control.
\newblock \emph{arXiv preprint arXiv:2410.24164}, 2024.

\bibitem[Black et~al.(2025)Black, Brown, Darpinian, et~al.]{pmlr-v305-black25a}
Kevin Black, Noah Brown, James Darpinian, et~al.
\newblock $\pi_{0.5}$: a vision-language-action model with open-world
  generalization.
\newblock In Joseph Lim, Shuran Song, and Hae-Won Park (eds.),
  \emph{Proceedings of The 9th Conference on Robot Learning}, volume 305 of
  \emph{Proceedings of Machine Learning Research}, pp.\  17--40. PMLR, 27--30
  Sep 2025.

\bibitem[Brohan et~al.(2023)Brohan, Brown, Carbajal, Chebotar, Chen,
  Choromanski, Ding, Driess, Dubey, Finn, et~al.]{brohan2023rt}
Anthony Brohan, Noah Brown, Justice Carbajal, Yevgen Chebotar, Xi~Chen,
  Krzysztof Choromanski, Tianli Ding, Danny Driess, Avinava Dubey, Chelsea
  Finn, et~al.
\newblock Rt-2: Vision-language-action models transfer web knowledge to robotic
  control.
\newblock \emph{arXiv preprint arXiv:2307.15818}, 2023.

\bibitem[Chen et~al.(2026{\natexlab{a}})Chen, Lyu, and Beksi]{chen2026reconvla}
Lingling Chen, Zongyao Lyu, and William~J. Beksi.
\newblock {ReconVLA}: An uncertainty-guided and failure-aware
  vision-language-action framework for robotic control.
\newblock \emph{arXiv preprint arXiv:2604.16677}, 2026{\natexlab{a}}.

\bibitem[Chen et~al.(2026{\natexlab{b}})Chen, Zhang, Gong, and
  Qiu]{chen2026simple}
Yitong Chen, Shiduo Zhang, Jingjing Gong, and Xipeng Qiu.
\newblock Let it be simple: One-step action generation for
  vision-language-action models.
\newblock \emph{arXiv preprint arXiv:2606.05737}, 2026{\natexlab{b}}.

\bibitem[Chi et~al.(2025)Chi, Xu, Feng, Cousineau, Du, Burchfiel, Tedrake, and
  Song]{chi2025diffusion}
Cheng Chi, Zhenjia Xu, Siyuan Feng, Eric Cousineau, Yilun Du, Benjamin
  Burchfiel, Russ Tedrake, and Shuran Song.
\newblock Diffusion policy: Visuomotor policy learning via action diffusion.
\newblock \emph{The International Journal of Robotics Research}, 44\penalty0
  (10-11):\penalty0 1684--1704, 2025.

\bibitem[Fei et~al.(2026)Fei, Wang, Shi, Dai, Cai, Qian, Ji, He, Zhang, Fei,
  et~al.]{fei2026libero}
Senyu Fei, Siyin Wang, Junhao Shi, Zihao Dai, Jikun Cai, Pengfang Qian, Li~Ji,
  Xinzhe He, Shiduo Zhang, Zhaoye Fei, et~al.
\newblock Libero-plus: A progressive robustness benchmark for
  visual-language-action models.
\newblock In \emph{Proceedings of the IEEE/CVF Conference on Computer Vision
  and Pattern Recognition}, pp.\  38574--38583, 2026.

\bibitem[Gneiting \& Raftery(2007)Gneiting and Raftery]{gneiting2007strictly}
Tilmann Gneiting and Adrian~E Raftery.
\newblock Strictly proper scoring rules, prediction, and estimation.
\newblock \emph{Journal of the American statistical Association}, 102\penalty0
  (477):\penalty0 359--378, 2007.

\bibitem[Han et~al.(2026)Han, Jeon, Jung, Zurbr{\"u}gg, An, Portela, Hutter,
  Pollefeys, Kim, and Hong]{han2026geometric}
Jisang Han, Seonghu Jeon, Jaewoo Jung, Ren{\'e} Zurbr{\"u}gg, Honggyu An,
  Tifanny Portela, Marco Hutter, Marc Pollefeys, Seungryong Kim, and Sunghwan
  Hong.
\newblock Geometric action model for robot policy learning.
\newblock \emph{arXiv preprint arXiv:2606.17046}, 2026.

\bibitem[Islam et~al.(2026)Islam, Peddapalli, Lee, and Ahn]{islam2026sureflow}
Md~Tanvir Islam, Sai~Navaneet Peddapalli, Sangmoon Lee, and Sangtae Ahn.
\newblock {SUREFlow}: State-space uncertainty-aware {RE}sidual flow matching
  for robust robot manipulation.
\newblock \emph{arXiv preprint arXiv:2607.10504}, 2026.

\bibitem[Kim et~al.(2024)Kim, Pertsch, Karamcheti, Xiao, Balakrishna, Nair,
  Rafailov, Foster, Sanketi, Vuong, Kollar, Burchfiel, Tedrake, Sadigh, Levine,
  Liang, and Finn]{kim2024openvla}
Moo~Jin Kim, Karl Pertsch, Siddharth Karamcheti, Ted Xiao, Ashwin Balakrishna,
  Suraj Nair, Rafael Rafailov, Ethan~P Foster, Pannag~R Sanketi, Quan Vuong,
  Thomas Kollar, Benjamin Burchfiel, Russ Tedrake, Dorsa Sadigh, Sergey Levine,
  Percy Liang, and Chelsea Finn.
\newblock Open{VLA}: An open-source vision-language-action model.
\newblock In \emph{8th Annual Conference on Robot Learning}, 2024.

\bibitem[Kim et~al.(2025)Kim, Finn, and Liang]{kim2025fine}
Moo~Jin Kim, Chelsea Finn, and Percy Liang.
\newblock Fine-tuning vision-language-action models: Optimizing speed and
  success.
\newblock \emph{arXiv preprint arXiv:2502.19645}, 2025.

\bibitem[Koenker \& Bassett~Jr(1978)Koenker and
  Bassett~Jr]{koenker1978regression}
Roger Koenker and Gilbert Bassett~Jr.
\newblock Regression quantiles.
\newblock \emph{Econometrica: journal of the Econometric Society}, pp.\
  33--50, 1978.

\bibitem[Lee et~al.(2025)Lee, Duan, Fang, Deng, Liu, Li, Fang, Zhang, Wang,
  Lee, et~al.]{lee2025molmoact}
Jason Lee, Jiafei Duan, Haoquan Fang, Yuquan Deng, Shuo Liu, Boyang Li, Bohan
  Fang, Jieyu Zhang, Yi~Ru Wang, Sangho Lee, et~al.
\newblock Molmoact: Action reasoning models that can reason in space.
\newblock \emph{arXiv preprint arXiv:2508.07917}, 2025.

\bibitem[Lipman et~al.(2023)Lipman, Chen, Ben-Hamu, Nickel, and
  Le]{lipman2023flow}
Yaron Lipman, Ricky T.~Q. Chen, Heli Ben-Hamu, Maximilian Nickel, and Matthew
  Le.
\newblock Flow matching for generative modeling.
\newblock In \emph{The Eleventh International Conference on Learning
  Representations}, 2023.

\bibitem[Liu et~al.(2023)Liu, Zhu, Gao, Feng, Liu, Zhu, and
  Stone]{liu2023libero}
Bo~Liu, Yifeng Zhu, Chongkai Gao, Yihao Feng, Qiang Liu, Yuke Zhu, and Peter
  Stone.
\newblock Libero: Benchmarking knowledge transfer for lifelong robot learning.
\newblock \emph{Advances in Neural Information Processing Systems},
  36:\penalty0 44776--44791, 2023.

\bibitem[Liu et~al.(2026)Liu, Zhao, Yang, Chen, Zhou, Chen, Ren, Peng, Jin,
  Wang, et~al.]{liu2026ge}
Renhang Liu, Wenzhi Zhao, Zhuo Yang, Liliang Chen, Pengfei Zhou, Shengcong
  Chen, Guanghui Ren, Youlun Peng, Rongjun Jin, Nan Wang, et~al.
\newblock Ge-act 2.0: Pretraining and scaling a world-action model for robotic
  manipulation.
\newblock \emph{arXiv preprint arXiv:2609.05588}, 2026.

\bibitem[Luan et~al.(2026)Luan, Li, Zhao, Zhang, Wu, and Ma]{luan2026snapflow}
Wuyang Luan, Junhui Li, Weiguang Zhao, Wenjian Zhang, Tieru Wu, and Rui Ma.
\newblock Snapflow: One-step action generation for flow-matching vlas via
  progressive self-distillation.
\newblock \emph{arXiv preprint arXiv:2604.05656}, 2026.

\bibitem[Luo et~al.(2026)Luo, Chen, Liang, Wang, and Li]{luo2026simvla}
Yuankai Luo, Woping Chen, Tong Liang, Baiqiao Wang, and Zhenguo Li.
\newblock Simvla: A simple vla baseline for robotic manipulation.
\newblock \emph{arXiv preprint arXiv:2602.18224}, 2026.

\bibitem[Ma et~al.(2026)Ma, Cai, Xu, Li, Yang, Tian, Cao, Zhu, Qiu, Yang,
  et~al.]{ma2026internvla}
Haoxiang Ma, Junhao Cai, Xiaoxu Xu, Hao Li, Yuyin Yang, Yang Tian, Jiafei Cao,
  Hongrui Zhu, Zherui Qiu, Yuqiang Yang, et~al.
\newblock {InternVLA-A1.5}: Unifying understanding, latent foresight, and
  action for compositional generalization.
\newblock \emph{arXiv preprint arXiv:2607.04988}, 2026.

\bibitem[Narayan et~al.(2024)Narayan, Wang, Canini, and
  Gupta]{narayan2024expected}
Taman Narayan, Serena~Lutong Wang, Kevin~Robert Canini, and Maya Gupta.
\newblock Expected pinball loss for quantile regression and inverse cdf
  estimation.
\newblock \emph{Transactions on Machine Learning Research}, 2024.

\bibitem[Pertsch et~al.(2025)Pertsch, Stachowicz, Ichter, Driess, Nair, Vuong,
  Mees, Finn, and Levine]{pertsch2025fast}
Karl Pertsch, Kyle Stachowicz, Brian Ichter, Danny Driess, Suraj Nair, Quan
  Vuong, Oier Mees, Chelsea Finn, and Sergey Levine.
\newblock Fast: Efficient action tokenization for vision-language-action
  models.
\newblock \emph{arXiv preprint arXiv:2501.09747}, 2025.

\bibitem[Richter \& Wattenhofer(2019)Richter and
  Wattenhofer]{richter2019learning}
Oliver Richter and Roger Wattenhofer.
\newblock Learning policies through quantile regression.
\newblock \emph{arXiv preprint arXiv:1906.11941}, 2019.

\bibitem[Shi et~al.(2026{\natexlab{a}})Shi, Li, Xie, Wang, Zhou, Wang, Zhang,
  Luo, and Huang]{shi2026memoryvla++}
Hao Shi, Weiye Li, Bin Xie, Yulin Wang, Renping Zhou, Tiancai Wang, Xiangyu
  Zhang, Ping Luo, and Gao Huang.
\newblock {MemoryVLA++}: Temporal modeling via memory and imagination in
  vision-language-action models.
\newblock \emph{arXiv preprint arXiv:2606.09827}, 2026{\natexlab{a}}.

\bibitem[Shi et~al.(2026{\natexlab{b}})Shi, Xie, Liu, Sun, Liu, Wang, Zhou,
  Fan, Zhang, and Huang]{shi2026memoryvla}
Hao Shi, Bin Xie, Yingfei Liu, Lin Sun, Fengrong Liu, Tiancai Wang, Erjin Zhou,
  Haoqiang Fan, Xiangyu Zhang, and Gao Huang.
\newblock Memoryvla: Perceptual-cognitive memory in vision-language-action
  models for robotic manipulation.
\newblock In \emph{International Conference on Learning Representations},
  volume 2026, pp.\  18567--18602, 2026{\natexlab{b}}.

\bibitem[Wang et~al.(2026)Wang, Ding, Li, Cui, Ge, Tong, Song, Zhao, Zhao, Hou,
  et~al.]{wang2026vla}
Yihao Wang, Pengxiang Ding, Lingxiao Li, Can Cui, Zirui Ge, Xinyang Tong,
  Wenxuan Song, Han Zhao, Wei Zhao, Pengxu Hou, et~al.
\newblock Vla-adapter: An effective paradigm for tiny-scale
  vision-language-action model.
\newblock In \emph{Proceedings of the AAAI conference on artificial
  intelligence}, volume~40, pp.\  18638--18646, 2026.

\bibitem[Wu et~al.(2026{\natexlab{a}})Wu, Fan, Liao, Jiang, Yang, Luo, Wu,
  Zheng, and Loy]{wu2026vlanext}
Xiao-Ming Wu, Bin Fan, Kang Liao, Jian-Jian Jiang, Runze Yang, Yihang Luo,
  Zhonghua Wu, Wei-Shi Zheng, and Chen~Change Loy.
\newblock Vlanext: Recipes for building strong vla models.
\newblock \emph{arXiv preprint arXiv:2602.18532}, 2026{\natexlab{a}}.

\bibitem[Wu et~al.(2026{\natexlab{b}})Wu, Matsushima, and
  Ota]{wu2026continuous}
Yueh-Hua Wu, Tatsuya Matsushima, and Kei Ota.
\newblock Continuous reasoning for vision-language-action.
\newblock \emph{arXiv preprint arXiv:2606.00229}, 2026{\natexlab{b}}.

\bibitem[Yang et~al.(2026)Yang, Liu, Kou, Chen, Hu, Zhou, Zhao, Wei, Xia, Li,
  et~al.]{yang2026world}
Yi~Yang, Zhihong Liu, Siqi Kou, Yiyang Chen, Yanzhe Hu, Jianbo Zhou, Boyuan
  Zhao, Zhijie Wei, Xiao Xia, Xueqi Li, et~al.
\newblock World-language-action model for unified world modeling, language
  reasoning, and action synthesis.
\newblock \emph{arXiv preprint arXiv:2606.05979}, 2026.

\bibitem[Yu et~al.(2026)Yu, Tao, Leimgruber, Esterl, Stiasny, Bunn, Wen, Guo,
  and Cremer]{yu2026orderfusion}
Runyao Yu, Yuchen Tao, Fabian Leimgruber, Tara Esterl, Jochen Stiasny, Derek~W
  Bunn, Qingsong Wen, Hongye Guo, and Jochen~L Cremer.
\newblock Orderfusion: Encoding orderbook for end-to-end probabilistic intraday
  electricity price forecasting.
\newblock \emph{Advanced Engineering Informatics}, 76:\penalty0 105131, 2026.

\bibitem[Zhang et~al.(2026)Zhang, Wu, Duan, and Han]{zhang2026cofactvla}
Yan Zhang, Yinan Wu, Haoran Duan, and Jungong Han.
\newblock {CofactVLA}: Deconfounding vision-language-action models via
  counterfactual intervention.
\newblock \emph{arXiv preprint arXiv:2608.04396}, 2026.

\bibitem[Zheng et~al.(2026)Zheng, Li, Wang, Liu, Kang, Feng, Zheng, Zou, Chen,
  Zeng, et~al.]{zheng2026x}
Jinliang Zheng, Jianxiong Li, Zhihao Wang, Dongxiu Liu, Xirui Kang, Yuchun
  Feng, Yinan Zheng, Jiayin Zou, Yilun Chen, Jia Zeng, et~al.
\newblock X-vla: Soft-prompted transformer as scalable cross-embodiment
  vision-language-action model.
\newblock In \emph{International Conference on Learning Representations},
  volume 2026, pp.\  60580--60606, 2026.

\bibitem[Zhong et~al.(2026)Zhong, Liu, Wei, Xiong, Liu, and Ren]{zhong2026acot}
Linqing Zhong, Yi~Liu, Yifei Wei, Ziyu Xiong, Si~Liu, and Guanghui Ren.
\newblock Acot-vla: Action chain-of-thought for vision-language-action models.
\newblock In \emph{Proceedings of the IEEE/CVF Conference on Computer Vision
  and Pattern Recognition}, pp.\  8152--8162, 2026.

\bibitem[Zhou et~al.(2025)Zhou, Xu, Tie, Chen, Zhang, Chu, Zhou, and
  Sun]{zhou2025libero}
Xueyang Zhou, Yangming Xu, Guiyao Tie, Yongchao Chen, Guowen Zhang, Duanfeng
  Chu, Pan Zhou, and Lichao Sun.
\newblock Libero-pro: Towards robust and fair evaluation of
  vision-language-action models beyond memorization.
\newblock \emph{arXiv preprint arXiv:2510.03827}, 2025.

\bibitem[Zou \& Yuan(2008)Zou and Yuan]{zou2008composite}
Hui Zou and Ming Yuan.
\newblock Composite quantile regression and the oracle model selection theory.
\newblock \emph{The Annals of Statistics}, 36\penalty0 (3):\penalty0
  1108--1126, 2008.

\end{thebibliography}
\end{document}